\definecolor{HarvestOrange}{HTML}{C9653C}
\definecolor{HarvestBrown}{HTML}{786662}
\definecolor{HarvestGreen}{HTML}{475A50}
\definecolor{ChocolateBrown}{HTML}{765457}
\definecolor{AppleLeafGreen}{HTML}{5B8258}
    \let\Cref\crtCref
    \let\cref\crtcref
\pgfplotsset{compat=newest}
\theoremstyle{plain}
\newtheorem{theorem}{Theorem}[section]
\newtheorem{lemma}{Lemma}[section]
\newtheorem{proposition}[lemma]{Proposition}
\theoremstyle{definition}
\newtheorem{definition}{Definition}[section]
\newtheorem{example}{Example}[section]
\newtheorem{assumption}{Assumption}[section]
\theoremstyle{remark}
\crefname{assumption}{assumption}{assumptions}
\Crefname{assumption}{Assumption}{Assumptions}
\newcommand{\RomanNum}[1]{\uppercase\expandafter{\romannumeral #1}}
\definecolor{mapleSyrup}{HTML}{C04000}
\newcommand\onlineOptimum{V^{\rm ON}}
\newcommand\fluidOptimum{V^{\rm FL}}
\newcommand\hybridOptimum{V^{\rm Hyb}}
\newcommand\history{\mathcal H}
\newcommand\accumulatedReward{Rew}
\newcommand\bbE{\mathbb E}
\newcommand\bbR{\mathbb R}
\newcommand\bmtheta{\bm \theta}
\newcommand\bmgamma{\bm \gamma}
\newcommand\bmrho{\bm \rho}
\newcommand\bmB{\bm B}
\newcommand\bmzero{\bm 0}
\newcommand\bmone{\bm 1}
\newcommand\bmC{\bm C}
\newcommand\bmr{\bm r}
\newcommand\bmc{\bm c}
\newcommand\bmx{\bm x}
\newcommand\bmlambda{\bm \lambda}
\newcommand\bmmu{\bm \mu}
\newcommand\estR{\widehat{R}}
\newcommand\estbmC{\widehat{\bm C}}
\newcommand\rhomax{\rho^{\max}}
\newcommand\rhomin{\rho^{\min}}
\newcommand\bmkappa{\bm \kappa}
\newcommand\bmomega{\bm \omega}
\newcommand\bmu{\bm u}
\newcommand\estphi{\widehat{\phi}}
\newcommand\estJ{\widehat{J}}
\newcommand\estU{\widehat{\mathcal U}}
\newcommand\estV{\widehat{\mathcal V}}
\newcommand\estu{\widehat{u}}
\newcommand\estv{\widehat{v}}
\newcommand\estp{\widehat{p}}
\newcommand\powerp{{\alpha_p}}
\newcommand\poweru{{\alpha_u}}
\newcommand\powerv{{\alpha_v}}
\newcommand\calS{\mathcal S}
\newcommand\calT{\mathcal T}
\newcommand\calN{\mathcal N}
\newcommand\calE{\mathcal E}
\newcommand\calU{\mathcal U}
\newcommand\calV{\mathcal V}
\newcommand\calI{\mathcal I}
\newcommand\calP{\mathcal P}
\newcommand\calD{\mathcal D}
\newcommand\estimationErrorConsumptionRound{{\bm M}^C}
\newcommand\expectationDifferenceConsumptionRound{{\bm N}^C}
\newcommand\auxEstimationErrorConsumptionRound{\widetilde{\bm M}^C}
\newcommand\auxExpectationDifferenceConsumptionRound{\widetilde{\bm N}^C}
\newcommand\diffe{{\,\rm d}}
\title{Contextual Decision-Making with Knapsacks \\ Beyond the Worst Case}
\author{%
  Zhaohua Chen \\
  School of Computer Science \\
  Peking University \\
  Haidian, Beijing, China \\
  \texttt{chenzhaohua@pku.edu.cn} \\
  \And
  Rui Ai \\
  IDSS \& LIDS \\
  Massachusetts Institute of Technology \\
  Cambridge, MA 02139, USA \\
  \texttt{ruiai@mit.edu} \\
  \And
  Mingwei Yang \\
  Dept. of Management Science and Engineering \\
  Stanford University \\
  Stanford, CA 94305, USA \\
  \texttt{mwyang@stanford.edu} \\
  \And
  Yuqi Pan \\
  School of Engineering and Applied Sciences \\
  Harvard University \\
  Cambridge, MA 02138, USA \\
  \texttt{yuqipan@g.harvard.edu} \\
  \And
  Chang Wang \\
  Dept. of Computer Science \\
  Northwestern University \\
  Evanston, IL 60208, USA \\
  \texttt{wc@u.northwestern.edu} \\
  \And
  Xiaotie Deng \\
  School of Computer Science \\
  Institute for Artificial Intelligence \\
  Peking University \\
  Haidian, Beijing, China \\
  \texttt{xiaotie@pku.edu.cn} \\
}
\begin{document}

\maketitle

\begin{abstract}
We study the framework of a dynamic decision-making scenario with resource constraints.
In this framework, an agent, whose target is to maximize the total reward under the initial inventory, selects an action in each round upon observing a random request, leading to a reward and resource consumptions that are further associated with an unknown random external factor.
While previous research has already established an $\widetilde{O}(\sqrt{T})$ worst-case regret for this problem, this work offers two results that go beyond the worst-case perspective: one for the worst-case gap between benchmarks and another for logarithmic regret rates.
We first show that an $\Omega(\sqrt{T})$ distance between the commonly used fluid benchmark and the online optimum is unavoidable when the former has a degenerate optimal solution.
On the algorithmic side, we merge the re-solving heuristic with distribution estimation skills and propose an algorithm that achieves an $\widetilde{O}(1)$ regret as long as the fluid LP has a unique and non-degenerate solution.
Furthermore, we prove that our algorithm maintains a near-optimal $\widetilde{O}(\sqrt{T})$ regret even in the worst cases and extend these results to the setting where the request and external factor are continuous.
Regarding information structure, our regret results are obtained under two feedback models, respectively, where the algorithm accesses the external factor at the end of each round and at the end of a round only when a non-null action is executed.
\end{abstract}

\section{Introduction}

In online contextual decision-making problems with knapsack constraints (CDMK for short), an agent is required to make sequential decisions over a finite time horizon to maximize the accumulated reward under initial resource constraints~\citep{castiglioni2022online,celli2022best}.
To be more specific, in each round $t = 1, \dotsc, T$, a request $\theta_t$ and an external factor $\gamma_t$ are independently generated from two distributions, and only $\theta_t$ is revealed to the agent.
Based on the request, the agent should irrevocably choose an action $a_t$, which results in a reward $r(\theta_t, a_t, \gamma_t)$ and a consumption vector $\bmc(\theta_t, a_t, \gamma_t)$ of resources.
The agent's target is to optimize the sum of rewards $\sum_{t = 1}^{T} r(\theta_t, a_t, \gamma_t)$ before the resources are depleted.

The contextual decision-making with knapsacks problem presents two key challenges when compared to closely related problems (e.g., the network revenue management problem):
(1) choices are made without observing the \emph{external factor}, and (2) distributions of requests and external factors are \emph{unknown}.
However, the complexity of CDMK makes it a suitable mathematical abstraction for many real-life scenarios, and there are extensive application scenarios with this kind of information structure. We use the following examples as illustrations and motivation.
\begin{example}[Supply chain management]\label{example:supply}
    In supply chain management, a factory needs to allocate among $T$ repositories consecutively and is constrained by the total inventory.
    Given the request $\theta_t$ for each repository, the factory chooses the number of goods transported to it as action $a_t$.
    However, the factory has randomized transportation costs for different locations and traffic conditions denoted by an external factor $\gamma_t$ for the $t$-th repository, which finally influences rewards.
    The factory needs to form an optimal scheme to allocate its goods under uncertainty to all these repositories. 
\end{example}

\begin{example}[Dynamic bidding in repeated auctions with budgets~\citep{balseiro2021survey,balseiro2019learning}]\label{example:bid}
    In this circumstance, an advertiser acquires the value of the ad slot $\theta_t$ at the start of each auction and chooses a bid $a_t$ accordingly.
    The agent's gain in this auction, as a consequence, is collaboratively determined by the value, the bid, and the highest competing bid $\gamma_t$, and has a form of $\theta_t\mathds{1}(a_t>\gamma_t)$. Additionally, the payment is $a_t\mathds{1}(a_t>\gamma_t)$ for the first-price auction and $\gamma_t\mathds{1}(a_t>\gamma_t)$ for the second-price auction, respectively.
    It is to be noted that the highest competing bid is inaccessible to the agent before committing to the bid, as all advertisers bid simultaneously.
    Meanwhile, its distribution is decided by other advertisers, which is also unknown to the agent before the auctions.
\end{example}

The CDMK model can also capture other well-studied problems, including multi-secretary, online linear programming, online matching, etc., as discussed in~\citet{balseiro2021survey}.
Previous studies of the CDMK problem have shown that the worst-case regret is $\widetilde{O}(\sqrt{T})$ when the initial resources are linearly proportional to the horizon length $T$~\citep{slivkins2022efficient,han2022optimal}. 
However, it is still unclear whether we can achieve a better regret guarantee for the CDMK problem beyond worst-case scenarios.
In particular, can we design algorithms to obtain an $o(\sqrt{T})$ regret only under mild assumptions that hold for almost all possible CDMK instances?
Meanwhile, can these algorithms still obtain good regret guarantees even in the worst cases? 
At last, previous works would adopt specific benchmarks to measure the regret of algorithms, but how are these benchmarks close to the rewards that the optimal online algorithm can achieve?
This work widely addresses these questions.

\subsection{Our Contributions}

This work makes three main contributions, summarized as follows.

\paragraph{The fluid optimum can be ${\Omega}(\sqrt{T})$ away from the online optimum.}
Since the online optimum is hard to characterize, previous works always use an alternative benchmark to measure the performance of any online algorithm, and the fluid optimum (also known as the deterministic LP) is a common choice~\cite{han2022optimal,sivakumar2022smoothed}.
However, we demonstrate that when the fluid benchmark has a unique and degenerate solution, then an $\Omega(\sqrt{T})$ gap is unavoidable between these two optima (cf. \Cref{prop:sufficient-condition-worst-case}). 
While \citet{han2022optimal} has also provided a similar lower bound result for the related contextual bandits with knapsacks (CBwK) problem, their condition depends on the inseparability of the possible expected reward/consumption function set. 
In other words, their condition may not perform well when this feasible set is small. 
Furthermore, their condition is rather complicated to verify. 
In contrast, our condition only depends on the underlying problem instance and is concise and easy to check. 
The proof of our result extends the approach of \citet{vera2021bayesian} to the CDMK problem.

\paragraph{An $\widetilde{O}(1)$ regret via re-solving under mild assumptions with full/partial information feedback.}
Since an $\widetilde{O}(\sqrt{T})$ worst-case regret is already known~\cite{slivkins2022efficient}, we investigate how well an online algorithm can perform beyond worst cases by applying the re-solving heuristic in conjunction with distribution estimation techniques, as given in \Cref{alg:re-solving-EE}. 
This method has been considered in the problems of network revenue management (NRM) and bandits with knapsacks (BwK). 
(See \Cref{sec:literature} for a literature review.) 
However, to our knowledge, we are the first to extend this method to the CDMK problem, which poses new challenges as decisions should be made according to the request. 
To avoid worst cases, we explicitly suppose that the fluid problem has a unique and non-degenerate solution (cf. \Cref{assump:unique-non-degenerate-LP}). 
This assumption is mild in three aspects: 
(1) it captures almost all CDMK problem instances, as slightly perturbing any LP can help it satisfy the unique optimality and non-degeneracy conditions; 
(2) it is less restrictive than the assumptions given in \citet{sankararaman2021bandits}, which require that there are at most two resources; and 
(3) it is almost necessary for an $o(\sqrt{T})$ regret bound to be established by \Cref{prop:sufficient-condition-worst-case}, when using the fluid optimum as the benchmark. 
Under the assumption, our main results show that the re-solving heuristic reaches an $O(1)$ regret with full information (cf. \Cref{thm:performance-re-solving-EE-full-info}) and an $O(\log T)$ regret with partial information (cf. \Cref{thm:performance-re-solving-EE-partial-info}).
To our knowledge, these are the first $\widetilde{O}(1)$ regret results in the CDMK problem beyond the worst case with only mild assumptions. 
Importantly, unlike previous results, these regret bounds are also independent of the number of actions.

Within our results, the full information model assumes that the agent sees the external factor at the end of each round. In contrast, in the partial information model, the agent acquires the external factor only when a non-null action is adopted. 
In \Cref{example:supply}, if the factory can learn the road condition via map services, it then observes the external factor no matter its chosen action, reflecting the full information feedback. 
However, it can sometimes only observe transportation costs when it transports goods, resembling a non-null action.
This is a case of partial information feedback. 
In the auction market illustrated in \Cref{example:bid}, agents might also face these two kinds of information models. 
In some situations, bidders can always view others' bids after the auction, while in other cases, only those who bid non-zero values can observe others' bids. 
Non-zero bidding here reflects a non-null action. 
Therefore, these two information models hold strong practical significance.

Other state-of-the-art results consider bandit information feedback, in which the agent only sees the reward and the consumption rather than the external factor. 
However, they explicitly assume a specific (e.g., linear) relationship between the conditional expected reward-consumption pair and the request~\citep{agrawal2016linear,sankararaman2021bandits,han2022optimal,slivkins2022efficient}, whereas our results do not impose any underlying distribution structures, bypassing realizability issues~\citep{han2022optimal}. 
On this side, our information model is comparable to those in existing work.

\paragraph{A near-optimal regret even in worst cases with full/partial information feedback and an extension to continuous randomness.}
We further explore how well our \Cref{alg:re-solving-EE} performs even in worst-case scenarios. 
With full information feedback, we show that an $O(\sqrt{T\log T})$ regret is achieved (cf. \Cref{thm:performance-re-solving-EE-worst-case-full-info}). 
This bound is asymptotically equal to the state-of-the-art with this information model~\citep{han2022optimal,slivkins2022efficient}. Even with partial information, we can still guarantee a universal $O(\sqrt{T}\log T)$ regret (cf. \Cref{thm:performance-re-solving-EE-worst-case-partial-info}), which is optimal up to a logarithmic factor. 
These results demonstrate the applicability and robustness of the re-solving heuristic in CDMK problems, regardless of some specific instances. 
For completeness, we extend our algorithm and analysis to the situation in which the distributions of request and external factor are continuous and derive corresponding regret results (cf. \Cref{thm:performance-re-solving-EE-continuum-full-info,thm:performance-re-solving-EE-continuum-partial-info}).

We summarize our algorithmic results on \Cref{alg:re-solving-EE} in \Cref{tab:summary}.
\begin{table}[!t]
    \caption{A summary of our algorithmic results on \Cref{alg:re-solving-EE}. Constants are omitted.}
    \label{tab:summary}
    \medskip
    \centering
    \renewcommand\arraystretch{1}
{
\small
\begin{tabular}{cccc}
    \toprule
     & Beyond the Worst Case & \multicolumn{2}{c}{Worst Case} \\
    \cmidrule(lr){2-2} \cmidrule(lr){3-4}
     & Uniq., Non-Degen. LP & Discrete & Continuous \\
    \midrule
    Full-Info. & $O(1)$ & $O(\sqrt{T\log T})$ & $O((T^{\poweru} + T^{\powerv} + T^{1/2})\sqrt{\log T})$ \\
    Part.-Info. & $O(\log T)$ & $O(\sqrt{T}\log T)$ & $O((T^{\poweru} + T^{1/2})\sqrt{\log T} +  T^{\powerv}(\log T)^{3/2 - \powerv})$ \\
    \bottomrule
\end{tabular}

\medskip

{
$u, v$: the mass/density function of the context and the external factor. \\
$\alpha_{p \in \{u, v\}}$: $(\beta + d) / (2\beta + d)$ if $p$ is a $d$-dimension distribution and $p \in \Sigma(\beta, L)$. (See \Cref{sec:continuum}.) 
}
}
\end{table}

\subsection{Related Work}\label{sec:literature}

\paragraph{Contextual decision-making/bandits with knapsacks.}
The issue most closely related to the CDMK problem is the problem of contextual bandits with knapsacks (CBwK), introduced by \citet{agrawal2016linear}. 
The main difference between the CDMK problem and the CBwK problem is that in the latter, an explicit model of the external factor is missed, and the bandit information feedback is considered. 
That is, only the consumption and the reward are revealed to the agent at the end of a round rather than the external factor. 
Along this research line, two primary methodologies have been proposed to solve the problem. 
The first approach aims to select the best probabilistic strategy within the policy set~\citep{badanidiyuru2014resourceful}, and \citet{agrawal2016efficient} adopts this approach to achieve an $\widetilde O(\sqrt T)$ regret. 
This heuristic originates from the subject of contextual bandits~\citep{dudik2011efficient,agarwal2014taming}, and requires a cost-sensitive classification oracle to achieve computation efficiency.

On the other hand, another approach views the problem from the perspective of the Lagrangian dual space. 
It uses a dual update method that reduces the CBwK problem to the online convex optimization (OCO) problem. 
In particular, some work~\citep{agrawal2016linear,sankararaman2021bandits,sivakumar2022smoothed,liu2022online} assumes a linear relationship between the conditional expectation of the reward-consumption pair and the request-action pair. 
This line adopts techniques for estimating linear function classes~\citep{abbasi2011improved,auer2002using,sivakumar2020structured,elmachtoub2022smart} and combines them with OCO methods to achieve sub-linear regret. 

Apart from the above studies, some results~\citep{han2022optimal,slivkins2022efficient,slivkins2023contextual} are not restricted to linear expectation functions. 
To deal with more general problems with bandit feedback, they plug model-reliable online regression methods~\citep{foster2018practical,foster2020beyond} into the dual update framework. 
As a result, their algorithms' regret is the sum of the regret on online regression and online convex optimization, respectively. 
Nevertheless, the online regression technique still limits the conditionally expected reward-consumption functions.

In the CDMK literature, \citet{liu2022online} have considered full information feedback, where the agent sees the external factor at the end of each round. 
Motivated by practice, our work further considers a partial feedback model, in which the agent observes the external factor when a non-null action is chosen (cf. \Cref{sec:preliminaries}).

\paragraph{The re-solving heuristic and related problems.}
Unlike the above approaches, our work adopts the re-solving method, also known as the "certainty equivalence" (CE) heuristic. 
Under this approach, the agent (in)frequently solves the fluid optimization problem with the remaining resources to obtain a probability control in each round. 
This method comes from the literature on the network revenue management (NRM) problem, which can be seen as a simplification of the CDMK problem without the existence of external factors or the external factor not getting involved in the resource consumption~\citep{wu2015algorithms}. 
Some researches in this setting also assumes known request distributions \citep{DBLP:journals/mor/JasinK12,arlotto2019uniformly,jasin2015performance,ferreira2018online,bumpensanti2020re,li2021online,chen2021improved,DBLP:journals/corr/abs-2205-09078,vera2021online,bray2024logarithmic,jiang2022degeneracy}. 
These works show that the re-solving-based method can obtain a constant regret under certain non-degeneracy assumptions and can generally obtain a square-root regret~\citep{chen2021improved}. 
Recently, the re-solving method is also extended to the general dynamic resource-constrained reward collection problem in \citet{balseiro2021survey}, which assumes the knowledge of request and external factor distributions and achieves $O(1)$ to $O(\log T)$ regret for different action space cardinalities.

We should mention that the re-solving technique, together with other methods, has also been adopted for the bandits with knapsacks (BwK) problem~\citep{gyorgy2007continuous,flajolet2015logarithmic,wu2015algorithms,vera2021online,li2021symmetry,sankararaman2021bandits} to achieve an $O(\log T)$ regret under different assumptions. For example, an essential result by \citet{sankararaman2021bandits} achieves $O(\log T)$ regret in BwK under the best-arm assumption and two resources. 
However, CDMK is a more challenging problem than BwK in that the decision has to be based on the received request. 
Thus, no optimal static action mode is irrelevant to the round, which adds a layer of complexity to the re-solving method.

\section{Preliminaries}\label{sec:preliminaries}

We consider an agent interacting with the environment for $T$ rounds. There are $n$ kinds of resources, with an average amount of $\bmrho^i$ for resource $i$ in each round, resulting in a total of $\bmrho^i T$ amount of resource $i$. We suppose that $\bmzero < \bmrho = \bmrho_1 = (\bmrho^i)_{i \in [n]} \leq \bmone$ is independent of $T$, with a maximum entry of $\rhomax \leq 1$ and a minimum entry of $\rhomin > 0$.

At the beginning of each round $t \geq 1$, the agent observes a request $\theta_t \in \Theta$ drawn i.i.d. from a distribution $\mathcal{U}$ and should choose an action $a_t$ from a set of actions $A$. Given the request $\theta_t$ and the action $a_t$, the agent receives a random reward $r_t \in [0, 1]$ and a consumption vector of resources $\bmc_t \in [0, 1]^n$, both of which are related to an external factor $\gamma_t \in \Gamma$ drawn i.i.d. from a distribution $\mathcal{V}$.
In other words, there is a reward function $r: \Theta \times A \times \Gamma \to [0, 1]$ and a consumption vector function $\bmc: \Theta \times A \times \Gamma \to [0, 1]^n$, such that $r_t = r(\theta_t, a_t, \gamma_t)$ and $\bmc_t = \bmc(\theta_t, a_t, \gamma_t)$. We suppose these two functions are pre-known to the agent.
We further define $R(\theta, a) \coloneqq \bbE_\gamma[r(\theta, a, \gamma)]$, and $\bmC(\theta, a) \coloneqq \bbE_\gamma[\bmc(\theta, a, \gamma)]$.

We impose minimum restrictions on the distributions $\calU$ and $\calV$. In the main body of this work, we only suppose that the support sets of both distributions are finite. Specifically, we let $k = |\Theta|$ be the size of the request set. We denote the mass function of $\calU$ and $\calV$ by $u(\theta)$ and $v(\gamma)$, respectively. We will extend to the situation that these two distributions can be continuous in \Cref{sec:continuum}. 

The agent's objective is to maximize the cumulative rewards over the period under initial resource constraints, which is a sequential decision-making problem. 
To ensure feasibility, we assume the existence of a null action (denoted by $0$) in the action set $A$. 
Under the null action, the reward and the consumption of any resource are zero, regardless of the request and the external factor. 
In other words, we have $r(\theta_t, 0, \gamma_t) = 0$ and $\bmc(\theta_t, 0, \gamma_t) = \bmzero$ for any $(\theta_t, \gamma_t) \in \Theta \times \Gamma$. 
We use $A^+ \coloneqq A \setminus \{0\}$ to denote the set of non-null actions and let $m \coloneqq |A^+|$ be its size. 

We would like to discuss here the necessity of the null action, which is widely used in related works~\citep{badanidiyuru2014resourceful,agrawal2016linear,agrawal2016efficient,slivkins2022efficient}. 
An alternative common choice for the null action is the so-called ``early stop when resource exhausted''~\citep{sankararaman2021bandits} in the BwK problem with no contexts. 
In reality, when the agent faces some ``bad'' contexts, a better choice is not ``entering the market'' to avoid, for example, small rewards but large consumption. As a comparison, struggling to come up with a non-null action here could occupy the space for serving those ``good'' contexts, and stopping before these contexts arrive may inevitably cause an $\Omega(T)$ regret. This illustrates that introducing a null action is necessary in the CDMK problem. In fact, in this problem, contexts play the role of revealing information and deterring unreasonable deals. 

We consider the set of \emph{non-anticipating} strategies $\Pi$. In particular, let $\history_{t}$ be the history the agent could access at the start of round $t$. Then, for any non-anticipating strategy $\pi \in \Pi$, $a_t$ should depend only on $(\theta_t, \history_{t})$, that is, $a_t = a_t^{\pi}(\theta_t, \history_{t})$. For abbreviation, we write $a_t^{\pi} = a_t^{\pi}(\theta_t, \history_{t})$ when there is no confusion. 

Therefore, we can define the agent's optimization problem as below:  
{
\small
\begin{gather*}
    \onlineOptimum \coloneqq \max_{\pi \in \Pi}\, \bbE_{\bmtheta \sim \mathcal U^T, \bmgamma \sim \mathcal V^T} \left[\sum_{t = 1}^T r(\theta_t, a_t^{\pi}, \gamma_t)\right], \quad
    \text{ s.t.}\enspace \sum_{t = 1}^T \bmc(\theta_t, a_t^{\pi}, \gamma_t) \leq \bmrho T, \, \forall \bmtheta \in \Theta^T, \bmgamma \in \Gamma^T. 
\end{gather*}
}

\paragraph{The fluid benchmark.} In practice, however, computing the expected reward of the optimal online strategy would require solving a high-dimension (probably infinite) dynamic programming, which is intractable. Hence, we turn to consider the fluid benchmark to measure the performance of a strategy, which is defined as follows: 
{
\small
\begin{gather*}
    \fluidOptimum \coloneqq T\cdot \max_{\phi: \Theta \times A^+ \to \bbR}\, \bbE_{\theta \sim \calU} \left[\sum_{a \in A^+} R(\theta, a)\phi(\theta, a)\right], \\
    \text{s.t.}\enspace \bbE_{\theta \sim \calU} \left[\sum_{a \in A^+}\bmC(\theta, a)\phi(\theta, a)\right] \leq \bmrho; \,
    \sum_{a \in A^+} \phi(\theta, a) \leq 1, \forall \theta \in \Theta; \,
    \phi(\theta, a) \geq 0, \forall (\theta, a) \in \Theta \times A^+.  
\end{gather*}
}

For a better understanding, $\fluidOptimum$ reflects the maximum expected total rewards an agent can win when a static strategy is adopted and the resource constraints are only to be satisfied in expectation. 
Therefore, this optimization problem is a linear program in which the decision variable $\phi(\theta, a)$ represents the probability that the agent chooses action $a$ upon seeing request $\theta$.
It is a well-known result that $\fluidOptimum$ gives an upper bound on $\onlineOptimum$.  
\begin{proposition}[\citet{balseiro2021survey}] $\fluidOptimum \geq \onlineOptimum$. 
\end{proposition}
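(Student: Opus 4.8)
The plan is to show that every feasible non-anticipating policy $\pi \in \Pi$ induces a feasible solution $\phi$ of the fluid program whose value equals the expected total reward of $\pi$; taking the supremum over $\pi$ then gives $\onlineOptimum \le \fluidOptimum$. Two structural facts drive the argument: (i) the external factor $\gamma_t$ is drawn i.i.d. and independently of the augmented history $\augmentedHistory_t = (\theta_t, \history_{t-1})$ — note $\history_{t-1}$ records $\gamma_1,\dots,\gamma_{t-1}$ but not $\gamma_t$ — so that $\gamma_t$ can be integrated out in expectation while $a_t^\pi$, being $\augmentedHistory_t$-measurable, is held fixed; and (ii) the online resource constraint holds for \emph{every} realization $(\bmtheta,\bmgamma)$, hence in particular in expectation.

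Concretely, fix a feasible $\pi \in \Pi$. For each round $t$ define $\phi_t(\theta) \coloneqq \Pr[a_t^\pi = 1 \mid \theta_t = \theta]$, where the probability is over $\history_{t-1}$ (and any internal randomness of $\pi$); this is a measurable map $\Theta \to [0,1]$. Put $\phi(\theta) \coloneqq \tfrac{1}{T}\sum_{t=1}^T \phi_t(\theta)$, which again takes values in $[0,1]$. Because $r(\theta,0,\gamma)=0$ and $\bbE_\gamma[r(\theta,1,\gamma)] = R(\theta)$, we have $r(\theta_t,a_t^\pi,\gamma_t) = \mathbf{1}\{a_t^\pi=1\}\,r(\theta_t,1,\gamma_t)$, and conditioning on $\augmentedHistory_t$ and using $\gamma_t \perp \augmentedHistory_t$ gives
\[
\bbE[r(\theta_t,a_t^\pi,\gamma_t)] = \bbE\!\left[\mathbf{1}\{a_t^\pi=1\}\,R(\theta_t)\right] = \bbE_{\theta\sim\calU}\!\left[R(\theta)\phi_t(\theta)\right].
\]
The identical computation with $\bmc$ in place of $r$ yields $\bbE[\bmc(\theta_t,a_t^\pi,\gamma_t)] = \bbE_{\theta\sim\calU}[\bmC(\theta)\phi_t(\theta)]$.

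Summing the first identity over $t$ gives $\bbE_{\bmtheta\sim\calU^T,\bmgamma\sim\calV^T}\big[\sum_{t=1}^T r(\theta_t,a_t^\pi,\gamma_t)\big] = T\,\bbE_{\theta\sim\calU}[R(\theta)\phi(\theta)]$. For feasibility, take expectations of the almost-sure constraint $\sum_{t=1}^T \bmc(\theta_t,a_t^\pi,\gamma_t) \le \bmrho T$ (componentwise inequalities are preserved) and sum the second identity to obtain $T\,\bbE_{\theta\sim\calU}[\bmC(\theta)\phi(\theta)] \le \bmrho T$, i.e., $\phi$ is feasible for the fluid program. Hence the expected reward of $\pi$ equals $T\cdot\bbE_{\theta\sim\calU}[R(\theta)\phi(\theta)] \le \fluidOptimum$, and taking the supremum over feasible $\pi$ completes the proof. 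There is no genuine obstacle here; the only points requiring care are the independence bookkeeping (that $\gamma_t$ is independent of $\augmentedHistory_t$, so the conditional-expectation step is legitimate) and the routine measurability of $\phi_t$ — and, if one does not wish to assume the maximum defining $\onlineOptimum$ is attained, simply running the argument for an arbitrary feasible $\pi$ and taking the supremum at the end.
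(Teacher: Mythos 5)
Your proof is correct, and it is the standard "averaging" argument for this fact: integrate out $\gamma_t$ using its independence from $\augmentedHistory_t$, define the per-round conditional action probabilities $\phi_t$, average them into a single fluid control $\phi$, and pass the almost-sure resource constraint through the expectation to certify feasibility. The paper itself gives no proof — it cites the result from \cite{balseiro2021survey} — but your argument is essentially the one found there, and all the delicate points (independence of $\gamma_t$ from the history and the policy's internal randomness, measurability of $\phi_t$, and working with a supremum over $\pi$ rather than an attained maximum) are handled correctly.
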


Thus, we evaluate the performance of a non-anticipating strategy $\pi$ by comparing its expected accumulated reward $Rew^\pi$ with the fluid benchmark $\fluidOptimum$, which is a common choice in literature~\cite{sivakumar2022smoothed,han2022optimal}. 
However, we prove that such a benchmark choice may lead to a $\Omega(\sqrt{T})$ gap as long as $\fluidOptimum$ is degenerate. 
\begin{theorem}[Worst-case gap]\label{prop:sufficient-condition-worst-case}
 When $\fluidOptimum$ has a unique and degenerate optimal solution, $\fluidOptimum - \onlineOptimum = \Omega(\sqrt{T})$. 
\end{theorem}

Despite the worst-case lower bound, we prove in this work that for any CDMK instance in which $\fluidOptimum$ has a \emph{unique non-degenerate} optimal solution (cf. \Cref{assump:unique-non-degenerate-LP}), we can obtain an $\widetilde{O}(1)$ gap compared to the fluid benchmark. 
Thus, it is still a good choice in most cases.

\paragraph{Information feedback model.} In this work, we consider two types of information feedback models, with increasing difficulty obtaining a sample of the external factor $\gamma$. 
\begin{itemize}
    \item {[Full information feedback.]} The agent is able to observe $\gamma_t$ at the end of each round $t$. 
    \item {[Partial information feedback.]} The agent can observe $\gamma_t$ at the end of round $t$ \emph{only if} $a_t \neq 0$. 
\end{itemize}

In general, with full information feedback, the agent can observe an i.i.d. sample from $\calV$ each round, which is the optimal scenario for learning the distribution. Nevertheless, such an assumption could be strong since the reward and consumption vector are irrelevant to the external factor when the agent chooses the null action $a = 0$. Thereby, a more realistic information model is the partial feedback one, where the external factor is only accessible when $a \neq 0$. This limitation also increases the difficulty of learning the distribution $\calV$ since the agent observes fewer samples under this model than under full information feedback. 
It is important to note that the partial information model represents a transition from full to bandit information feedback, under which only the reward and consumption vector are accessible in each round, rather than the external factor. 
Real-life instances of partial information feedback include \Cref{example:supply,example:bid}, as we have discussed in the introduction.

\section{The Re-Solving Heuristic}\label{sec:re-solving}
In this work, we introduce the re-solving heuristic to the CDMK problem. The resulting algorithm is presented in \Cref{alg:re-solving-EE}. 

To briefly describe the algorithm, we start by defining an optimization problem that captures the optimal fluid control for each round, assuming complete knowledge of $\mathcal U$ and $\mathcal V$. For any $\bmkappa \in [0, 1]^n$, we define $J(\bmkappa)$ as the following optimization problem: 
{
\small
\begin{gather*}
 J(\bmkappa) \coloneqq \max_{\phi: \Theta \times A^+ \to \bbR}\, \bbE_{\theta \sim \mathcal U} \left[\sum_{a \in A^+} R(\theta, a)\phi(\theta, a)\right], \\
    \text{s.t.}\enspace \bbE_{\theta \sim \mathcal U} \left[\sum_{a \in A^+} \bmC(\theta, a)\phi(\theta, a)\right] \leq \bmkappa; \,
    \sum_{a \in A^+} \phi(\theta, a) \leq 1, \forall \theta \in \Theta; \,
    \phi(\theta, a) \geq 0, \forall (\theta, a) \in \Theta \times A^+. 
\end{gather*}
}

\begin{algorithm}[!ht]
    \caption{Re-Solving with Empirical Estimation.}
    \label{alg:re-solving-EE}
\begin{algorithmic}
   \STATE {\bfseries Input:} $\bmrho$, $T$.
   \STATE {\bfseries Initialization:} $\mathcal I_1 \gets \emptyset$, $\bmB_1 \gets \bmrho T$.
   \FOR{$t=1$ {\bfseries to} $T$}
        \STATE Observe $\theta_t$;
        \STATE $\bmrho_t \gets \bmB_t / (T - t + 1)$;
        \STATE $\estphi^*_t \gets$ the solution to $\estJ(\bmrho_t, \history_{t})$;
        \STATE Choose $a_t \in A$ randomly such that for $a \in A^+$, $\Pr[a_t = a] = \estphi^*_t(\theta_t, a)$, and $\Pr[a_t = 0] = 1 - \sum_{a \in A^+} \estphi^*_t(\theta_t, a)$;
        \IF{(\textit{FULL-INFO}) $\vee$ (\textit{PARTIAL-INFO} $\wedge\, a_t \neq 0$)}
        \STATE Observe $\bmgamma_t$;
        \STATE $\mathcal I_{t + 1} \gets \mathcal I_t \cup \{t\}$;
        \ELSE \STATE $\mathcal I_{t + 1} \gets \mathcal I_t$;
        \ENDIF
        \STATE $\bmB_{t + 1} \gets \bmB_t - \bmc_t$;
        \IF{$\bmB_{t + 1}^i < 1$ for some $i \in [n]$}
        \STATE \textbf{break};
        \ENDIF
   \ENDFOR
\end{algorithmic}
\end{algorithm}
Evidently, we have $\fluidOptimum = T\cdot J(\bmrho) = T\cdot J(\bmrho_1)$ by definition. 
Intuitively, in each round $t$, the best fluid choice of the agent is given by the optimal solution $\phi_t^*$ of LP $J(\bmrho_t)$, where $\bmrho_t$ is the average budget of the remaining rounds, including round $t$. Nevertheless, since full knowledge of the exact distributions $\mathcal U$ and $\mathcal V$ is lacking, the agent can only solve an estimated programming $\estJ(\bmrho_t, \history_{t})$ as outlined in \Cref{alg:re-solving-EE}, with the following realization: 
{
\small
\begin{gather*}
    \estJ(\bmrho_t, \history_{t}) \coloneqq \max_{\substack{\phi: \Theta \times A^+  \to \bbR}}\, \operatorname*{\bbE}_{\theta \sim \estU_t} \left[\sum_{a \in A^+} \bbE_{\gamma \sim \estV_t}\left[r(\theta, a, \gamma)\right] \phi(\theta, a)\right], \\
    \text{s.t.}\enspace \bbE_{\estU_t} \left[\sum_{a \in A^+} \bbE_{\estV_t}\left[\bmc(\theta, a, \gamma)\right] \phi(\theta, a)\right] \leq \bmrho_t; \,
    \sum_{a \in A^+} \phi(\theta, a) \leq 1, \forall \theta \in \Theta; \,
    \phi(\theta, a) \geq 0, \forall (\theta, a) \in \Theta \times A^+. 
\end{gather*}
}

Here, $\estU_t$ and $\estV_t$ represent the empirical distribution of $\theta$ and $\gamma$, respectively, according to the sample history given by $\history_{t}$. Specifically, let $\calI_t$ be the set of rounds that the agent accesses the external factor. The mass functions of these two estimated distributions are standard as follows: 
$
    \estu_t(\theta) \coloneqq \nicefrac{\#[\theta \text{ appears in previous } t - 1 \text{ rounds}]}{t - 1};\ 
    \estv_t(\gamma) \coloneqq \nicefrac{\#[\gamma \text{ appears in rounds in } \calI_{t}]}{|\calI_{t}|}. 
$

It is worth noting that the estimated distribution of $\theta$, $\estU_t$, is always based on $t - 1$ samples since the agent received an independent sample from $\calU$ at the beginning of each round. On the other hand, the empirical distribution of the external factor $\gamma$, $\estV_t$, is estimated from $|\calI_{t}|$ independent samples. With full information feedback, $|\calI_{t}| = t - 1$; whereas with partial information feedback, $|\calI_{t}| \leq t - 1$ equals the number of times the agent chooses an action $a \neq 0$ before round $t$. For brevity, for the estimated programming, we write $\estbmC_t(\theta, a) \coloneqq \bbE_{\gamma \sim \estV_t}[\bmc(\theta, a, \gamma)]$ and $\estR_t(\theta, a) \coloneqq \bbE_{\gamma \sim \estV_t}[r(\theta, a, \gamma)]$. As per \Cref{alg:re-solving-EE}, the agent's decision mode in round $t$ is given by the optimal solution $\estphi_t^*$ of programming $\estJ(\bmrho_t, \history_{t})$. The algorithm stops when the resources are near depletion, that is, $\bmB^{i} < 1$ for some resource $i \in [n]$, and we use $T_0$ to denote the stopping time of \Cref{alg:re-solving-EE}, i.e., $T_0 \coloneqq \min\{T, \min\{t: \exists i \in [n], \bmB_{t + 1}^{i} < 1\}\}$.

For an analysis beyond the worst-case scenario, a crucial assumption we will make is that the fluid problem possesses good regularity properties, i.e., it is an LP with a unique and non-degenerate solution. 

\begin{assumption}\label{assump:unique-non-degenerate-LP}
 The optimal solution to $J(\bmrho_1)$ is unique and non-degenerate. 
\end{assumption}

As pointed out by \citet{bumpensanti2020re}, uniqueness and non-degeneracy are a critical factor for an $o(\sqrt{T})$ regret bound to hold in the CDMK problem, at least for the frequent re-solving technique we use in this work \cite{DBLP:journals/mor/JasinK12}. 
Intuitively, if $J(\bmrho_1)$ is degenerate, then with any minor error on the estimation, the optimal solution to $\estJ(\bmrho_t, \history_{t})$ can have a major different landscape with the optimal solution to $J(\bmrho_1)$ in the sense of basic variables and binding constraints, and this will lead to an $O(\sqrt{T})$ accumulated regret. 
For completeness, we formally define the above concepts. 
\begin{definition}\label{def:basic-variable}
 A context-action pair $(\theta, a)$ is a \emph{basic variable} for $J(\bmrho_1)$ if 
    $\phi_1^*(\theta, a) > 0$, 
 or else, it is a non-basic variable. Similarly, define basic/non-basic variables for $\estJ(\bmrho_t, \history_{t})$. 
\end{definition}

\begin{definition}\label{def:binding-constraint}
    $i \in [n]$ is a \emph{binding constraint} for $J(\bmrho_1)$ if
    $\sum_{\theta \in \Theta, a \in A^+}u(\theta)\bmC^i(\theta, a)\phi_1^*(\theta, a) = \bmrho_1^i$, 
 or else it is a non-binding constraint. 
 We let $\calS \coloneqq \{i \text{ is a binding constraint for }J(\bmrho_1)\}$, $\calT = [n] \setminus \calS$, 
 and we use $\bmkappa|_{\calS}$ or $\bmkappa|_{\calT}$ to define the sub-vector of $\bmkappa$ confined on $\calS$ or $\calT$, respectively. 
 Further, $\theta \in \Theta$ is a binding constraint for $J(\bmrho_1)$ if 
    $\sum_{a \in A^+}\phi_1^*(\theta, a) = 1$, 
 or else it is a non-binding constraint. Similarly, define binding/non-binding constraints for $\estJ(\bmrho_t, \history_{t})$. 
\end{definition}

As stated above, we want to guarantee that when the ``distance'' between $\estJ(\bmrho_t, \history_{t})$ and $J(\bmrho_1)$ is sufficiently small, the optimal solution to these two programmings have the same landscapes. 
In this sense, we consider a \emph{stability factor} $D$ to measure such a threshold, as presented by \citet{mangasarian1987lipschitz}. 
\begin{proposition}[Stability]\label{lem:stability}
 Under \Cref{assump:unique-non-degenerate-LP}, there is a maximum $D > 0$, such that when the following holds: 
    \begin{gather}\label{eq:stability-condition}
        \begin{gathered}
            \max\left\{\|(u(\theta) - \estu_t(\theta))_{\theta \in \Theta}\|_\infty, \|(v(\gamma) - \estv_t(\gamma))_{\gamma \in \Gamma}\|_1\right\} \leq D,  \\
            \max\left\{\left\|\bmrho_1|_\calS - \bmrho_t|_\calS\right\|_\infty, \max\left\{\bmrho_1|_\calT - \bmrho_t|_\calT\right\}\right\} \leq D, 
        \end{gathered}      
    \end{gather}
    $J(\bmrho_1)$ and $\estJ(\bmrho_t, \history_{t})$ share the same sets of basic/non-basic variables and binding/non-binding constraints. 
\end{proposition}

With the assumption, below we present the main result of this work, which is proved in \Cref{sec:proof-thm-performance-re-solving-EE-full-info}. 
\begin{theorem}\label{thm:performance-re-solving-EE-full-info}
 Under \Cref{assump:unique-non-degenerate-LP}, with full information feedback, the expected accumulated reward $\accumulatedReward$ brought by \Cref{alg:re-solving-EE} when $T\to \infty$ satisfies: 
    \[
        \fluidOptimum - \accumulatedReward = O\left(\frac{n^2 + k}{D^2}\right), 
    \]
 which is independent of $T$. 
\end{theorem}

The intuition behind \Cref{thm:performance-re-solving-EE-full-info} is to conduct a regret decomposition in a Lagrangian manner, motivated by \citet{chen2021improved}. This leads to three remaining terms (cf. \Cref{app:re-solving}). For the first two Lagrangian product terms, thanks to \Cref{lem:stability}, they equal $0$ as long as the estimates of the distributions are sufficiently accurate with an error of $O(D)$, which will happen with high probability after a constant number of rounds.
The last term reflects how the stopping time of \Cref{alg:re-solving-EE} is close to the total time-span $T$. On this front, we are left to demonstrate that the resources are spent smoothly. Intuitively, this property is guaranteed by combining two observations: (1) In each round, the action mode ensures that resources are spent evenly in expectation in the estimation world due to the re-solving step, and (2) the distance between the estimation world and the real world diminishes to zero, with the accumulation of samples. The complete reasoning is much more detailed.

We now compare \Cref{thm:performance-re-solving-EE-full-info} with results in prior work.
We first mention that our benchmark $\fluidOptimum$ is larger than the benchmark used in \citet{slivkins2022efficient} and \citet{slivkins2023contextual}, as proved in \Cref{sec:proof-worst-case-condition}. 
Thus, our result provides a stronger regret upper bound. 
As for the constants in the regret bound, first, our regret does not involve $m$ \emph{explicitly}. This is superior to existing results, which report an $\widetilde{O}(\sqrt{m})$ reliance~\citep{badanidiyuru2014resourceful,agrawal2016efficient,slivkins2022efficient,han2022optimal}. As an intuitive reason, the number of actions does not explicitly appear in our \Cref{alg:re-solving-EE}, but only contributes to the dimension of the linear program. 
However, $m$ could appear in some complex and problem-specific constants that we omit in the bound. 
Interested readers can refer to \Cref{app:re-solving} for more details. 
Second, although $k$ does not always appear in previous works, this is inevitable in our bound, brought by the estimation error of the context distribution.
Third, for the BwK problem, the well-known $O(\log T)$ result given by \citet{sankararaman2021bandits} supposes that $n \leq 2$, and it is still unclear whether their analysis can be extended to an arbitrary number of resources. Our result does not suffer from such a limit. 
Finally, we remark that in the absence of resource constraints, $D$ is precisely half the gap between the mean rewards of the best and second-best arms.
Thus, $D$ resembles the \emph{reward-gap-like parameter} in the multi-armed bandit literature.
The dependence on $D$ of our result is similar to the \emph{first} result in \citet{sankararaman2021bandits} and is the same with \citet{chen2021improved}, and it is still unclear whether the dependence can be improved.
We should also note that we omit the dependence of our regret bound on the unknown size of the external factor set in all our results, which could be improved via parameterized estimation techniques. 

One key implication of \Cref{thm:performance-re-solving-EE-full-info} is that the re-solving heuristic's regret is independent of the number of rounds beyond the worst-case with full information. This result significantly improved over previous state-of-the-art results under mild assumptions, surpassing the solutions proposed by \citet{han2022optimal} and \citet{slivkins2022efficient}. In particular, their solutions come from the BwK literature and rely on dual update and upper confidence bound (UCB) heuristics, which only provide a worst-case regret of $O(\sqrt{T\log T})$. 

\section{Partial Information Feedback}\label{sec:partial-info}

We now shift to consider the re-solving method's performance with partial information feedback, under which the agent only sees the external factor $\gamma_t$ when her choice is non-null in round $t$, i.e., $a_t \neq 0$. Apparently, with less information, the learning speed of the distribution $\calV$ decreases, hindering the re-solving procedure's quick convergence to an optimal solution. Nevertheless, we demonstrate that the performance of the re-solving method only faces an $O(\log T)$ multiplicative degradation under partial information feedback. Our primary theorem in this section is as follows: 
\begin{theorem}\label{thm:performance-re-solving-EE-partial-info}
 Under \Cref{assump:unique-non-degenerate-LP}, with partial information feedback, the expected accumulated reward $\accumulatedReward$ brought by \Cref{alg:re-solving-EE} when $T \to \infty$ satisfies: 
    \[
        \fluidOptimum - \accumulatedReward = O\left(\frac{n^2 + k + \log T}{D^2}\right).
    \]
\end{theorem}

Before we come to the technical parts, we first place \Cref{thm:performance-re-solving-EE-partial-info} within the literature. As previously mentioned, $\Omega(\sqrt{T})$ is a worst-case lower bound on the regret even with full information feedback and thus also extends as a lower bound with partial information feedback. However, \Cref{thm:performance-re-solving-EE-partial-info} steps beyond the worst case by providing an $O(\log T)$ upper bound for regular problem instances. This result outperforms the universal $O(\sqrt{T \log T})$ regret by \citet{han2022optimal} and \citet{slivkins2022efficient}. While the result is asymptotically equivalent to that of \citet{sankararaman2021bandits}, it imposes fewer restrictions on the problem structure, as previously discussed. 

We now provide an intuitive understanding of the proof of \Cref{thm:performance-re-solving-EE-partial-info}. The crux lies in analyzing the frequency with which \Cref{alg:re-solving-EE} can access an independent sample of the external factor. To this end, we use $Y_t = |\calI_t| \leq t - 1$ to denote the number of times a non-null action is chosen before time $t$, or equivalently, the number of i.i.d. samples from $\calV$ observed by the agent before time $t$ under partial information feedback. The following crucial technical lemma provides a lower bound on $Y_t$.

\begin{lemma}\label{lem:accessing-frequency-partial-info}
 There is a constant $0 < C_b < 1/2$, such that with probability $1 - O(1/T)$, the following hold for \Cref{alg:re-solving-EE}:
    \begin{enumerate}
        \item For any $\Theta(\log T) \leq t \leq C_b\cdot T$, $Y_t \geq C_f \cdot (t - 1) / \log T$ for some constant $C_f$; 
        \item For any $t > C_b \cdot T$, $Y_t \geq C_r \cdot T$ for some constant $C_r$. 
    \end{enumerate} 
\end{lemma}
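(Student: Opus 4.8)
The plan is to lower-bound $Y_t$, the number of rounds before $t$ in which $a_\tau = 1$, by controlling the per-round probability $\estphi^*_\tau(\theta_\tau)$ that the active action is selected, and then applying a martingale (Freedman / Bernstein-type) concentration bound to pass from the sum of conditional probabilities $\sum_{\tau < t} \bbE[\mathbb I[a_\tau = 1] \mid \widetilde{\mathcal H}_\tau]$ to the realized count $Y_t$. The key structural input is that $\estphi^*_\tau$ solves the estimated fluid LP $\estJ(\bmrho_\tau, \history_\tau)$ with a per-round budget $\bmrho_\tau = \bmB_\tau / (T - \tau + 1)$ that stays close to $\bmrho_1 > \bmzero$ as long as resources are not near depletion; since $J(\bmrho_1)$ has a feasible solution with strictly positive objective (the active action earns reward $R(\theta) \geq 0$ and consumes $\bmC(\theta) \in [0,1]^n$ with $\bmrho_1$ bounded away from $0$), the optimal $\estphi^*_\tau$ cannot be identically zero: there must be a set of contexts of non-negligible $\estU_\tau$-mass on which $\estphi^*_\tau$ is bounded away from $0$. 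Quantifying ``non-negligible'' is where the two regimes and the $\log T$ factors enter.

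First I would set up a good event $\mathcal G$ on which (i) the estimation errors $\|u - \estu_\tau\|_\infty \leq \erru(\tau, 1/T)$ and $\|v - \estv_\tau\|_\infty \leq \errv(\tau, 1/T)$ hold for all $\tau \le T_0$ (this has probability $1 - O(1/T)$ by the union bounds preceding Lemma~\ref{lem:bound-estimation-error-consumption}), and (ii) the resource process has not terminated, i.e.\ $\bmrho_\tau$ remains in a neighborhood of $\bmrho_1$; the latter itself follows from the consumption-concentration machinery (\Cref{assump:estimation-error-consumption-re-solving} and \Cref{lem:bound-estimation-error-consumption}) together with $t \le C_b T$ for a small enough constant $C_b$. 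On $\mathcal G$, I would argue that the fluid value $J(\bmrho_\tau) \ge c_0 > 0$ for an absolute constant $c_0$ (using $\rhomin > 0$ and boundedness of $\bmC$), hence $\bbE_{\theta \sim \estU_\tau}[\estR_\tau(\theta)\estphi^*_\tau(\theta)] \ge c_0 - o(1)$, which since $\estR_\tau \le 1$ forces $\bbE_{\theta \sim \estU_\tau}[\estphi^*_\tau(\theta)] \ge c_0 - o(1)$. Transferring this from $\estU_\tau$ to the true $\calU$ costs $\|u - \estu_\tau\|_\infty$ times the support/normalization factor: for discrete $\calU$ this is $O(\erru(\tau,1/T)) = O(\sqrt{\log T / \tau})$, which is why the first regime needs $\tau = \Omega(\log^{1/(2-2\poweru)} T)$ — below that threshold the estimation error on $\calU$ could swamp the constant $c_0$, while above it the true per-round selection probability $p_\tau \coloneqq \bbE_\theta[\estphi^*_\tau(\theta)]$ is $\Omega(1)$, and more carefully $\Omega(1/\ln T)$ once one accounts for the $\sqrt{\log T}$ slack terms that appear when $\poweru < 1$ (the continuous case tightens to this rate). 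Then $\bbE[Y_t \mid \mathcal G] = \sum_{\tau < t} \bbE[p_\tau] \gtrsim (t-1)/\ln T$, and a multiplicative Chernoff/Freedman bound gives $Y_t \ge C_f (t-1)/\ln T$ with probability $1 - O(1/T)$ provided $(t-1)/\ln T \gg \ln T$, i.e.\ $t = \Omega(\log^2 T)$, which is subsumed by the stated threshold since $1/(2-2\poweru) \ge 1$.

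For the second regime $t > C_b T$, the argument is cleaner: once $\tau \ge C_b T$ we still have the $\Omega(1/\ln T)$ lower bound on $p_\tau$ for all $\tau \in [\Omega(\log^{1/(2-2\poweru)}T), C_b T]$, so already $Y_{C_b T} = \Omega(T / \ln T)$; but in fact for $\tau$ a constant fraction of $T$ the estimation errors are $O(\sqrt{\log T / T}) = o(1)$, so $p_\tau \ge c_0/2$ outright, and summing over the $\Theta(T)$ rounds between $C_b T$ and $t$ — or just from, say, $T/2$ onward — gives $\bbE[Y_t] = \Omega(T)$, and again Chernoff upgrades this to $Y_t \ge C_r T$ w.h.p. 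I would handle both parts with a single application of a Freedman-type inequality to the martingale $Y_t - \sum_{\tau<t}\bbE[\mathbb I[a_\tau=1]\mid\widetilde{\mathcal H}_\tau]$, whose increments are bounded by $1$ and whose predictable quadratic variation is at most $\sum p_\tau$, so that a deviation of half the mean fails with probability $\exp(-\Omega(\sum p_\tau)) = O(1/T)$ in both regimes; finally a union bound over the $O(1)$ events (good event $\mathcal G$, the two Freedman bounds) yields the overall $1 - O(1/T)$ guarantee.

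I expect the main obstacle to be the quantitative lower bound $p_\tau = \Omega(1/\ln T)$ rather than the concentration step, which is standard. Specifically, one must rule out the possibility that the optimal LP solution $\estphi^*_\tau$ concentrates its mass on a vanishingly small set of contexts while still achieving objective value $\approx c_0$ — this cannot happen because each context contributes at most its $\estU_\tau$-probability to the objective (since $\estR_\tau \le 1$), so total mass of contexts played with probability $> 0$ is at least the objective value $\approx c_0$; the delicate point is merely that transferring the mass bound from $\estU_\tau$ back to $\calU$, and absorbing the $\sqrt{\log T}$-type slack that the resolving analysis attaches to $\bmrho_\tau$, is exactly what degrades the clean $\Omega(1)$ into $\Omega(1/\ln T)$ in the first regime and pins down the threshold $\log^{1/(2-2\poweru)}T$. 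Getting these constants to line up with the definitions of $\poweru$ and the slack terms in \Cref{thm:performance-re-solving-EE-general} is the part requiring care.
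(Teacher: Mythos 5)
Your overall architecture (lower-bound the per-round selection probability, then concentrate) matches the paper's, but there are two substantive divergences, one of which is a genuine gap.

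The gap is in the concentration step for the first regime. Once $t \geq \Theta(\log^{1/(2-2\poweru)}T)$, the estimation error on $\calU$ is at most $\rhomin/4$, and the per-round probability $p_\tau$ is $\Omega(1)$ (the paper gets $p_\tau \geq \rhomin/4$); your hedge that $p_\tau$ might only be $\Omega(1/\ln T)$ ``once one accounts for the $\sqrt{\log T}$ slack'' has no basis --- the $\ln T$ in the conclusion does not come from the per-round probability. It comes from the concentration argument, and this is exactly where your proposal breaks: you yourself note that a multiplicative Chernoff bound with failure probability $O(1/T)$ requires $t = \Omega(\log^2 T)$, and then claim this is ``subsumed by the stated threshold since $1/(2-2\poweru) \geq 1$.'' That implication is backwards: for discrete $\calU$ we have $\poweru = 1/2$, so the threshold is only $\Theta(\log T)$, and the range $t \in [\Theta(\log T), \Theta(\log^2 T)]$ is not covered by your argument. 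The paper closes this with a different device: it shows the probability of $\Omega(\log T)$ \emph{consecutive} rounds with $a_\tau = 0$ is $O(1/T)$ (each such run has probability $(1-\rhomin/4)^{\Omega(\log T)} = O(\mathrm{poly}(1/T))$, then union bound over windows), so every block of length $\Theta(\log T)$ contributes at least one active round and $Y_t \geq C_f(t-1)/\ln T$ for \emph{all} $t$ in the range simultaneously. That block argument is the actual source of the $1/\ln T$ factor, and your proposal is missing it.

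Secondarily, your lower bound on $p_\tau$ goes through the LP objective value ($J(\bmrho_\tau) \geq c_0 > 0$ forces $\bbE_{\estU_\tau}[\estphi^*_\tau] \geq c_0 - o(1)$ since $\estR_\tau \leq 1$), which silently assumes the fluid value is bounded away from zero. The paper instead proves a structural fact (its \Cref{lem:bound-biased-expectation-accessing-round}): either $\phi \equiv 1$ is feasible, or some resource constraint binds at $\estphi^*_\tau$, in which case $\bbE_{\estU_\tau}[\estphi^*_\tau] \geq \bbE_{\estU_\tau}[\estphi^*_\tau \estbmC^{i^*}_\tau] = \bmrho_\tau^{i^*} \geq \min\bmrho_\tau$, combined with the deterministic bound $\min\bmrho_\tau \geq \rhomin/2$ for $\tau \leq (\rhomin/2)T$ (no consumption-concentration machinery needed, since per-round consumption is at most $1$). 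Your route would still produce an instance-dependent constant when $J(\bmrho_1)>0$, but the paper's is both unconditional and simpler; you should adopt it, and in any case you must repair the first-regime concentration step.
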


The proof of \Cref{lem:accessing-frequency-partial-info} is deferred to \Cref{sec:proof-lem-accessing-frequency-partial-info}, and an illustration is presented in \Cref{fig:accessing-frequency-partial-info}. 
In simple terms, during the initial $\Theta(\log T)$ rounds (the shaded segment), the re-solving method cannot guarantee the accessing frequency since the learning of the request distribution $\calU$ has yet to converge sufficiently. However, after $\Theta(\log T)$ rounds, \Cref{alg:re-solving-EE} ensures a constant probability of obtaining a new example in each round, provided that the remaining resources are sufficient. As a consequence, before $\Theta(T)$ rounds, we can guarantee an $\Omega(1 / \log T)$ accessing frequency at any time step and an overall $\Omega(1)$ frequency with high probability, as established by a concentration inequality. The remaining proof of \Cref{thm:performance-re-solving-EE-partial-info} is provided in \Cref{sec:proof-thm-performance-re-solving-EE-partial-info}. 

\begin{figure}[!t]
    \centering
    \definecolor{Mahogany}{RGB}{192,64,0}
{
\begin{tikzpicture}[scale=0.8]
    \draw (0,0) -- (13,0);
    \foreach \i in {4, 5, 6, 7} \draw[thick, draw=Mahogany] (\i,0) -- (\i,0.3);
    \foreach \i in {0, 3, 8, 13} \draw[thick, draw=black] (\i,0) -- (\i,0.4);
    \draw[pattern=north east lines, draw=none] (0,0) rectangle (3,0.45);
    \draw[thick, decorate, decoration={brace, amplitude=6pt}] (0,0.5) -- node[above={5pt}] {$\Theta(\log T)$} (3,0.5);
    \draw[thick, decorate, decoration={brace, mirror, amplitude=6pt}] (0,-0.06) -- node[below = 8pt] {$\Theta(T)$} (8,-0.06);
    \draw[thick, -{Stealth[angle'=45]}] (4.72,-0.68) -- (7.22,-0.68) node[right={3pt}] {Overall $\Omega(1)$ frequency};
    \draw[thick, decorate, decoration={brace, amplitude=6pt}] (3,0.5) --  (8,0.5);
    \draw[thick, -{Stealth[angle'=45]}] (5.5,0.75) |- (5.5,1.2) -- (7.22,1.2) node[right={3pt}] {Uniform $\Omega(1/\log T)$ frequency};
\end{tikzpicture}
}
    \caption{An illustration of \Cref{lem:accessing-frequency-partial-info}. Before $\Theta(T)$ rounds, the probability of accessing frequency at any time step is at least $\Omega(1 / \log T)$, and we have an overall $\Omega(1)$ frequency with high probability. }
    \label{fig:accessing-frequency-partial-info}
\end{figure}

\section{Relaxing the Regularity Assumption -- A Worst-Case Guarantee}\label{sec:non-regularity}

In \Cref{sec:re-solving,sec:partial-info}, we have proved that \Cref{alg:re-solving-EE} can achieve an $\widetilde{O}(1)$ regret for CDMK problems under full and partial information feedbacks, assuming certain regular conditions (cf. \Cref{assump:unique-non-degenerate-LP}). Put differently, the re-solving heuristic nicely deals with regular scenarios. In this section, we complement the above by showing that this method can attain nearly optimal regret in the worst cases. Furthermore, in \Cref{sec:continuum}, we extend our analysis to cases where the context and external factor distributions can be continuous. 

Our main results are given below, and their proofs are provided in \Cref{sec:proof-thm-performance-re-solving-EE-worst-case-full-info,sec:proof-thm-performance-re-solving-EE-worst-case-partial-info}, respectively. 
\begin{theorem}\label{thm:performance-re-solving-EE-worst-case-full-info}
 With full information feedback, the expected accumulated reward $\accumulatedReward$ brought by \Cref{alg:re-solving-EE} satisfies: 
    $
        \fluidOptimum - \accumulatedReward = O(k\sqrt{T \log T} + n),
    $ as $T \to \infty$.
\end{theorem}

\begin{theorem}\label{thm:performance-re-solving-EE-worst-case-partial-info}
 With partial information feedback, the expected accumulated reward $\accumulatedReward$ brought by \Cref{alg:re-solving-EE} satisfies: 
    $
        \fluidOptimum - \accumulatedReward = O(k\sqrt{T}\log T + n),
    $ as $T \to \infty$.
\end{theorem}

As given by \Cref{prop:sufficient-condition-worst-case}, the worst-case regret of any online CDMK algorithm is $\Omega(\sqrt{T})$, while \Cref{thm:performance-re-solving-EE-worst-case-full-info,thm:performance-re-solving-EE-worst-case-partial-info} indicate that the re-solving heuristic reaches near-optimality in such cases. Further, state-of-the-art algorithms~\citep{han2022optimal,slivkins2022efficient} can at most obtain an $\widetilde{O}(\sqrt{T})$ regret even with full/partial information feedback. Our algorithm also achieves this near-optimal regret bound in worst cases. 

Meanwhile, it is worth noticing that we omit some problem-specific constants in the bounds of \Cref{thm:performance-re-solving-EE-worst-case-full-info,thm:performance-re-solving-EE-worst-case-partial-info}, e.g., the fluid optimum $J(\bm \rho_1)$, which could be related to the number of non-null arms $m$. 
Therefore, our results do not conflict with the well-known $\Omega(\sqrt{mT})$ regret lower bound as given by \citet{auer2002nonstochastic}. 
\section{Numerical Validations}\label{sec:experiments}
In this section, we use numerical experiments to verify our analysis. 
We perform our simulation experiments with either full or partial information feedback under two cases. 
The first is with a degenerate optimal solution, and the second is with a unique and non-degenerate optimal solution. 

Specifically, we consider the following instance:
There are two types of resources, three types of contexts, and two types of external factors.
The unknown mass function of context and external factor are $(u(\theta_1), u(\theta_2), u(\theta_3)) = (0.3, 0.3, 0.4)$ and $(v(\gamma_1), v(\gamma_2)) = (0.5, 0.5)$, respectively.
The resource consumption is represented by
\begin{align*}
    C^{(1)} = 
    \begin{bmatrix}
        0.9 & 1.1\\
        1.8 & 2.2\\
        1.2 & 0.8
    \end{bmatrix}, \quad 
    C^{(2)} =
    \begin{bmatrix}
        2.1 & 1.9\\
        0.8 & 1.2\\
        0.9 & 1.1
    \end{bmatrix},
\end{align*}
where $\bmc(\theta_i, 1, \gamma_j) = (C^{(1)}_{i,j}, C^{(2)}_{i, j})^\top$ for all $(i, j)$.
The reward function is represented by
\begin{align*}
    R = 
    \begin{bmatrix}
        1.2 & 0.8\\
        1.3 & 1.1\\
        0.7 & 0.9
    \end{bmatrix},
\end{align*}
where $r(\theta_i, 1, \gamma_j) = R_{i,j}$ for all $i, j$.
Thus the underlying LP is
\begin{align*}
    J(\bmrho) = \max \,& 0.3 \cdot x_1 + 0.36 \cdot x_2 + 0.32 \cdot x_3, \\
    \text{s.t.} \quad & 0.3 \cdot x_1 + 0.6 \cdot x_2 + 0.4 \cdot x_3 \leq \bmrho^1, \\
    & 0.6 \cdot x_1 + 0.3 \cdot x_2 + 0.4 \cdot x_3 \leq \bmrho^2, \\
    & 0 \leq x_i \leq 1, \ i = 1, 2, 3,
\end{align*}
where $x_i = \phi(\theta_i)$ for all $i = 1, 2, 3$.
For a degenerate instance, we set $\bmrho = (1, 1.15)^\top$ with the optimal solution ${\bm x}^* = (1, 0.5, 1)^\top$. For a non-degenerate problem instance, we set the average resources as $\bmrho = (1, 1)^\top$ and the unique optimal solution is ${\bm x}^* = (2/3, 2/3, 1)^\top$;
Here we relax the restriction that the consumption and reward are upper bounded by $1$, as this condition can be met easily by scaling, with the regret accordingly scaling. Such a relaxation is to make the LP form more visually appealing.

\begin{figure}[!t]
    \centering
    \subfigure{
        \includegraphics[width=0.48\linewidth]{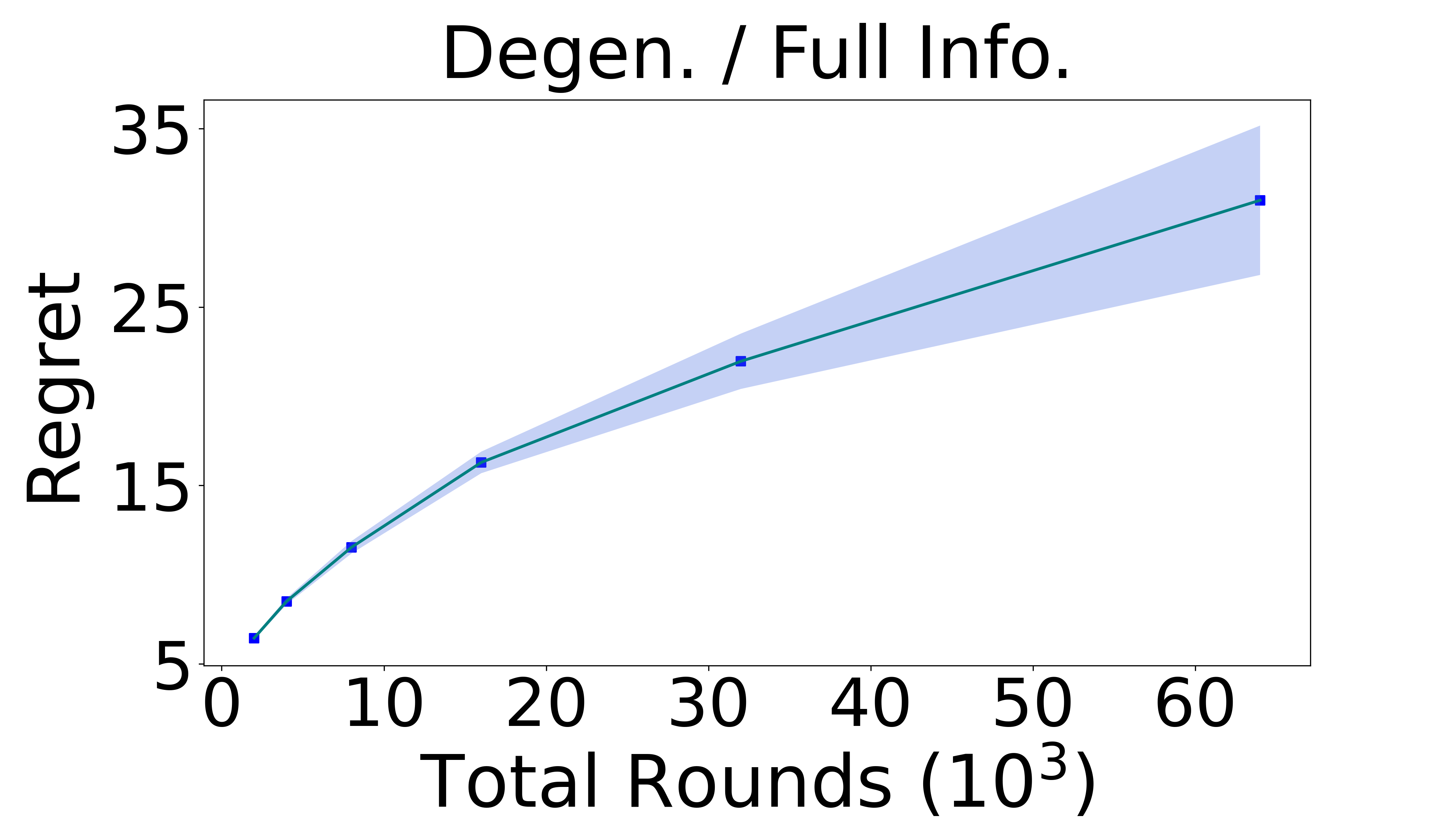}}
    \subfigure{
        \includegraphics[width=0.48\linewidth]{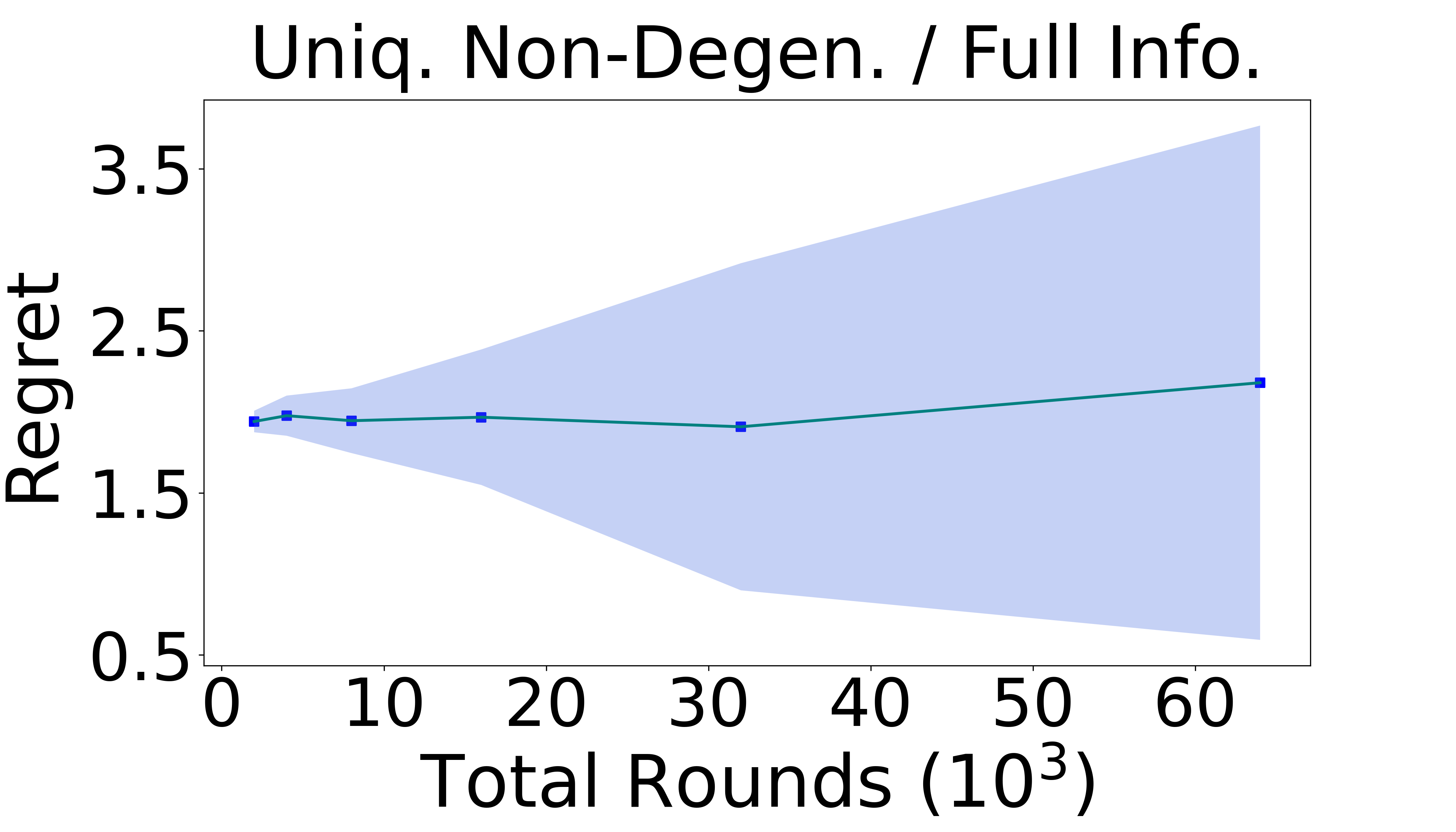}}
    \subfigure{
        \includegraphics[width=0.48\linewidth]{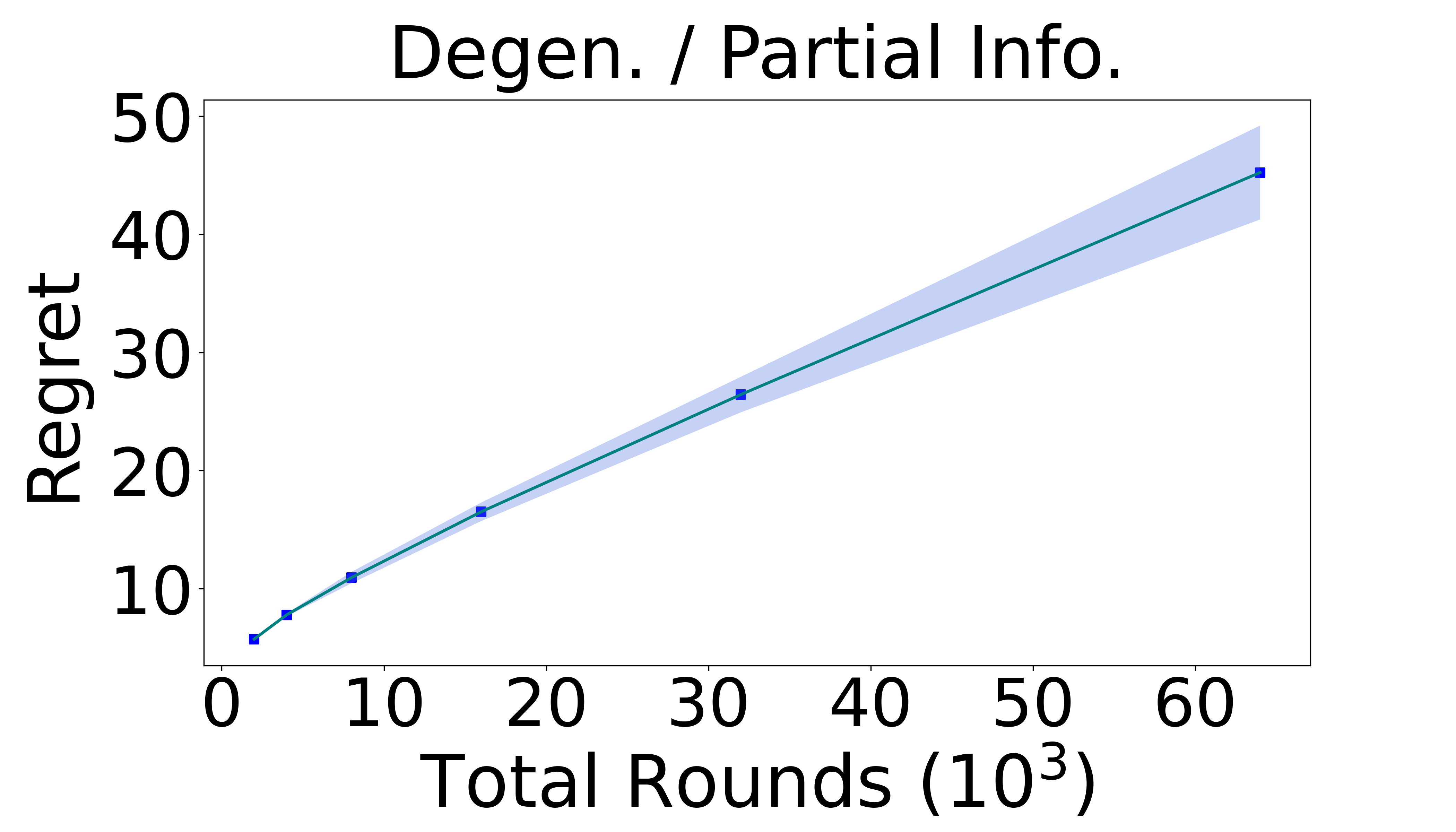}}
    \subfigure{
        \includegraphics[width=0.48\linewidth]{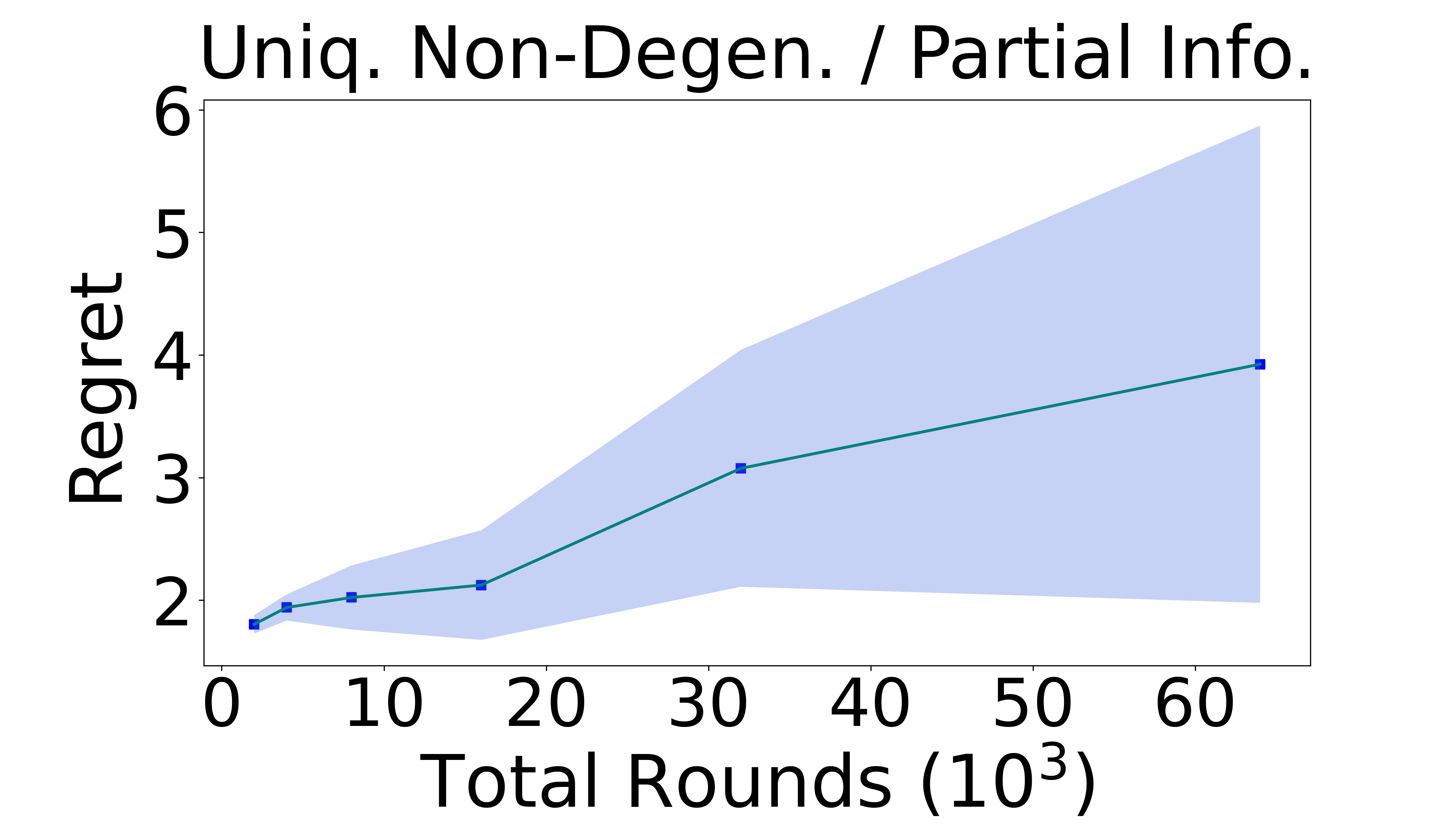}}
    \caption{Regret of \Cref{alg:re-solving-EE} under different number of total rounds $T = 2000 \cdot 2^k$ for integer $0 \leq k \leq 5$.
 }
    \label{fig:numerical-experiment}
\end{figure}

\Cref{fig:numerical-experiment} describes the relationship between the regret and the number of total rounds $T$ under all four settings.
We set the horizon $T$ to be $2000 \cdot 2^k$ for integer $0 \leq k \leq 5$.
The figure displays both the sample mean (the line) and the 99\%-confidence interval (the light color zone) calculated by the results of $50$ estimations for the regret, where each estimation comprises $400$ independent trials.
Observe that when the LP $J(\bmrho)$ is degenerate, the regret grows on the order of $\tilde{O}(\sqrt{T})$ under both full information and partial information settings. 
Further, when the underlying LP $J(\bmrho)$ has a unique and non-degenerate optimal solution, the regret does not scale with $T$ under the full information setting. In the partial information setting, the regret slowly grows with $T$, which matches our $\widetilde O(1)$ theoretical guarantee. 
\section{Concluding Remarks}

This work establishes the effectiveness of the re-solving heuristic in the contextual decision-making problem with knapsack constraints. 
We first prove that the gap between the fluid optimum and online optimum is $\Omega(\sqrt{T})$ when the fluid LP has a unique and degenerate optimal solution. 
Further, we show that the re-solving method reaches an $O(1)$ regret with full information and an $O(\log T)$ regret with partial information when the fluid LP has a unique and non-degenerate optimal solution, even compared to the fluid benchmark. 
Considering the sufficient condition for the $\Omega(\sqrt{T})$ lower bound, our non-degeneracy assumption is mild, especially when comparing with the two-resource condition required in \citet{sankararaman2021bandits}. 

Further, we show that even in worst cases, the re-solving method achieves an $O(\sqrt{T\log T})$ regret with full information feedback and an $O(\sqrt{T}\log T)$ regret with partial information feedback. 
These results are comparable to start-of-the-art results~\citep{han2022optimal,slivkins2022efficient}.
We also extend our analysis to the continuous randomness case for completeness.

\section*{Acknowledgement}
This work is supported by the National Natural Science Foundation of China (Grant No. 62172012), by Alibaba Group through Alibaba Innovative Research Program, and by Peking University-Alimama Joint Laboratory of AI Innovation. 
Part of this work was done when Rui Ai, Mingwei Yang, Yuqi Pan, and Chang Wang were undergraduates in Peking University. 
The authors thank Yinyu Ye for his kind suggestions and all anonymous reviewers for their helpful feedback. 

\bibliographystyle{plainnat}
\bibliography{reference}

\begin{thebibliography}{42}
\providecommand{\natexlab}[1]{#1}
\providecommand{\url}[1]{\texttt{#1}}
\expandafter\ifx\csname urlstyle\endcsname\relax
  \providecommand{\doi}[1]{doi: #1}\else
  \providecommand{\doi}{doi: \begingroup \urlstyle{rm}\Url}\fi

\bibitem[Abbasi-Yadkori et~al.(2011)Abbasi-Yadkori, P{\'a}l, and
  Szepesv{\'a}ri]{abbasi2011improved}
Yasin Abbasi-Yadkori, D{\'a}vid P{\'a}l, and Csaba Szepesv{\'a}ri.
\newblock Improved algorithms for linear stochastic bandits.
\newblock \emph{Advances in Neural Information Processing Systems}, 24, 2011.

\bibitem[Agarwal et~al.(2014)Agarwal, Hsu, Kale, Langford, Li, and
  Schapire]{agarwal2014taming}
Alekh Agarwal, Daniel Hsu, Satyen Kale, John Langford, Lihong Li, and Robert
  Schapire.
\newblock Taming the monster: A fast and simple algorithm for contextual
  bandits.
\newblock In \emph{International Conference on Machine Learning}, pages
  1638--1646. PMLR, 2014.

\bibitem[Agrawal and Devanur(2016)]{agrawal2016linear}
Shipra Agrawal and Nikhil Devanur.
\newblock Linear contextual bandits with knapsacks.
\newblock \emph{Advances in Neural Information Processing Systems}, 29, 2016.

\bibitem[Agrawal et~al.(2016)Agrawal, Devanur, and Li]{agrawal2016efficient}
Shipra Agrawal, Nikhil~R Devanur, and Lihong Li.
\newblock An efficient algorithm for contextual bandits with knapsacks, and an
  extension to concave objectives.
\newblock In \emph{Conference on Learning Theory}, pages 4--18. PMLR, 2016.

\bibitem[Arlotto and Gurvich(2019)]{arlotto2019uniformly}
Alessandro Arlotto and Itai Gurvich.
\newblock Uniformly bounded regret in the multisecretary problem.
\newblock \emph{Stochastic Systems}, 9\penalty0 (3):\penalty0 231--260, 2019.

\bibitem[Auer(2002)]{auer2002using}
Peter Auer.
\newblock Using confidence bounds for exploitation-exploration trade-offs.
\newblock \emph{Journal of Machine Learning Research}, 3\penalty0
  (Nov):\penalty0 397--422, 2002.

\bibitem[Auer et~al.(2002)Auer, Cesa-Bianchi, Freund, and
  Schapire]{auer2002nonstochastic}
Peter Auer, Nicolo Cesa-Bianchi, Yoav Freund, and Robert~E Schapire.
\newblock The nonstochastic multiarmed bandit problem.
\newblock \emph{SIAM journal on computing}, 32\penalty0 (1):\penalty0 48--77,
  2002.

\bibitem[Badanidiyuru et~al.(2014)Badanidiyuru, Langford, and
  Slivkins]{badanidiyuru2014resourceful}
Ashwinkumar Badanidiyuru, John Langford, and Aleksandrs Slivkins.
\newblock Resourceful contextual bandits.
\newblock In \emph{Conference on Learning Theory}, pages 1109--1134. PMLR,
  2014.

\bibitem[Balseiro et~al.(2021)Balseiro, Besbes, and
  Pizarro]{balseiro2021survey}
Santiago Balseiro, Omar Besbes, and Dana Pizarro.
\newblock Survey of dynamic resource constrained reward collection problems:
  Unified model and analysis.
\newblock \emph{Available at SSRN 3963265}, 2021.

\bibitem[Balseiro and Gur(2019)]{balseiro2019learning}
Santiago~R Balseiro and Yonatan Gur.
\newblock Learning in repeated auctions with budgets: Regret minimization and
  equilibrium.
\newblock \emph{Management Science}, 65\penalty0 (9):\penalty0 3952--3968,
  2019.

\bibitem[Besbes et~al.(2022)Besbes, Kanoria, and
  Kumar]{DBLP:journals/corr/abs-2205-09078}
Omar Besbes, Yash Kanoria, and Akshit Kumar.
\newblock The multisecretary problem with many types.
\newblock \emph{arXiv preprint arXiv:2205.09078}, 2022.

\bibitem[Bray(2024)]{bray2024logarithmic}
Robert~L Bray.
\newblock Logarithmic regret in multisecretary and online linear programs with
  continuous valuations.
\newblock \emph{Operations Research}, 2024.

\bibitem[Bumpensanti and Wang(2020)]{bumpensanti2020re}
Pornpawee Bumpensanti and He~Wang.
\newblock A re-solving heuristic with uniformly bounded loss for network
  revenue management.
\newblock \emph{Management Science}, 66\penalty0 (7):\penalty0 2993--3009,
  2020.

\bibitem[Castiglioni et~al.(2022)Castiglioni, Celli, and
  Kroer]{castiglioni2022online}
Matteo Castiglioni, Andrea Celli, and Christian Kroer.
\newblock Online learning with knapsacks: the best of both worlds.
\newblock In \emph{International Conference on Machine Learning}, pages
  2767--2783. PMLR, 2022.

\bibitem[Celli et~al.(2022)Celli, Castiglioni, and Kroer]{celli2022best}
Andrea Celli, Matteo Castiglioni, and Christian Kroer.
\newblock Best of many worlds guarantees for online learning with knapsacks.
\newblock \emph{arXiv preprint arXiv:2202.13710}, 2022.

\bibitem[Chen et~al.(2022)Chen, Li, and Ye]{chen2021improved}
Guanting Chen, Xiaocheng Li, and Yinyu Ye.
\newblock An improved analysis of lp-based control for revenue management.
\newblock \emph{Operations Research}, 2022.

\bibitem[Dudik et~al.(2011)Dudik, Hsu, Kale, Karampatziakis, Langford, Reyzin,
  and Zhang]{dudik2011efficient}
Miroslav Dudik, Daniel Hsu, Satyen Kale, Nikos Karampatziakis, John Langford,
  Lev Reyzin, and Tong Zhang.
\newblock Efficient optimal learning for contextual bandits.
\newblock In \emph{Proceedings of the Twenty-Seventh Conference on Uncertainty
  in Artificial Intelligence}, pages 169--178, 2011.

\bibitem[Elmachtoub and Grigas(2022)]{elmachtoub2022smart}
Adam~N Elmachtoub and Paul Grigas.
\newblock Smart “predict, then optimize”.
\newblock \emph{Management Science}, 68\penalty0 (1):\penalty0 9--26, 2022.

\bibitem[Ferreira et~al.(2018)Ferreira, Simchi-Levi, and
  Wang]{ferreira2018online}
Kris~Johnson Ferreira, David Simchi-Levi, and He~Wang.
\newblock Online network revenue management using thompson sampling.
\newblock \emph{Operations research}, 66\penalty0 (6):\penalty0 1586--1602,
  2018.

\bibitem[Flajolet and Jaillet(2015)]{flajolet2015logarithmic}
Arthur Flajolet and Patrick Jaillet.
\newblock Logarithmic regret bounds for bandits with knapsacks.
\newblock \emph{arXiv preprint arXiv:1510.01800}, 2015.

\bibitem[Foster and Rakhlin(2020)]{foster2020beyond}
Dylan Foster and Alexander Rakhlin.
\newblock Beyond ucb: Optimal and efficient contextual bandits with regression
  oracles.
\newblock In \emph{International Conference on Machine Learning}, pages
  3199--3210. PMLR, 2020.

\bibitem[Foster et~al.(2018)Foster, Agarwal, Dudik, Luo, and
  Schapire]{foster2018practical}
Dylan Foster, Alekh Agarwal, Miroslav Dudik, Haipeng Luo, and Robert Schapire.
\newblock Practical contextual bandits with regression oracles.
\newblock In \emph{International Conference on Machine Learning}, pages
  1539--1548. PMLR, 2018.

\bibitem[Gy{\"o}rgy et~al.(2007)Gy{\"o}rgy, Kocsis, Szab{\'o}, and
  Szepesv{\'a}ri]{gyorgy2007continuous}
Andr{\'a}s Gy{\"o}rgy, Levente Kocsis, Ivett Szab{\'o}, and Csaba
  Szepesv{\'a}ri.
\newblock Continuous time associative bandit problems.
\newblock In \emph{IJCAI}, pages 830--835, 2007.

\bibitem[Han et~al.(2022)Han, Zeng, Wang, Xiang, and Zhang]{han2022optimal}
Yuxuan Han, Jialin Zeng, Yang Wang, Yang Xiang, and Jiheng Zhang.
\newblock Optimal contextual bandits with knapsacks under realizibility via
  regression oracles.
\newblock \emph{arXiv preprint arXiv:2210.11834}, 2022.

\bibitem[Jasin(2015)]{jasin2015performance}
Stefanus Jasin.
\newblock Performance of an lp-based control for revenue management with
  unknown demand parameters.
\newblock \emph{Operations Research}, 63\penalty0 (4):\penalty0 909--915, 2015.

\bibitem[Jasin and Kumar(2012)]{DBLP:journals/mor/JasinK12}
Stefanus Jasin and Sunil Kumar.
\newblock A re-solving heuristic with bounded revenue loss for network revenue
  management with customer choice.
\newblock \emph{Mathematics of Operations Research}, 37\penalty0 (2):\penalty0
  313--345, 2012.

\bibitem[Jiang et~al.(2022)Jiang, Ma, and Zhang]{jiang2022degeneracy}
Jiashuo Jiang, Will Ma, and Jiawei Zhang.
\newblock Degeneracy is ok: Logarithmic regret for network revenue management
  with indiscrete distributions.
\newblock \emph{arXiv preprint arXiv:2210.07996}, 2022.

\bibitem[Li and Ye(2021)]{li2021online}
Xiaocheng Li and Yinyu Ye.
\newblock Online linear programming: Dual convergence, new algorithms, and
  regret bounds.
\newblock \emph{Operations Research}, 2021.

\bibitem[Li et~al.(2021)Li, Sun, and Ye]{li2021symmetry}
Xiaocheng Li, Chunlin Sun, and Yinyu Ye.
\newblock The symmetry between arms and knapsacks: A primal-dual approach for
  bandits with knapsacks.
\newblock In \emph{International Conference on Machine Learning}, pages
  6483--6492. PMLR, 2021.

\bibitem[Liu and Grigas(2022)]{liu2022online}
Heyuan Liu and Paul Grigas.
\newblock Online contextual decision-making with a smart predict-then-optimize
  method.
\newblock \emph{arXiv preprint arXiv:2206.07316}, 2022.

\bibitem[Mangasarian and Shiau(1987)]{mangasarian1987lipschitz}
Olvi~L Mangasarian and T-H Shiau.
\newblock Lipschitz continuity of solutions of linear inequalities, programs
  and complementarity problems.
\newblock \emph{SIAM Journal on Control and Optimization}, 25\penalty0
  (3):\penalty0 583--595, 1987.

\bibitem[Sankararaman and Slivkins(2021)]{sankararaman2021bandits}
Karthik~Abinav Sankararaman and Aleksandrs Slivkins.
\newblock Bandits with knapsacks beyond the worst case.
\newblock \emph{Advances in Neural Information Processing Systems},
  34:\penalty0 23191--23204, 2021.

\bibitem[Sierksma(2001)]{sierksma2001linear}
Gerard Sierksma.
\newblock \emph{Linear and integer programming: theory and practice}.
\newblock CRC Press, 2001.

\bibitem[Sivakumar et~al.(2020)Sivakumar, Wu, and
  Banerjee]{sivakumar2020structured}
Vidyashankar Sivakumar, Steven Wu, and Arindam Banerjee.
\newblock Structured linear contextual bandits: A sharp and geometric smoothed
  analysis.
\newblock In \emph{International Conference on Machine Learning}, pages
  9026--9035. PMLR, 2020.

\bibitem[Sivakumar et~al.(2022)Sivakumar, Zuo, and
  Banerjee]{sivakumar2022smoothed}
Vidyashankar Sivakumar, Shiliang Zuo, and Arindam Banerjee.
\newblock Smoothed adversarial linear contextual bandits with knapsacks.
\newblock In \emph{International Conference on Machine Learning}, pages
  20253--20277. PMLR, 2022.

\bibitem[Slivkins and Foster(2022)]{slivkins2022efficient}
Aleksandrs Slivkins and Dylan Foster.
\newblock Efficient contextual bandits with knapsacks via regression.
\newblock \emph{arXiv preprint arXiv:2211.07484}, 2022.

\bibitem[Slivkins et~al.(2023)Slivkins, Sankararaman, and
  Foster]{slivkins2023contextual}
Aleksandrs Slivkins, Karthik~Abinav Sankararaman, and Dylan~J Foster.
\newblock Contextual bandits with packing and covering constraints: A modular
  lagrangian approach via regression.
\newblock In \emph{The Thirty Sixth Annual Conference on Learning Theory},
  pages 4633--4656. PMLR, 2023.

\bibitem[Vera and Banerjee(2021)]{vera2021bayesian}
Alberto Vera and Siddhartha Banerjee.
\newblock The bayesian prophet: A low-regret framework for online decision
  making.
\newblock \emph{Management Science}, 67\penalty0 (3):\penalty0 1368--1391,
  2021.

\bibitem[Vera et~al.(2021)Vera, Banerjee, and Gurvich]{vera2021online}
Alberto Vera, Siddhartha Banerjee, and Itai Gurvich.
\newblock Online allocation and pricing: Constant regret via bellman
  inequalities.
\newblock \emph{Operations Research}, 69\penalty0 (3):\penalty0 821--840, 2021.

\bibitem[Wasserman(2019)]{wasserman2019density}
Larry Wasserman.
\newblock Density estimation -- lecture note for 36-708 statistical methods for
  machine learning, {Carnegie Mellon University}, 2019.

\bibitem[Weissman et~al.(2003)Weissman, Ordentlich, Seroussi, Verdu, and
  Weinberger]{weissman2003inequalities}
Tsachy Weissman, Erik Ordentlich, Gadiel Seroussi, Sergio Verdu, and Marcelo~J
  Weinberger.
\newblock Inequalities for the l1 deviation of the empirical distribution.
\newblock \emph{Hewlett-Packard Labs, Tech. Rep}, 2003.

\bibitem[Wu et~al.(2015)Wu, Srikant, Liu, and Jiang]{wu2015algorithms}
Huasen Wu, Rayadurgam Srikant, Xin Liu, and Chong Jiang.
\newblock Algorithms with logarithmic or sublinear regret for constrained
  contextual bandits.
\newblock \emph{Advances in Neural Information Processing Systems}, 28, 2015.

\end{thebibliography}

\newpage
\appendix
\section*{\centering Appendix of  ``Contextual Decision-Making with Knapsacks \\ Beyond the Worst Case''}

We begin by outlining the structure of the whole appendix. 
In \Cref{sec:continuum}, we state our regret results with continuous randomness. 
In \Cref{sec:proof-worst-case-condition}, we prove the regret worst-case result (cf. \Cref{prop:sufficient-condition-worst-case}). 
\Cref{sec:proof-thm-performance-re-solving-EE-full-info} devotes to proving the main result (cf. \Cref{thm:performance-re-solving-EE-full-info}), where \Cref{sec:regret-decomposition} presents the Lagrangian regret decomposition, and \Cref{sec:gap-optimal-decision,sec:gap-optimal-consumption} analyze different terms in the decomposition. \Cref{sec:proof-lem-regret-decomposition,sec:proof-lem-stability,sec:proof-lem-first-two-terms-regret-decomposition,sec:proof-lem-last-two-terms-regret-decomposition,sec:proof-lem-bound-M} complement missing proofs of lemmas arising in \Cref{sec:proof-thm-performance-re-solving-EE-full-info}. \Cref{sec:proof-partial-info} focuses on proving the regret results in the partial information setting, where \Cref{sec:proof-thm-performance-re-solving-EE-partial-info} derives \Cref{thm:performance-re-solving-EE-partial-info}, and \Cref{sec:proof-lem-accessing-frequency-partial-info} proves the crucial \Cref{lem:accessing-frequency-partial-info} for the partial feedback model. \Cref{sec:proof-lem-first-two-terms-regret-decomposition-partial-info,sec:proof-lem-last-two-terms-regret-decomposition-partial-info} prove lemmas arising in previous parts of \Cref{sec:proof-partial-info}. \Cref{sec:proof-non-regularity} deals with \Cref{thm:performance-re-solving-EE-worst-case-full-info,thm:performance-re-solving-EE-worst-case-partial-info}, which show that the re-solving method is near-optimal even in worst cases. At last, \Cref{sec:details-sec-continuum} complements the missing details of \Cref{sec:continuum} with continuous randomness.

\section{From Discrete Randomness to Continuous Randomness} \label{sec:continuum}

In the main body of this work, we explicitly assume that both the context set and the external factor set are discrete. Such an assumption can suitably capture most real-life situations. For example, in an agent's online bidding problem with budget constraints, if we presume that the context is the agent's actual value and the external factor is the highest competing bid, it is natural to suppose that all these three values are discrete. Nevertheless, for theoretical completeness, we expand our results in this section to circumstances where these two sets are infinite, i.e., the two underlying randomnesses are continuous. It is imperative to note that the scenario where one randomness is discrete and the other is continuous would be analogous in analysis by incorporating the techniques presented in \Cref{sec:non-regularity}. 

Conceptually, the re-solving heuristic still works: we solve the optimization problem in each round concerning the remaining resources based on previous estimates. However, technically, since the distributions of context and external factors are continuous, we should further elaborate on the setting. In this section, we suppose that the context set $\Theta = [0, 1]^{d_u}$ and the external factor set $\Gamma = [0, 1]^{d_v}$. We denote $u(\theta)$ and $v(\gamma)$ as the density function of $\calU$ and $\calV$, respectively. We assume that $p \in \{u, v\}$ belongs to the $\beta_p$-order $L_p$-H\"older smooth class $\Sigma(\beta_p, L_p)$. Here, for the foundation, given a vector $s = (s_1,..., s_d)$, define
\[|s| = s_1 + \cdots + s_d, \quad 
D^s=\frac{\partial^{s_1+\cdots+s_d}}{\partial x_1^{s_1} \cdots \partial x_d^{s_d}}.
\]
Subsequently, for a positive integer $\beta$, the $\beta$-order $L$-H\"older smooth class is defined as
\begin{align*}
	\Sigma(\beta, L) \coloneqq &\{g:\left|D^s g(x)-D^s g(y)\right| \leq L\|x-y\|_2, \text{ for all } s \text { such that }|s|=\beta-1, \text { and all } x, y\}.
\end{align*}

Now, suppose $X_1, \cdots, X_k$ are $k$ i.i.d. samples from a distribution with density function $p \in \Sigma(\beta, L)$. According to \citet{wasserman2019density}, we have the following result, which implies that we can calculate an estimator from these samples that converges to the density function. 
\begin{proposition}[\citet{wasserman2019density}]\label{prop:concentration-estimation-continuous}
	Suppose $X_1, \cdots, X_k$ are drawn i.i.d. from a $d$-dimension distribution $\calP$, with density $p \in \Sigma(\beta, L)$ for some $L > 0$, and $k$ is sufficiently large. Then there exists an estimator $\estp_k$ such that for any $\epsilon > 0$, 
	\[
	\Pr\left[\sup_x|p(x)-\estp_k(x)|>\frac{C \sqrt{\log (k / \epsilon)}}{k^{\beta/(2 \beta+d)}}\right] \leq \epsilon,
	\]
	with $C$ a constant.
\end{proposition}

The details of constructing such a density estimator are postponed to \Cref{sec:density-estimator}. We now return to the re-solving heuristic and \Cref{alg:re-solving-EE}. In the algorithm, with continuous randomness, the constrained optimization problem to be solved in each round $\estJ(\bmrho_t, \history_t)$ for $t = 1, 2, \cdots$ becomes: 
\begin{gather*}
	\estJ(\bmrho_t, \history_t) \coloneqq \max_{\phi: \Theta \times A^+ \to \bbR}\, \int_\theta \sum_{a \in A^+} \phi(\theta, a)\int_\gamma r(\theta, a, \gamma) \estv_t(\gamma) \estu_t(\theta)\diffe \gamma \diffe \theta, \\
	\text{s.t.}\quad \int_\theta \sum_{a \in A^+} \phi(\theta, a)\int_\gamma \bmc(\theta, a, \gamma) \estv_t(\gamma) \estu_t(\theta)\diffe \gamma \diffe \theta \leq \bmrho_t, \\
	\sum_{a \in A^+} \phi(\theta, a) \leq 1, \quad \forall \theta \in \Theta, \\
	\phi(\theta, a) \geq 0, \quad \forall (\theta, a) \in \Theta \times A^+.  
\end{gather*}
Correspondingly, the reference optimization problem $J(\bmrho_t)$ is given below: 
\begin{gather*}
	J(\bmrho_t) \coloneqq \max_{\phi: \Theta \times A^+ \to \bbR}\, \int_\theta \sum_{a \in A^+} \phi(\theta, a)\int_\gamma r(\theta, a, \gamma) v(\gamma) u(\theta)\diffe \gamma \diffe \theta, \\
	\text{s.t.}\quad \int_\theta \sum_{a \in A^+} \phi(\theta, a)\int_\gamma \bmc(\theta, a, \gamma) v(\gamma) u(\theta)\diffe \gamma \diffe \theta \leq \bmrho_t, \\
	\sum_{a \in A^+} \phi(\theta, a) \leq 1, \quad \forall \theta \in \Theta, \\
	\phi(\theta, a) \geq 0, \quad \forall (\theta, a) \in \Theta \times A^+.  
\end{gather*}

At this point, it is worth mentioning that solving $\estJ(\bmrho_t, \history_t)$ in each round could be hard as it could be a continuous yet non-convex constrained optimization problem. Nevertheless, we assume the existence of an oracle that aids us in solving this optimization, and we focus on the regret of the re-solving method. Let $\poweru \coloneqq (\beta_u + d_u) / (2\beta_u + d_u)$ and $\powerv \coloneqq (\beta_v + d_v) / (2\beta_v + d_v)$, and we have the following two results, respectively, under full and partial information feedback. 
\begin{theorem}\label{thm:performance-re-solving-EE-continuum-full-info}
	Under continuous randomness, with full information feedback, the expected accumulated reward $\accumulatedReward$ brought by \Cref{alg:re-solving-EE} satisfies: 
	\begin{align*}
		\fluidOptimum - \accumulatedReward = O((T^\poweru + T^\powerv + T^{1/2})\sqrt{\log T} + n), \quad T \to \infty.
	\end{align*}
\end{theorem}

\begin{theorem}\label{thm:performance-re-solving-EE-continuum-partial-info}
	Under continuous randomness, with partial information feedback, the expected accumulated reward $\accumulatedReward$ brought by \Cref{alg:re-solving-EE} satisfies: 
	\begin{align*}
		\fluidOptimum - \accumulatedReward = O((T^\poweru + T^{1/2})\sqrt{\log T} + T^\powerv \log^{3/2 - \powerv} T + n), \quad T \to \infty.
	\end{align*}
\end{theorem}

The proofs of the above theorems are presented in \Cref{sec:proof-thm-performance-re-solving-EE-continuum-full-info,sec:proof-thm-performance-re-solving-EE-continuum-partial-info}, respectively, which almost follow the threads of \Cref{thm:performance-re-solving-EE-worst-case-full-info,thm:performance-re-solving-EE-worst-case-partial-info}. 

\section{Specifying the Worst-Case Gap -- Proof of \Cref{prop:sufficient-condition-worst-case}}\label{sec:proof-worst-case-condition}

To prove the lemma, we first introduce an intermediate value, which we denote as $\hybridOptimum$, to upper bound $\onlineOptimum$, and show that the gap between $\hybridOptimum$ and $\fluidOptimum$ is $O(\sqrt{T})$ under the given condition. Specifically, we have the following definition: 
\begin{gather}\label{eq:def-hybrid-optimum}
	\begin{gathered}
		\hybridOptimum \coloneqq \bbE_{\theta_1, \cdots, \theta_T}\left[\max_{\phi_1, \cdots, \phi_T: A^+ \to \bbR} \sum_{t = 1}^T \sum_{a \in A^+} R(\theta_t, a) \phi_t(a)\right], \\
		{\rm s.t. }\quad \sum_{t = 1}^T \sum_{a \in A^+} \bmC(\theta_t, a) \phi_t(a) \leq \bmrho T, \\
		\sum_{a \in A^+} \phi_t(a) \leq 1, \quad \forall t \in [T], \\
		\phi_t(a) \geq 0, \quad \forall (t, a) \in [T] \times A^+. 
	\end{gathered}
\end{gather}

To see that $\hybridOptimum$ gives an upper bound on $\onlineOptimum$, we fix a request trajectory $\theta_1, \cdots, \theta_T$. Now, for any non-anticipating strategy $\pi$, we let
\[p_t^\pi(a) = \Pr[a_t^\pi = a \mid \theta_1, \cdots, \theta_t]\]
be the total probability that $a_t^\pi = a$ conditioning on the pre-determined request sequence, with respect to $\gamma_1, \cdots, \gamma_{t - 1}$ and the randomness of strategy $\pi$. We show that $\{p_t^\pi\}_{t = 1, \cdots, T}$ is a feasible solution to $\hybridOptimum$ under $\theta_1, \cdots, \theta_T$. Here, a key observation is that for any $t \in [T]$: 
\begin{align*}
	\bbE\left[\bmc(\theta_t, a_t^\pi, \gamma_t) \mid \theta_1, \cdots, \theta_t\right] &= \bbE_{\gamma_t}\left[\sum_{a \in A^+}\bmc(\theta_t, a_t^\pi, \gamma_t) \cdot  \Pr[a_t^\pi = a \mid \theta_1, \cdots, \theta_t]\right] \\
	&= \sum_{a \in A^+} \bmC(\theta_t, a) p_t^\pi(a). 
\end{align*}
In the above, the first expectation is taken on $\gamma_1, \cdots, \gamma_t$ and the random choice of strategy $\pi$. Since $\sum_{t = 1}^T \bmc(\theta_t, a_t^\pi, \gamma_t) \leq \bmrho_T$ always holds, we derive that
\[\sum_{t = 1}^T \sum_{a \in A^+} \bmC(\theta_t, a) p_t^\pi(a) = \bbE\left[\sum_{t = 1}^T \bmc(\theta_t, a_t^\pi, \gamma_t) \mid \theta_1, \cdots, \theta_T\right] \leq \bmrho T, \]
which indicates that $\{p_t^\pi\}_{t = 1, \cdots, T}$ is feasible to $\hybridOptimum$ under $\theta_1, \cdots, \theta_T$. To the same reason, we also have
\[\sum_{t = 1}^T \sum_{a \in A^+} R(\theta_t, a) p_t^\pi(a) = \bbE\left[\sum_{t = 1}^T \bmr(\theta_t, a_t^\pi, \gamma_t) \mid \theta_1, \cdots, \theta_T\right] \]
equals the conditional expected reward of strategy $\pi$. Thus, since $\hybridOptimum$ is a maximization problem for any request trajectory, we conclude that $\hybridOptimum \geq \onlineOptimum$. 

It remains to show that when $\fluidOptimum$, or $J(\bmrho)$ has a unique and degenerate solution, $\fluidOptimum - \hybridOptimum = \Omega(\sqrt{T})$. We first present a transformation of $\hybridOptimum$. We let 
\[x(\theta) \coloneqq \frac{\#[\text{appearance of }\theta]}{T}\]
be the random variable indicating the frequency of $\theta$ when $\theta$ is drawn $T$ times i.i.d. from $\calU$. Obviously, the mean of $\bmx$ is $\bmu$. We now demonstrate that
\begin{gather}\label{eq:alt-hybrid-optimum}
	\begin{gathered}
		\hybridOptimum = T\cdot \bbE_{\bmx}\left[\max_{\phi: \Theta \times A^+ \to \bbR} \sum_{\theta \in \Theta} x(\theta) \sum_{a \in A^+} R(\theta, a) \phi(\theta, a)\right], \\
		{\rm s.t. }\quad \sum_{\theta \in \Theta} x(\theta) \sum_{a \in A^+} \bmC(\theta, a) \phi(\theta, a) \leq \bmrho, \\
		\sum_{a \in A^+} \phi(\theta, a) \leq 1, \quad \forall \theta \in \Theta, \\
		\phi(\theta, a) \geq 0, \quad \forall (\theta, a) \in \Theta \times A^+. 
	\end{gathered}
\end{gather}

To see this, in form \eqref{eq:def-hybrid-optimum}, it is not hard to see that conditioning on $\theta_1, \cdots, \theta_T$, the value of the optimization is only related to the number of times that any $\theta \in \Theta$ appears in the sequence, and irrelevant with their arriving order. Therefore, by taking an average, it is without loss of generality to suppose that $\phi^*_{t_1} = \phi^*_{t_2}$ as long as $\theta_{t_1} = \theta_{t_2}$. Under such an observation, it is natural that \eqref{eq:def-hybrid-optimum} is equivalent to \eqref{eq:alt-hybrid-optimum}. 

For convenience, we now recall the definition of $\fluidOptimum$: 
\begin{gather*}
	\fluidOptimum = T\cdot \max_{\phi: \Theta \times A^+ \to \bbR} \sum_{\theta \in \Theta} u(\theta) \sum_{a \in A^+} R(\theta, a) \phi(\theta, a), \\
	{\rm s.t. }\quad \sum_{\theta \in \Theta} u(\theta) \sum_{a \in A^+} \bmC(\theta, a) \phi(\theta, a) \leq \bmrho, \\
	\sum_{a \in A^+} \phi(\theta, a) \leq 1, \quad \forall \theta \in \Theta, \\
	\phi(\theta, a) \geq 0, \quad \forall (\theta, a) \in \Theta \times A^+. 
\end{gather*}

By \citet{sierksma2001linear}, we know that when $J(\bmrho)$ has a unique and degenerate solution, then its dual form has multiple solutions. We then adopt the framework of \citet{vera2021bayesian}. In particular, we let $\bmlambda \geq \bmzero$ be the dual variable vector for the resource constraints, and $\bmmu \geq \bmzero$ be the dual variable vector for the probability feasibility constraints. If we take $\omega(\theta) = \mu(\theta) / u(\theta)$, then the dual programming of $\fluidOptimum / T$ is the following as a function of $\bmu$: 
\begin{gather*}
	\calD[Z(\bmu)] = \min_{\bmlambda, \bmomega} \bmrho^\top \bmlambda + \bmu^\top \bmomega, \\
	{\rm s.t. }\quad \bmlambda^\top \bmC(\theta, a) + \omega(\theta) \geq R(\theta, a), \quad \forall (\theta, a) \in \Theta \times A^+, \\
	\bmlambda \geq \bmzero, \quad\bmomega \geq \bmzero.  
\end{gather*}

Now, suppose $(\bmlambda^1, \bmomega^1)$ and $(\bmlambda^2, \bmomega^2)$ are two different optimal solutions to $\calD[Z(\bmu)]$, which directly leads to $\bmlambda^1 \neq \bmlambda^2$ by the programming formation. We let $\bmlambda' = \bmlambda^1 - \bmlambda^2$ and $\bmomega' = \bmomega^1 - \bmomega^2$. Then, 
\begin{gather}
	\bmrho^\top \bmlambda^1 + \bmu^\top \bmomega^1 = \bmrho^\top \bmlambda^2 + \bmu^\top \bmomega^2 \implies \bmrho^\top \bmlambda' + \bmu^\top \bmomega' = 0. \label{eq:two-dual-difference-zero}
\end{gather}
Further, notice that $(\bmlambda^1, \bmomega^1)$ and $(\bmlambda^2, \bmomega^2)$ are both feasible for $\calD[Z(\bmx)]$ for any $\bmx$. Since $\calD[Z(\bmx)]$ is a minimization problem, by a convex combination, we have
\[\calD[Z(\bmx)] \leq (\bmrho^\top \bmlambda^1 + \bmx^\top \bmomega^1) \bmone[\bmrho^\top \bmlambda' + \bmx^\top \bmomega' \leq 0] + (\bmrho^\top \bmlambda^2 + \bmx^\top \bmomega^2) \bmone[\bmrho^\top \bmlambda' + \bmx^\top \bmomega' > 0]. \]

Further, by optimality, we know that for any $\bmx$, 
\[\calD[Z(\bmu)] = (\bmrho^\top \bmlambda^1 + \bmu^\top \bmomega^1) \bmone[\bmrho^\top \bmlambda' + \bmx^\top \bmomega' \leq 0] + (\bmrho^\top \bmlambda^2 + \bmu^\top \bmomega^2) \bmone[\bmrho^\top \bmlambda' + \bmx^\top \bmomega' > 0]. \]

Now, by weak duality, since $\hybridOptimum / T$ for any given $\bmx$ is a maximization problem, we know from the above two equations that 
\begin{align*}
	&\mathrel{\phantom{=}} (\fluidOptimum - \hybridOptimum) / T \\
	&\geq \calD[Z(\bmu)] - \bbE_{\bmx}\left[\calD[Z(\bmx)]\right] \\
	&\geq \bbE_{\bmx}\left[((\bmu - \bmx)^\top \bmomega^1) \bmone[\bmrho^\top \bmlambda' + \bmx^\top \bmomega' \leq 0] + ((\bmu - \bmx)^\top \bmomega^2) \bmone[\bmrho^\top \bmlambda' + \bmx^\top \bmomega' > 0]\right] \\
	&\overset{(\text a)}{=} \bbE_{\bmx}\left[((\bmu - \bmx)^\top  \bmomega^1) \bmone[(\bmu - \bmx)^\top \bmomega' \geq 0] + ((\bmu - \bmx)^\top \bmomega^2) (1 - \bmone[(\bmu - \bmx)^\top \bmomega' \geq 0])\right] \\
	&\overset{(\text b)}{=} \bbE_{\bmx}\left[((\bmu - \bmx)^\top \bmomega') \bmone[(\bmu - \bmx)^\top \bmomega' \geq 0]\right]. 
\end{align*}
Here, (a) is due to \eqref{eq:two-dual-difference-zero}, and (b) is since the mean of $\bmx$ is $\bmu$. 
Now, we let $\xi = \sqrt{T} (\bmu - \bmx)^\top \bmomega'$ be the normalized scaled variable. By the Central Limit Theorem, $\xi \bmone[\xi \geq 0]$ converges to a half-normal distribution, which has a constant expectation. Thus, we arrive at $\fluidOptimum - \hybridOptimum = \Omega(\sqrt{T})$, which finishes the proof.

\section{Missing Proofs in \Cref{sec:re-solving}}\label{app:re-solving}

\subsection{Proof of \Cref{thm:performance-re-solving-EE-full-info}} \label{sec:proof-thm-performance-re-solving-EE-full-info}

We now give a proof of \Cref{thm:performance-re-solving-EE-full-info}. The proof draws inspiration from that of \citet{chen2021improved}, but significantly diverges in terms of the problem setting.  

\subsubsection{Regret Decomposition}\label{sec:regret-decomposition}

We start by presenting a regret decomposition approach, which stands on the dual viewpoint. We first recall the optimization problem $\fluidOptimum = T\cdot J(\bmrho_1)$: 
\begin{gather*}
	J(\bmrho_1) \coloneqq \max_{\phi: \Theta \times A^+ \to \bbR}\, \bbE_{\theta \sim \mathcal U} \left[\sum_{a \in A^+} R(\theta, a)\phi(\theta, a)\right], \\
	\text{s.t.}\quad \bbE_{\theta \sim \mathcal U} \left[\sum_{a \in A^+} \bmC(\theta, a)\phi(\theta, a)\right] \leq \bmrho_1, \\
	\sum_{a \in A^+} \phi(\theta, a) \leq 1, \quad \forall \theta \in \Theta, \\
	\phi(\theta, a) \geq 0, \quad \forall (\theta, a) \in \Theta \times A^+.  
\end{gather*}
Recall that $u(\theta)$ denotes the mass function of $\calU$, then the above linear program can be expanded as
\begin{gather*}
	J(\bmrho_1) \coloneqq \max_{\phi: \Theta \times A^+ \to \bbR}\, \sum_{\theta \in \Theta, a \in A^+} u(\theta)R(\theta, a)\phi(\theta, a), \\
	\text{s.t.}\quad \sum_{\theta \in \Theta, a \in A^+} u(\theta)\bmC(\theta, a)\phi(\theta, a) \leq \bmrho_1, \\
	\sum_{a \in A^+} \phi(\theta, a) \leq 1, \quad \forall \theta \in \Theta, \\
	\phi(\theta, a) \geq 0, \quad \forall (\theta, a) \in \Theta \times A^+.  
\end{gather*}

Now let $\bmlambda \geq \bmzero$ be the dual vector for the consumption constraint and $\{\mu^*(\theta)\}_{\theta \in \Theta} \geq \bmzero$ be the dual variables for the action distribution constraint. By the strong duality of linear program, there is an optimal dual variable tuple $(\bmlambda^*, \{\mu^*(\theta)\}_{\theta \in \Theta}) \geq \bmzero$ such that: 
\begin{align}\label{eq:reform-optimum}
	\begin{aligned}
		J(\bmrho_1) &= \sum_{\theta \in \Theta, a \in A^+} \left(u(\theta) \left(R(\theta, a) - (\bmlambda^*)^\top \bmC(\theta, a)\right) - \mu^*(\theta)\right) \phi_1^*(\theta, a) + (\bmlambda^*)^\top \bmrho_1 + \sum_{\theta \in \Theta} \mu^*(\theta) \\
		&= \sum_{\theta \in \Theta, a \in A^+} u(\theta) \left(R(\theta, a) - (\bmlambda^*)^\top \bmC(\theta, a)\right) \phi_1^*(\theta, a) + (\bmlambda^*)^\top \bmrho_1. 
	\end{aligned}
\end{align} 
Here $\phi_1^*$ is the optimal solution to $J(\bmrho_1)$. With \eqref{eq:reform-optimum}, we have the following lemma for regret decomposition. 
\begin{lemma}\label{lem:regret-decomposition}
	For any stopping time $T_e \leq T_0$ adapted to the process $\{\bmB_t\}$'s, we have
	\begin{align}\label{eq:regret-decomposition}
		\begin{aligned}
			&\mathrel{\phantom{=}} \fluidOptimum - \accumulatedReward \\ &\leq \bbE\left[\sum_{t = 1}^{T_e} \sum_{\theta \in \Theta, a \in A^+} \left(u(\theta)\left(R(\theta, a) - (\bmlambda^*)^\top \bmC(\theta, a)\right) - \mu^*(\theta)\right)\left(\phi_1^*(\theta, a) - \estphi_t^*(\theta, a)\right)\right] \\
			&\enspace+ \bbE\left[\sum_{t = 1}^{T_e}\sum_{\theta \in \Theta}\mu^*(\theta)\left(1 - \sum_{a \in A^+}\estphi_t^*(\theta, a)\right)\right] \\
			&\enspace+ (\bmlambda^*)^\top \bbE\left[\bmB_{T_e + 1}\right] + \max_{\theta \in \Theta, a \in A^+} \left(R(\theta, a) - (\bmlambda^*)^\top \bmC(\theta, a)\right)\cdot \bbE\left[T - T_e\right].
		\end{aligned} 
	\end{align}
\end{lemma}

The proof of \Cref{lem:regret-decomposition} is deferred to \Cref{sec:proof-lem-regret-decomposition}. We now give a brief explanation on this result. The first two terms in \eqref{eq:regret-decomposition} depicts the gap between the choice of \Cref{alg:re-solving-EE} and the optimal decision. This is apparent for the first term. For the second term, we should notice that by complementary slackness, for each $\theta \in \Theta$,  
\[\mu^*(\theta) \cdot \left(1 - \sum_{a \in A^+} \phi_1^*(\theta, a)\right) = 0. \]
Therefore, the second term in \eqref{eq:regret-decomposition} is bounded if $\estphi_t^*$ is close to $\phi_1^*$. 

On the other hand, the last two terms are closely related to the choice of stopping time $T_e$ and the consumption behavior of \Cref{alg:re-solving-EE}. Intuitively, if $T_e$ is sufficiently close to $T$, then $\bbE[T - T_e]$ should be appropriately bounded. Nevertheless, if the algorithm spends the resources too fast, then such a sufficiently large $T_e$ would be impossible. Conversely, if the resources are consumed substantially slower than the optimal, then the term $\bbE[\bmB_{T_e + 1}]$, the remaining resources at the stopping time, would be unbounded. 

In the following, we will deal with these two parts correspondingly. A crux to the analysis is to pick a satisfying stopping time $T_e$, which we will first cover. 

\subsubsection{The Gap to Optimal Decision}\label{sec:gap-optimal-decision}

We first give a realization of the stopping time $T_e$. 
With \Cref{lem:stability} in hand, we can derive that when condition \eqref{eq:stability-condition} is met, it holds that
\begin{gather}
	\left(u(\theta)\left(R(\theta, a) - (\bmlambda^*)^\top \bmC(\theta, a)\right) - \mu^*(\theta)\right)\left(\phi_1^*(\theta, a) - \estphi_t^*(\theta, a)\right) = 0, \label{eq:trans-complementary-slackness-1} \\
	\sum_{\theta \in \Theta}\mu^*(\theta)\left(1 - \sum_{a \in A^+}\estphi_t^*(\theta, a)\right) = 0. \label{eq:trans-complementary-slackness-2}
\end{gather}

To see these, notice that by the dual feasibility of $J(\bmrho_1)$, we have $u(\theta)\left(R(\theta, a) - (\bmlambda^*)^\top \bmC(\theta, a)\right) - \mu^*(\theta) \leq 0$. When $u(\theta)\left(R(\theta, a) - (\bmlambda^*)^\top \bmC(\theta, a)\right) - \mu^*(\theta) < 0$, by primal optimality, $\phi_1^*(\theta, a) = 0$ and thus $(\theta, a)$ is non-basic for $J(\bmrho_1)$. By \Cref{lem:stability}, $(\theta, a)$ is also non-basic for $\estJ(\bmrho_t, \history_{t})$ and $\estphi_t^*(\theta, a) = 0$ holds as well. This finishes the deduction of \eqref{eq:trans-complementary-slackness-1}. A similar reasoning on binding constraints would help us achieve \eqref{eq:trans-complementary-slackness-2}, which we omit here. 

As the above goes, it is then natural for us to define $T_e$ the stopping time in our analysis as follows: 
\begin{gather}\label{eq:def-T_e}
	T_e \coloneqq \min\{T_0, \min\{t: \max\{\|\bmrho_1|_\calS - \bmrho_t|_\calS\|_\infty, \max\{\bmrho_1|_\calT - \bmrho_t|_\calT\}\} > D\} - 1\}, 
\end{gather}
where $T_0$ is the stopping time of \Cref{alg:re-solving-EE}. With the definition, we always have $\max\{\|\bmrho_1|_\calS - \bmrho_t|_\calS\|_\infty, \max\{\bmrho_1|_\calT - \bmrho_t|_\calT\}\} \leq D$ when $t \leq T_e$. What we are left is to bound the situation when $\max\{\|(u(\theta) - \estu_t(\theta))_{\theta \in \Theta}\|_\infty, \|(v(\gamma) - \estv_t(\gamma))_{\gamma \in \Gamma}\|_1\} > D$ for $1 \leq t \leq T_e$. In total, we arrive at the following result for this part, with the proof given in \Cref{sec:proof-lem-first-two-terms-regret-decomposition}: 
\begin{lemma}\label{lem:first-two-terms-regret-decomposition}
	Under \Cref{assump:unique-non-degenerate-LP}, with full information feedback, we have when $T \to \infty$: 
	\begin{gather*}
		\begin{aligned}
			&\mathrel{\phantom{=}} \bbE\left[\sum_{t = 1}^{T_e} \sum_{\theta \in \Theta, a \in A^+} \left(u(\theta)\left(R(\theta, a) - (\bmlambda^*)^\top \bmC(\theta, a)\right) - \mu^*(\theta)\right)\left(\phi_1^*(\theta, a) - \estphi_t^*(\theta, a)\right)\right] \\
			&= O\left(\frac{k}{D^2}\right), 
		\end{aligned}
		\\
		\bbE\left[\sum_{t = 1}^{T_e}\sum_{\theta \in \Theta}\mu^*(\theta)\left(1 - \sum_{a \in A^+}\estphi_t^*(\theta, a)\right)\right] = O\left(\frac{k}{D^2}\right). \\
	\end{gather*}
\end{lemma}

We are now only left to bound the last two terms in \eqref{eq:regret-decomposition}. 

\subsubsection{The Gap to Optimal Consumption}\label{sec:gap-optimal-consumption}

As presented in \eqref{eq:regret-decomposition}, we now bound the remaining two terms, respectively $\bbE[\bmB_{T_e + 1}]$ and $\bbE[T - T_e]$ for $T_e$ defined in \eqref{eq:def-T_e}. It turns out that these two terms are closely related. Due to this observation, we would first bound $(\bmlambda^*)^\top\cdot \bbE[\bmB_{T_e + 1}]$ by $\bbE[T - T_e]$, and then bound $\bbE[T - T_e]$. 

Now by the strong duality of $J(\bmrho_1)$, we know that complementary slackness holds, that is $\bmlambda^*|_\calT = \bmzero$. We therefore have
\begin{align} \label{eq:left-budget}
	\begin{aligned}
	(\bmlambda^*)^\top \bbE\left[\bmB_{T_e + 1}\right] &\leq (\bmlambda^*)^\top \bbE\left[\bmB_{T_e}\right] = (\bmlambda^*|_\calS)^\top \bbE\left[\bmB_{T_e}|_\calS\right]
	= (\bmlambda^*|_\calS)^\top \bbE\left[(T - T_e + 1) \bmrho_{T_e}|_\calS\right] \\
	&\overset{(\text a)}{\leq} n (\rhomax + D) \|\bmlambda^*\|_\infty\cdot \bbE\left[T - T_e + 1\right]. 
	\end{aligned}
\end{align}
In the above, recall that $\rhomax$ denotes the maximum coordinate of $\bmrho_1$, and $D$ is specified in \Cref{lem:stability}. Consequently, (a) is due to the definition of $T_e$ and that $\|\bmrho_1 + D\bmone\|_\infty \leq \rhomax + D$. 

We are left to bound $\bbE[T - T_e]$. Nevertheless, this part would be rather technical and involved. Therefore we defer the analysis to \Cref{sec:proof-lem-last-two-terms-regret-decomposition}, and only give the final bounds. 
\begin{lemma}\label{lem:last-two-terms-regret-decomposition}
	Under \Cref{assump:unique-non-degenerate-LP}, with full information feedback, we have when $T \to \infty$: 
	\begin{gather*}
		(\bmlambda^*)^\top \bbE\left[\bmB_{T_e + 1}\right] + \max_{\theta \in \Theta, a \in A^+} \left(R(\theta, a) - (\bmlambda^*)^\top\cdot \bmC(\theta, a)\right)\bbE\left[T - T_e\right] = O\left(\frac{n^2}{D^2}\right). \\
	\end{gather*}
\end{lemma}

Combining \Cref{lem:regret-decomposition,lem:first-two-terms-regret-decomposition,lem:last-two-terms-regret-decomposition}, we arrive at \Cref{thm:performance-re-solving-EE-full-info}.

\subsection{Proof of \Cref{lem:regret-decomposition}} \label{sec:proof-lem-regret-decomposition}

The proof is obtained by the following set of (in)equalities. 

\begin{align*}
	&\mathrel{\phantom{=}}\fluidOptimum - \accumulatedReward \\
	&= T\cdot J(\bmrho_1) - \bbE \left[\sum_{t = 1}^{T_0} r(\theta_t, a_t, \gamma_t)\right] \\
	&\overset{(\text a)}{\leq} T\cdot J(\bmrho_1) - \bbE \left[\sum_{t = 1}^{T_e} r(\theta_t, a_t, \gamma_t)\right] \\
	&\overset{(\text b)}{=} T\cdot J(\bmrho_1) - \bbE \left[\sum_{t = 1}^{T_e} \sum_{\theta \in \Theta, a \in A^+} u(\theta)R(\theta, a)\estphi_t^*(\theta, a) \right] \\
	&\overset{(\text c)}{=} T\cdot \left(\sum_{\theta \in \Theta, a \in A^+} \left(u(\theta)(R(\theta, a) - (\bmlambda^*)^\top \bmC(\theta, a)) - \mu^*(\theta)\right)\phi_1^*(\theta, a) + (\bmlambda^*)^\top \bmrho_1 + \sum_{\theta \in \Theta} \mu^*(\theta)\right) \\
	&\enspace - \bbE \left[\sum_{t = 1}^{T_e} \sum_{\theta \in \Theta, a \in A^+} u(\theta)R(\theta, a)\estphi_t^*(\theta, a) \right] \\
	&\overset{(\text d)}{=} \bbE\left[\sum_{t = 1}^{T_e} \sum_{\theta \in \Theta, a \in A^+} \left(u(\theta)\left(R(\theta, a) - (\bmlambda^*)^\top \bmC(\theta, a)\right) - \mu^*(\theta)\right)\left(\phi_1^*(\theta, a) - \estphi_t^*(\theta, a)\right)\right] \\
	&\enspace+ \bbE\left[\sum_{t = 1}^{T_e}\sum_{\theta \in \Theta}\mu^*(\theta)\left(1 - \sum_{a \in A^+}\estphi_t^*(\theta, a)\right)\right] + \left(\sum_{\theta \in \Theta^*}\mu^*(\theta)\left(1 - \sum_{a \in A^+} \phi_1^*(\theta, a)\right)\right)\cdot \bbE\left[T - T_e\right] \\
	&\enspace+ (\bmlambda^*)^\top \bbE \left[T\bmrho_1 - \sum_{t = 1}^{T_e} \sum_{\theta \in \Theta, a \in A^+}u(\theta) \bmC(\theta, a) \estphi_t^*(\theta, a)\right] \\	
	&\enspace+ \left(\sum_{\theta \in \Theta, a \in A^+} \left(u(\theta)(R(\theta, a) - (\bmlambda^*)^\top \bmC(\theta, a))\right)\phi_1^*(\theta, a)\right)\cdot \bbE\left[T - T_e\right] \\
	&\overset{(\text e)}{\leq} \bbE\left[\sum_{t = 1}^{T_e} \sum_{\theta \in \Theta, a \in A^+} \left(u(\theta)\left(R(\theta, a) - (\bmlambda^*)^\top \bmC(\theta, a)\right) - \mu^*(\theta)\right)\left(\phi_1^*(\theta, a) - \estphi_t^*(\theta, a)\right)\right] \\
	&\enspace+ \bbE\left[\sum_{t = 1}^{T_e}\sum_{\theta \in \Theta}\mu^*(\theta)\left(1 - \sum_{a \in A^+}\estphi_t^*(\theta, a)\right)\right] \\
	&\enspace+ (\bmlambda^*)^\top \bbE\left[\bmB_{T_e + 1}\right] + \max_{\theta \in \Theta, a \in A^+} \left(R(\theta, a) - (\bmlambda^*)^\top \bmC(\theta, a)\right)\cdot \bbE\left[T - T_e\right].
\end{align*}
In the above set of derivations, (a) holds since $T_0 \geq T_e$, (b) is due to Optional Stopping Theorem since $T_e$ is a stopping time, (c) is by the strong duality of $J(\bmrho_1)$ as given by \eqref{eq:reform-optimum}, (d) establishes by rearranging terms. At last, for (e), the diminishing term is by strong duality, the transformation from the fourth term in (d) to the third term in (e) is derived by another application of Optional Stopping Theorem on the accumulated consumption vector, and for the last term, the upper bound is achieved since $\sum_{a \in A^+} \phi_1^*(\theta, a) \leq 1$ for any $\theta \in \Theta$ and $\sum_{\theta \in \Theta} u(\theta) = 1$. 

\subsection{Proof of \Cref{lem:stability}} \label{sec:proof-lem-stability}

We will apply the stability result in \citet{chen2021improved} as an intermediate to prove our version. As given, we know that $J(\bmrho_1)$ and $\estJ(\bmrho_t, \history_{t})$ has the same set of basic/non-basic variables and binding/non-binding constraints as long as the following conditions hold for some constant $D_0 > 0$: 
\begin{gather}\label{eq:chen-stability-condition}
	\begin{gathered}
		\left\|\left(u(\theta)\sum_{\gamma} v(\gamma)r(\theta, a, \gamma) - \estu_t(\theta)\sum_{\gamma} \estv_t(\gamma)r(\theta, a, \gamma)\right)_{(\theta, a) \in \Theta \times A^+}\right\|_\infty \leq D_0, \\
		\left\|\left(u(\theta)\sum_{\gamma} v(\gamma)\bmc^i(\theta, a, \gamma) - \estu_t(\theta)\sum_{\gamma} \estv_t(\gamma)\bmc^i(\theta, a, \gamma)\right)_{(\theta, a) \in \Theta \times A^+}\right\|_\infty \leq D_0, \quad \forall i \in [n],  \\
		\left\|\bmrho_1|_\calS - \bmrho_t|_\calS\right\|_\infty \leq D_0, \quad \max\left\{\bmrho_1|_\calT - \bmrho_t|_\calT\right\} \leq D_0. 
	\end{gathered}
\end{gather}

Now, by a standard insertion technique, we have
\begin{align}
	&\mathrel{\phantom{=}} u(\theta)\sum_{\gamma} v(\gamma)r(\theta, a, \gamma) - \estu_t(\theta)\sum_{\gamma} \estv_t(\gamma)r(\theta, a, \gamma) \notag \\
	&= (u(\theta) - \estu_t(\theta))\sum_{\gamma} v(\gamma)r(\theta, a, \gamma) + \estu_t(\theta)\sum_{\gamma} (v(\gamma) - \estv_t(\gamma))r(\theta, a, \gamma) \notag \\
	&\overset{(\text a)}{\leq} \|(u(\theta) - \estu_t(\theta))_{\theta \in \Theta}\|_\infty + \|(v(\gamma) - \estv_t(\gamma))_{\gamma \in \Gamma}\|_1.  \label{eq:difference-by-insertion-r}
\end{align}

For (a), the first term is bounded since $r(\theta, a, \gamma) \leq 1$ and $\sum_{\gamma} v(\gamma) = 1$. The second term is similarly bounded as $\estu_t(\theta) \leq 1$. Therefore, we let $D = D_0 / 2$, then when we have
\[\|(u(\theta) - \estu_t(\theta))_{\theta \in \Theta}\|_\infty \leq D, \quad \|(v(\gamma) - \estv_t(\gamma))_{\gamma \in \Gamma}\|_1 \leq D, \]
the first condition in \eqref{eq:chen-stability-condition} is met. An almost identical reasoning also holds for the second condition in \eqref{eq:chen-stability-condition}. Consequently we finish the proof of the lemma. 

\subsection{Proof of \Cref{lem:first-two-terms-regret-decomposition}}\label{sec:proof-lem-first-two-terms-regret-decomposition}

Recall that we are going to prove that 
\begin{gather*}
	\begin{aligned}
		&\mathrel{\phantom{=}} \bbE\left[\sum_{t = 1}^{T_e} \sum_{\theta \in \Theta, a \in A^+} \left(u(\theta)\left(R(\theta, a) - (\bmlambda^*)^\top \bmC(\theta, a)\right) - \mu^*(\theta)\right)\left(\phi_1^*(\theta, a) - \estphi_t^*(\theta, a)\right)\right] \\
		&= O\left(\frac{k}{D^2}\right), 
	\end{aligned}
	\\
	\bbE\left[\sum_{t = 1}^{T_e}\sum_{\theta \in \Theta}\mu^*(\theta)\left(1 - \sum_{a \in A^+}\estphi_t^*(\theta, a)\right)\right] = O\left(\frac{k}{D^2}\right), \\
\end{gather*}

when $T \to \infty$ under \Cref{assump:unique-non-degenerate-LP}. For simplicity, we give the following abbreviations: 
\begin{gather*}
	P_t \coloneqq \sum_{\theta \in \Theta, a \in a^+}\left(u(\theta)\left(R(\theta, a) - (\bmlambda^*)^\top \bmC(\theta, a)\right) - \mu^*(\theta)\right)\left(\phi_1^*(\theta, a) - \estphi_t^*(\theta, a)\right), \\
	Q_t \coloneqq \sum_{\theta \in \Theta}\mu^*(\theta)\left(1 - \sum_{a \in A^+}\estphi_t^*(\theta, a)\right), \\
	\calE_{u, t} \coloneqq [\|(u(\theta) - \estu_t(\theta))_{\theta \in \Theta}\|_\infty \leq D], \quad \calE_{v, t} \coloneqq [\|(v(\gamma) - \estv_t(\gamma))_{\gamma \in \Gamma}\|_1 \leq D]. 
\end{gather*}

On this end, we first utilize \Cref{lem:stability} to show that when condition \eqref{eq:stability-condition} holds, we have
\begin{gather*}
	P_t = Q_t = 0. 
\end{gather*}

Specifically, for $P_t$, by the dual feasibility of $J(\bmrho_1)$, we have $u(\theta)\left(R(\theta, a) - (\bmlambda^*)^\top \bmC(\theta, a)\right) - \mu^*(\theta) \leq 0$. When $u(\theta)\left(R(\theta, a) - (\bmlambda^*)^\top \bmC(\theta, a)\right) - \mu^*(\theta) < 0$, by primal optimality, $\phi_1^*(\theta, a) = 0$ and thus $(\theta, a)$ is non-basic for $J(\bmrho_1)$. By \Cref{lem:stability}, $(\theta, a)$ is also non-basic for $\estJ(\bmrho_t, \history_{t})$ and $\estphi_t^*(\theta, a) = 0$ holds as well. In conjunction with the case that $u(\theta)\left(R(\theta, a) - (\bmlambda^*)^\top \bmC(\theta, a)\right) - \mu^*(\theta) = 0$, we obtain that $P_t = 0$. 

For $Q_t$, notice that we have $\mu^*(\theta) \geq 0$ for any $\theta \in \Theta$. The case that $\mu^*(\theta) = 0$, again, does not contribute to the total sum. When $\mu^*(\theta) > 0$, by complementary slackness, $\sum_{a \in A^+} \phi_1^*(\theta, a) = 1$, i.e., $\theta$ is a binding constraint for $J(\bmrho_1)$. This, by \Cref{lem:stability}, implies that $\theta$ is also binding for $\estJ(\bmrho_t, \history_{t})$, which shows that the second term is also zero. 

With the above, it remains to consider the situation that condition \eqref{eq:stability-condition} does not hold when $t \leq T_e$, or in other words, $\calE_{u, t} \wedge \calE_{v, t}$ does not hold. Note that $P_t \leq 1$ and $Q_t \leq 1$ always hold. Thus, we only need to bound the probability that $\neg(\calE_{u, t} \wedge \calE_{v, t})$. By a union bound, we have
\[\Pr[\neg(\calE_{u, t} \wedge \calE_{v, t})] = \Pr[\neg\calE_{u, t} \vee \neg\calE_{v, t}] \leq \Pr[\neg\calE_{u, t}] + \Pr[\neg\calE_{v, t}]. \]

For the first term above, we apply the Hoeffding's inequality and a union bound to derive that
\begin{align*}
	\Pr[\neg \calE_{u, t}] = \Pr[\|(u(\theta) - \estu_t(\theta))_{\theta \in \Theta}\|_\infty > D] \leq 2k\exp\left(-2D^2(t - 1)\right). 
\end{align*}
Whereas for the second term, we use the concentration result in \citet{weissman2003inequalities} to derive that
\begin{align*}
	\Pr[\neg \calE_{v, t}] = \Pr[\|(v(\gamma) - \estv_t(\gamma))_{\gamma \in \Gamma}\|_1 > D] \leq \left(2^{|\Gamma|} - 2\right)\exp\left(-D^2(t - 1) / 2\right). 
\end{align*}

Synthesizing the above all, we have
\begin{align}
	\bbE[P_t] &= \bbE[P_t \mid \calE_{u, t} \wedge \calE_{v, t}] \cdot \Pr[\calE_{u, t} \wedge \calE_{v, t}] + \bbE[P_t \mid \neg(\calE_{u, t} \wedge \calE_{v, t})] \cdot \Pr[\neg(\calE_{u, t} \wedge \calE_{v, t})] \notag \\
	&\leq 0 + 1\cdot \Pr[\neg(\calE_{u, t} \wedge \calE_{v, t})] \notag \\ 
	&\leq 2k\exp\left(-2D^2(t - 1)\right) + \left(2^{|\Gamma|} - 2\right)\exp\left(-D^2(t - 1) / 2\right), \label{eq:bound-expectation-P_t} \\
	\bbE[Q_t] &\leq 2k\exp\left(-2D^2(t - 1)\right) + \left(2^{|\Gamma|} - 2\right)\exp\left(-D^2(t - 1) / 2\right). \label{eq:bound-expectation-Q_t}
\end{align}

Summing \eqref{eq:bound-expectation-P_t} and \eqref{eq:bound-expectation-Q_t} from $1$ to $T_e$, we achieve that 
\begin{align*}
	&\mathrel{\phantom{=}} \left\{\bbE\left[\sum_{t = 1}^{T_e} P_t\right], \bbE\left[\sum_{t = 1}^{T_e} Q_t\right] \right\} \\
    &\leq \sum_{t = 1}^T \left(2k\exp\left(-2D^2(t - 1)\right) + \left(2^{|\Gamma|} - 2\right)\exp\left(-D^2(t - 1) / 2\right)\right) \\
	&\leq \frac{2k}{1 - \exp\left(-2D^2\right)} + \frac{2^{|\Gamma|} - 2}{1 - \exp\left(-D^2 / 2\right)} = O\left(\frac{k}{D^2}\right), 
\end{align*}
which conclude the proof of the lemma. 

\subsection{Proof of \Cref{lem:last-two-terms-regret-decomposition}}\label{sec:proof-lem-last-two-terms-regret-decomposition}

As implied by \eqref{eq:left-budget}, the proof of this lemma reduces to bound $\bbE[T - T_e]$, i.e., showing that $T_e$ is sufficiently close to $T$. On this side, we first recall the definition of $T_e$ in \eqref{eq:def-T_e}:  
\begin{gather*}
	T_e \coloneqq \min\{T_0, \min\{t: \max\{\|\bmrho_1|_\calS - \bmrho_t|_\calS\|_\infty, \max\{\bmrho_1|_\calT - \bmrho_t|_\calT\}\} > D\} - 1\}, 
\end{gather*}
where $T_0$ is the stopping time of \Cref{alg:re-solving-EE}, and $\calS$ and $\calT$ correspondingly represent the set of binding/non-binding resource constraints in LP $J(\bmrho_1)$. For simplicity, we define 
\[\calN(\bmrho_1, D, \calS) \coloneqq \{\bmkappa: \max\{\|\bmrho_1|_\calS - \bmkappa|_\calS\|_\infty, \max\{\bmrho_1|_\calT - \bmkappa|_\calT\}\} \leq D\}. \]

It is without loss of generality to suppose that $D < \rhomin$. We let 
\begin{gather*}
	T_D \coloneqq \min\{t: \bmrho_t \notin \calN(\bmrho_1, D, \calS)\} - 1, \quad
	T_- = \lfloor T + 1 - 1 / (\rhomin - D) \rfloor. 
\end{gather*}
We show that if $t \leq T_-$ and $t \leq T_D$, then $t \leq T_e$. In fact, under the condition, we derive that
\[\bmB_t \geq (T - t + 1) (\bmrho_1 - D\bmone) \geq \frac{1}{\rhomin - D} (\bmrho_1 - D\bmone) \geq \bmone, \]
which implies that $t \leq T_0$, and therefore $t \leq T_e$. As a result, we have 
\begin{align}
	\bbE \left[T_e\right] &= \sum_{t = 1}^T \Pr\left[T_e \geq t\right] \geq \sum_{t = 1}^{T_-} \Pr\left[T_e \geq t\right] \geq \sum_{t = 1}^{T_-} \Pr\left[T_D \geq t\right] = T_- - \sum_{t = 1}^{T_-} \Pr\left[t > T_D\right]. \label{eq:expectation-T_e}
\end{align}

Before we continue to bound \eqref{eq:expectation-T_e}, we first give an observation on the dynamics of $\bmrho_t$. By the update process of the budget, we have for any $t \geq 1$, 
\begin{align*}
	\bmB_{t + 1} = \bmB_{t} - \bmc_{t} &\implies \bmrho_{t + 1}(T - t) = \bmrho_{t}(T - t + 1) - \bmc_{t} \\
	&\implies \bmrho_{t + 1} = \bmrho_{t} + \frac{\bmrho_{t} - \bmc_{t}}{T - t}.  
\end{align*}
Now let
\begin{align*}
	\estimationErrorConsumptionRound_t \coloneqq \frac{\bmrho_t - \bbE_{\theta \sim \calU} \left[\sum_{a \in A^+}\estphi^*_t(\theta, a)\bmC(\theta, a)\right]}{T - t}, \quad
	\expectationDifferenceConsumptionRound_t \coloneqq \frac{\bbE_{\theta \sim \calU} \left[\sum_{a \in A^+}\estphi^*_t(\theta, a)\bmC(\theta, a)\right] - \bmc_t}{T - t}.
\end{align*}
We then have
\begin{equation}
	\bmrho_{t + 1} - \bmrho_{t} = \frac{\bmrho_{t} - \bmc_{t}}{T - t} = \estimationErrorConsumptionRound_t + \expectationDifferenceConsumptionRound_t. \label{eq:rho-recurrence}
\end{equation}

We now define an auxiliary process that benefits the analysis. Specifically, for $t \in [T]$, let 
\[\tilde{\bmrho}_t \coloneqq 
\begin{cases}
	\bmrho_t, & t \leq T_D; \\
	\bmrho_{T_D}, & t > T_D.
\end{cases}\]
Therefore, 
\[\tilde{\bmrho}_{t + 1} - \tilde{\bmrho}_{t} = 
\begin{cases}
	\estimationErrorConsumptionRound_t + \expectationDifferenceConsumptionRound_t, & t \leq T_D; \\
	0, & t > T_D.
\end{cases}\]

We further define the following two auxiliary variables for $t \in [T]$: 
\begin{gather*}
	\auxEstimationErrorConsumptionRound_t \coloneqq 
	\begin{cases}
		\estimationErrorConsumptionRound_t, & t \leq T_D; \\
		0, & t > T_D.
	\end{cases}, \quad
	\auxExpectationDifferenceConsumptionRound_t \coloneqq 
	\begin{cases}
		\expectationDifferenceConsumptionRound_t, & t \leq T_D; \\
		0, & t > T_D.
	\end{cases}
\end{gather*}
As a result, we have \[\tilde{\bmrho}_{t + 1} - \tilde{\bmrho}_{t} = \auxEstimationErrorConsumptionRound_t  + \auxExpectationDifferenceConsumptionRound_t. \]

Now we come back to \eqref{eq:expectation-T_e}. Notice that 
\begin{align}
	&\mathrel{\phantom{=}} \Pr\left[t > T_D\right] \\
	&= \Pr\left[\bmrho_s \notin \calN(\bmrho_1, D, \calS) \text { for some } s \leq t\right] 
	= \Pr\left[\tilde{\bmrho_t} \notin \calN(\bmrho_1, D, \calS)\right] \notag \\
	&\leq \Pr \left[\left\|\sum_{\tau = 1}^{t - 1} \left(\auxEstimationErrorConsumptionRound_{\tau} + \auxExpectationDifferenceConsumptionRound_{\tau} \right)\big|_\calS \right\|_\infty > D \text{ or } \min \sum_{\tau = 1}^{t - 1} \left(\auxEstimationErrorConsumptionRound_{\tau} + \auxExpectationDifferenceConsumptionRound_{\tau} \right)\big|_\calT < -D \right] \notag \\
	&\leq \Pr \left[\left\|\sum_{\tau = 1}^{t - 1} \auxEstimationErrorConsumptionRound_{\tau}\big|_\calS \right\|_\infty > D / 2 \text{ or } \min \sum_{\tau = 1}^{t - 1} \auxEstimationErrorConsumptionRound_{\tau}\big|_\calT < -D / 2 \right] + \Pr \left[\left\|\sum_{\tau = 1}^{t - 1} \auxExpectationDifferenceConsumptionRound_{\tau}\right\|_\infty \geq D / 2\right].   \label{eq:probability-T_D}
\end{align}

For the second term in \eqref{eq:probability-T_D}, we observe that each entry of $\{\sum_{\tau < t} \auxExpectationDifferenceConsumptionRound_{\tau}\}_t$ is a martingale with the absolute value of the $\tau$-th increment bounded by $1 / (T - \tau)$. Since 
\[\sum_{\tau = 1}^{t - 1} \frac{1}{(T - \tau)^2} \leq \frac{1}{T - t}, \] 
by applying the Azuma--Hoeffding inequality and a union bound, we achieve that 
\[\Pr \left[\left\|\sum_{\tau = 1}^{t - 1} \auxExpectationDifferenceConsumptionRound_{\tau} \right\|_\infty \geq D / 2 \right] \leq 2n\exp \left(-\frac{(T - t)D^2}{8}\right). \]

We now come back to the first term in \eqref{eq:probability-T_D}, for any $\{D_1, \cdots, D_{t - 1}\}$ such that $\sum_{\tau = 1}^{t - 1} {D_\tau}/(T - \tau) \leq D / 2$, we have
\begin{align*}
	&\mathrel{\phantom{\implies}}\left\{\left\|\sum_{\tau = 1}^{t - 1} \auxEstimationErrorConsumptionRound_{\tau}\big|_\calS \right\|_\infty > D / 2 \text{ or } \min \sum_{\tau = 1}^{t - 1} \auxEstimationErrorConsumptionRound_{\tau}\big|_\calT < -D / 2\right\} \\
	&\implies \left\{\left\|\auxEstimationErrorConsumptionRound_{\tau}\big|_\calS \right\|_\infty > \frac{D_\tau}{T - \tau} \text{ or } \min \auxEstimationErrorConsumptionRound_{\tau}\big|_\calT < -\frac{D_\tau}{T - \tau}\right\} \text{ for some } \tau \in [T - 1].
\end{align*}
We now define
\begin{align*}
	\calE_\tau(D_\tau)\coloneqq \left(\left\|\estimationErrorConsumptionRound_\tau\big|_\calS\right\|_\infty \leq \frac{D_\tau}{T - \tau}\right)  \wedge \left(\min \estimationErrorConsumptionRound_\tau\big|_\calT \geq -\frac{D_\tau}{T - \tau}\right) \text{ holds for } \forall \bmrho_\tau \in \calN(\bmrho_1, D, \calS). 
\end{align*}
Since $\auxEstimationErrorConsumptionRound_\tau \neq 0$ only when $t \leq T_D$, i.e., $\bmrho_t \in \calN(\bmrho_1, D, \calS)$, by the definition of $\calE_\tau(D_\tau)$, we have the following claim: 
\begin{gather}
	\left\{\left\|\sum_{\tau = 1}^{t - 1} \auxEstimationErrorConsumptionRound_{\tau}\big|_\calS \right\|_\infty > D / 2 \text{ or } \min \sum_{\tau = 1}^{t - 1} \auxEstimationErrorConsumptionRound_{\tau}\big|_\calT < -D / 2\right\} 
	\subseteq \bigcup_{\tau = 1}^{t - 1} \neg \calE_\tau(D_\tau), \quad \forall \sum_{\tau = 1}^{t - 1} \frac{D_\tau}{T - \tau} \leq D / 2. \label{eq:event-decomposition-M}
\end{gather}

Thus, we forward to bound $\Pr[\neg \calE_\tau(D_\tau)]$ for a suitable choice of $\{D_\tau\}_{1 \leq \tau \leq T}$. Recall that we have defined events $\calE_{u, \tau}$ and $\calE_{v, \tau}$ as follows: 

\[\calE_{u, \tau} \coloneqq [\|(u(\theta) - \estu_\tau(\theta))_{\theta \in \Theta}\|_\infty \leq D], \quad \calE_{v, \tau} \coloneqq [\|(v(\gamma) - \estv_\tau(\gamma))_{\gamma \in \Gamma}\|_1 \leq D]. \]

We have the following lemma, which we are going to prove in \Cref{sec:proof-lem-bound-M}: 
\begin{lemma}\label{lem:bound-M}
	When $\bmrho_\tau \in \calN(\bmrho_1, D, \calS)$ and $\calE_{u, \tau} \wedge \calE_{v, \tau}$ hold, 
	\begin{align*}
		(T - \tau)\left\|\estimationErrorConsumptionRound_\tau\big|_\calS\right\|_\infty &\leq \|(u(\theta) - \estu_\tau(\theta))_{\theta \in \Theta}\|_1 + \|(v(\gamma) - \estv_\tau(\gamma))_{\gamma \in \Gamma}\|_1, \\
		 (T - \tau)\min\estimationErrorConsumptionRound_\tau\big|_\calT &\geq -\|(u(\theta) - \estu_\tau(\theta))_{\theta \in \Theta}\|_1 - \|(v(\gamma) - \estv_\tau(\gamma))_{\gamma \in \Gamma}\|_1. \\
	\end{align*}
\end{lemma}

Further, it is clear that $(T - \tau)\left\|\estimationErrorConsumptionRound_\tau\big|_\calS\right\|_\infty \leq 1$ and $(T - \tau)\estimationErrorConsumptionRound_\tau\big|_\calT \geq -1$ holds. Inspired by the above observations, we let
the series of $D_1, \cdots, D_{T - 1}$ be the following form:
\begin{numcases}
	{D_\tau = }
	1, & $\tau \leq \eta T$; \notag \\
	(\tau - 1)^{-1/4}, & $\tau > \eta T$, \notag
\end{numcases}
where $\eta \in (0, 1)$ is a constant to be specified. We need to satisfy the following constraints: 
\[\sum_{t = 1}^{T - 1} \frac{D_t}{T - t} \leq D / 2, \quad (\eta T)^{-1/4} < D. \]
Here, the first constraint is instructed by \eqref{eq:event-decomposition-M}, and the second is to guarantee that when $\|(u(\theta) - \estu_\tau(\theta))_{\theta \in \Theta}\|_1 + \|(v(\gamma) - \estv_\tau(\gamma))_{\gamma \in \Gamma}\|_1 < (\tau - 1)^{-1/4}$ for $\tau > \eta T$, $\calE_{u, \tau} \wedge \calE_{v, \tau}$ naturally holds, and therefore we can apply \Cref{lem:bound-M}. For the first one, we notice that
\begin{align*}
	\sum_{\tau = 1}^{T - 1} \frac{D_\tau}{T - \tau} = \sum_{\tau = 1}^{\eta T} \frac{1}{T - \tau} + \sum_{\tau = \eta T + 1}^{T - 1} \frac{1}{(T - \tau)(\tau - 1)^{1/4}} 
	\leq \log \frac{T - 1}{(1 - \eta)T - 1} + \frac{\log T}{(\eta T)^{1/4}}. 
\end{align*}
Therefore, for some $\eta$ such that $\log (1 - \eta) \geq -D / 4$, $\sum_{t = 1}^{T}D_t / (T - t) \leq D / 2$ establishes for sufficiently large $T \gg 1$, and the second constraint is also satisfied. 

We are now prepared to bound $\Pr[\neg\calE_\tau(D_\tau)]$ for the $\{D_\tau\}$ we just proposed. To start with, when $\tau \leq \eta T$, $\calE_\tau(D_\tau)$ always holds, thus $\Pr[\neg\calE_\tau(D_\tau)] = 0$. When $\tau > \eta T$, since $\tau^{-1/4} / 2 < D$, by Hoeffding's inequality and union bound, we have
\begin{align*}
	&\mathrel{\phantom{=}}\Pr[\neg\calE_\tau(D_\tau)] \\ &\overset{(\text a)}{\leq} \Pr\left[\|(u(\theta) - \estu_\tau(\theta))_{\theta \in \Theta}\|_1 \leq (\tau - 1)^{-1/4} / 2\right] + \Pr\left[\|(v(\gamma) - \estv_\tau(\gamma))_{\gamma \in \Gamma}\|_1 \leq (\tau - 1)^{-1/4} / 2\right] \\
	&\leq 2k\exp\left(-\frac{(\tau - 1)^{1/2}}{8k^2}\right) + 2|\Gamma|\exp\left(-\frac{(\tau - 1)^{1/2}}{8|\Gamma|^2}\right). 
\end{align*}
Here, (a) is by \Cref{lem:bound-M} and a union bound. 
Therefore, according to \eqref{eq:event-decomposition-M}, we have
\begin{align*}
	\Pr \left[\left\|\sum_{\tau = 1}^{t - 1} \auxEstimationErrorConsumptionRound_{\tau}\big|_\calS \right\|_\infty > D / 2 \text{ or } \min \sum_{\tau = 1}^{t - 1} \auxEstimationErrorConsumptionRound_{\tau}\big|_\calT < -D / 2 \right] \leq \sum_{\tau = 1}^{t - 1} \Pr[\neg\calE_\tau(D_\tau)], 
\end{align*}
and therefore, 
\begin{numcases}{\Pr \left[\left\|\sum_{\tau = 1}^{t - 1} \auxEstimationErrorConsumptionRound_{\tau}\big|_\calS \right\|_\infty > D / 2 \text{ or } \min \sum_{\tau = 1}^{t - 1} \auxEstimationErrorConsumptionRound_{\tau}\big|_\calT < -D / 2 \right] \leq}
		0, & $t \leq \eta T + 1$; \notag \\
		\sum_{\tau = \eta T + 1}^{t - 1} \exp \left\{-\tau^{1/2}\right\}, & $t > \eta T + 1$. \notag
\end{numcases}

Plugging the into \eqref{eq:probability-T_D} and \eqref{eq:expectation-T_e}, we obtain that when $T \to \infty$, 
\begin{align*}
	&\mathrel{\phantom{=}} \bbE\left[T - T_e\right] \\ &\leq T - T_- \\
	&\enspace+ \sum_{t = 1}^{T_-} \left(\Pr \left[\left\|\sum_{\tau = 1}^{t - 1} \auxEstimationErrorConsumptionRound_{\tau}\big|_\calS \right\|_\infty > D / 2 \text{ or } \min \sum_{\tau = 1}^{t - 1} \auxEstimationErrorConsumptionRound_{\tau}\big|_\calT < -D / 2 \right] + 2n\exp \left(-\frac{(T - t)D^2}{8}\right)\right) \notag \\
	&\leq \frac{1}{\rhomin - D} + 2n(1 - \exp(-D^2 / 8))^{-1} + O(T^2) \exp\left(-T^{1/2}\right) = O\left(\frac{n}{D^2}\right).  
\end{align*}

At last, combining with \eqref{eq:left-budget}, we finally finish the proof of \Cref{lem:last-two-terms-regret-decomposition}. 

\subsection{Proof of \Cref{lem:bound-M}} \label{sec:proof-lem-bound-M}

To start with, we notice that
\[(T - \tau) \estimationErrorConsumptionRound_\tau = \bmrho_\tau - \bbE_{\theta \sim \calU} \left[\sum_{a \in A^+}\estphi^*_\tau(\theta, a)\bmC(\theta, a)\right]. \]
Now, notice that $\bmrho_\tau \in \calN(\bmrho_1, D, \calS)$ and $\calE_{u, \tau} \wedge \calE_{v, \tau}$ are the condition of \Cref{lem:stability}, therefore, the set of resource binding constraints of $\estJ(\bmrho_t, \history_{t})$ are identical to that of $J(\bmrho_1)$, i.e., $\calS$. Hence, for any $i \in [n]$,
\begin{align*}
	&\mathrel{\phantom{=}} \bmrho_\tau^i|_\calS - \bbE_{\theta \sim \calU} \left[\sum_{a \in A^+}\estphi^*_\tau(\theta, a)\bmC^i(\theta, a)|_\calS\right] \\
	&= \sum_{\theta \in \Theta, a \in A^+} \estu_\tau(\theta)\estphi_\tau^*(\theta, a) \sum_{\gamma} \estv_\tau(\gamma) \bmc^i(\theta, a, \gamma)|_\calS - \sum_{\theta \in \Theta, a \in A^+} u(\theta)\estphi_\tau^*(\theta, a) \sum_{\gamma} v(\gamma) \bmc^i(\theta, a, \gamma)|_\calS \\
	&= \sum_{\theta \in \Theta, a \in A^+} (u(\theta) - \estu_\tau(\theta))\estphi_\tau^*(\theta, a) \sum_{\gamma} v(\gamma) \bmc^i(\theta, a, \gamma)|_\calS \\
	&\enspace + \sum_{\theta \in \Theta, a \in A^+} \estu_\tau(\theta)\estphi_\tau^*(\theta, a) \sum_{\gamma} (\estv_\tau(\gamma) - v(\gamma)) \bmc^i(\theta, a, \gamma)|_\calS \\
	&\overset{(\text a)}{\leq} \|(u(\theta) - \estu_\tau(\theta))_{\theta \in \Theta}\|_1 + \|(v(\gamma) - \estv_\tau(\gamma))_{\gamma \in \Gamma}\|_1. 
\end{align*}
Here, the bound on the first term in (a) establishes because for any $\theta \in \Theta$, 
\[\sum_{a \in A^+} \estphi_\tau^*(\theta, a) \sum_{\gamma} v(\gamma) \bmc^i(\theta, a, \gamma)|_\calS \leq 1 \]
since $\sum_{a \in A^+} \estphi_\tau^*(\theta, a) \leq 1$. The bound on the second term is similar. Thus, we achieve the result for binding constraints. The proof for non-binding constraints resembles the above by noticing that 
\[\bmrho_\tau|\calT \geq \sum_{\theta \in \Theta, a \in A^+} \estu_\tau(\theta)\estphi_\tau^*(\theta, a) \sum_{\gamma} \estv_\tau(\gamma) \bmc(\theta, a, \gamma)|_\calT. \]

\section{Missing Proofs in \Cref{sec:partial-info}}\label{sec:proof-partial-info}

\subsection{Proof of \Cref{thm:performance-re-solving-EE-partial-info}} \label{sec:proof-thm-performance-re-solving-EE-partial-info}

With \Cref{lem:accessing-frequency-partial-info} in hand, we now show how to derive \Cref{thm:performance-re-solving-EE-partial-info}. Specifically, the regret decomposition technique in \Cref{lem:regret-decomposition} still works fine. We only need to re-derive corresponding results for \Cref{lem:first-two-terms-regret-decomposition,lem:last-two-terms-regret-decomposition}. We have the following results on this side, which are proved respectively in \Cref{sec:proof-lem-first-two-terms-regret-decomposition-partial-info,sec:proof-lem-last-two-terms-regret-decomposition-partial-info}. 

\begin{lemma}\label{lem:first-two-terms-regret-decomposition-partial-info}
	Under \Cref{assump:unique-non-degenerate-LP}, with partial information feedback, we have when $T \to \infty$: 
	\begin{gather*}
		\begin{aligned}
			&\mathrel{\phantom{=}} \bbE\left[\sum_{t = 1}^{T_e} \sum_{\theta \in \Theta, a \in A^+} \left(u(\theta)\left(R(\theta, a) - (\bmlambda^*)^\top \bmC(\theta, a)\right) - \mu^*(\theta)\right)\left(\phi_1^*(\theta, a) - \estphi_t^*(\theta, a)\right)\right] \\
			&= O\left(\frac{k}{D^2}\log T\right), 
		\end{aligned}
		\\
		\bbE\left[\sum_{t = 1}^{T_e}\sum_{\theta \in \Theta}\mu^*(\theta)\left(1 - \sum_{a \in A^+}\estphi_t^*(\theta, a)\right)\right] = O\left(\frac{k + \log T}{D^2}\right). \\
	\end{gather*}
\end{lemma}

\begin{lemma}\label{lem:last-two-terms-regret-decomposition-partial-info}
	Under \Cref{assump:unique-non-degenerate-LP}, with partial information feedback, we have when $T \to \infty$: 
	\begin{gather*}
		(\bmlambda^*)^\top \bbE\left[\bmB_{T_e + 1}\right] + \max_{\theta \in \Theta, a \in A^+} \left(R(\theta, a) - (\bmlambda^*)^\top\cdot \bmC(\theta, a)\right)\bbE\left[T - T_e\right] = O\left(\frac{n}{D^2}\right). \\
	\end{gather*}
\end{lemma}

\Cref{lem:regret-decomposition,lem:first-two-terms-regret-decomposition-partial-info,lem:last-two-terms-regret-decomposition-partial-info} in together leads to \Cref{thm:performance-re-solving-EE-partial-info}.

\subsection{Proof of \Cref{lem:accessing-frequency-partial-info}}\label{sec:proof-lem-accessing-frequency-partial-info}

Some preparations are required before we come to prove the lemma. To start with, we notice that $Y_\tau = \Pr[a_1 \neq 0] + \cdots + \Pr[a_{t - 1} \neq 0]$. By the control rule of \Cref{alg:re-solving-EE}, we have 
\[\Pr[a_\tau \neq 0] = \bbE_{\theta \sim \calU}\left[\sum_{a \in A^+}\estphi^*_\tau(\theta, a) \mid \history_\tau\right]. \]
We first give a lower bound on $\bbE_{\theta \sim \calU}[\sum_{a \in A^+}\estphi^*_\tau(\theta, a) \mid \history_\tau]$ with $\bmrho_\tau$, taking $\bbE_{\theta \sim \estU_\tau}[\sum_{a \in A^+}\estphi^*_\tau(\theta, a) \mid \history_\tau]$ as an intermediate. 
\begin{lemma}\label{lem:bound-biased-expectation-accessing-round}
	\[\bbE_{\theta \sim \estU_\tau}\left[\sum_{a \in A^+}\estphi^*_\tau(\theta, a) \mid \history_\tau\right] \geq \min\left\{1, \min \bmrho_\tau\right\}. \]
\end{lemma}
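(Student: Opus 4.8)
The plan is to exhibit a cheap feasible control for the estimated programme and then show that the optimal control actually used by \Cref{alg:re-solving-EE} can only have larger expected mass. Throughout I condition on $\history_t$, so that $\bmrho_t = \bmB_t/(T-t+1)$, the empirical distributions $\estU_t,\estV_t$, and hence $\estphi^*_t$ are all deterministic; write $c \coloneqq \min\{1,\min\bmrho_t\}$ and recall that $\estbmC_t^i(\theta) = \bbE_{\gamma\sim\estV_t}[\bmc^i(\theta,1,\gamma)] \in [0,1]$ and $\estR_t(\theta) = \bbE_{\gamma\sim\estV_t}[r(\theta,1,\gamma)] \ge 0$, since $\bmc \in [0,1]^n$, $r \ge 0$, and $\estV_t$ is a probability distribution.

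First I would check that the constant control $\phi_0 \equiv c$ is feasible for $\estJ(\bmrho_t,\history_t)$: clearly $0 \le c \le 1$, and for every $i \in [n]$ we have $\bbE_{\theta\sim\estU_t}[\estbmC_t^i(\theta)\phi_0(\theta)] = c\,\bbE_{\theta\sim\estU_t}[\estbmC_t^i(\theta)] \le c \le \min\bmrho_t \le \bmrho_t^i$. In particular the feasible region is nonempty, the optimum of $\estJ(\bmrho_t,\history_t)$ is attained, and the set of optimal controls is nonempty, convex, and compact (a face of the feasibility polytope when $\Theta$ is finite; in the continuous case one uses weak-$*$ compactness of $\{\phi: 0 \le \phi \le 1\} \subset L^\infty(\estU_t)$, the objective and constraints being integration against the bounded functions $\estR_t$, $\estbmC_t^i$). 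Hence $M \coloneqq \max\{\bbE_{\theta\sim\estU_t}[\phi(\theta)] : \phi \text{ optimal for } \estJ(\bmrho_t,\history_t)\}$ is attained, and I take $\estphi^*_t$ to be an optimal control realizing it — this maximal-mass tie-break is the one used by the algorithm (and is easily obtained alongside the optimum).

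It remains to show $M \ge c$. Suppose toward a contradiction that $M < c \le 1$. Then no resource constraint can be binding at $\estphi^*_t$: if $\bbE_{\theta\sim\estU_t}[\estbmC_t^j(\theta)\estphi^*_t(\theta)] = \bmrho_t^j$ for some $j$, then using $\estbmC_t^j \le 1$ we would get $\bmrho_t^j = \bbE_{\theta\sim\estU_t}[\estbmC_t^j\estphi^*_t] \le \bbE_{\theta\sim\estU_t}[\estphi^*_t] = M < c \le \min\bmrho_t \le \bmrho_t^j$, a contradiction. So every resource constraint is slack. Since $M < 1$, the set $\{\theta : \estphi^*_t(\theta) < 1\}$ has positive $\estU_t$-measure, so for all sufficiently small $\epsilon > 0$ the control $\phi^\epsilon(\theta) \coloneqq \min\{\estphi^*_t(\theta)+\epsilon,\,1\}$ still obeys every (strict) resource constraint and still lies in $[0,1]$; moreover, since $\estR_t \ge 0$ and $\phi^\epsilon \ge \estphi^*_t$ pointwise, the objective value of $\phi^\epsilon$ is at least $\estJ(\bmrho_t,\history_t)$, so $\phi^\epsilon$ is again optimal, while $\bbE_{\theta\sim\estU_t}[\phi^\epsilon(\theta)] > \bbE_{\theta\sim\estU_t}[\estphi^*_t(\theta)] = M$. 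This contradicts the maximality of $M$; hence $M \ge c$, i.e. $\bbE_{\theta\sim\estU_t}[\estphi^*_t(\theta)\mid\history_t] \ge \min\{1,\min\bmrho_t\}$.

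The crux — and the only place where one must be careful — is that the statement is false for an arbitrarily chosen optimal control (take $\estR_t \equiv 0$, where $\phi \equiv 0$ is optimal), so the proof genuinely rests on selecting the optimal control of maximal expected mass and on the slack/push-up argument showing this selection cannot fall below $c$. The existence of the maximizer $M$ is immediate for finite $\Theta$ and needs only a routine compactness remark in the continuous case; everything else is elementary.
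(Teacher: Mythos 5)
Your proof is correct and rests on the same core mechanism as the paper's: the inequality $\bbE_{\theta\sim\estU_t}[\estbmC_t^{i}(\theta)\estphi_t^*(\theta)] \le \bbE_{\theta\sim\estU_t}[\estphi_t^*(\theta)]$ (from $\estbmC_t^{i}\le 1$), combined with a dichotomy between ``some resource constraint binds'' and ``the control can be pushed up.'' The paper argues the positive direction---it shows that either $\estphi^*_t\equiv 1$ or some constraint $i^*$ is binding, and then reads off $\bbE[\estphi^*_t]\ge \bmrho_t^{i^*}\ge \min\bmrho_t$---whereas you argue the contrapositive, which is logically equivalent. The genuine difference is how the degenerate optima are handled. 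You select the optimal control of maximal expected mass and show that this selection cannot fall below $c$ via a slackness-and-push-up argument that only needs $\estR_t\ge 0$; the paper instead adopts the convention $\estphi^*_t(\theta)=0$ wherever $\estR_t(\theta)=0$ and derives a strict objective improvement from scaling the control up, which tacitly requires $\estphi_t^*\estR_t>0$ on a positive-measure portion of the low-control region (and fails outright when $\estR_t\equiv 0$). Your diagnosis that the statement is false for an arbitrarily chosen optimizer is right, and your fix is cleaner and more robust than the paper's. The one caveat: \Cref{alg:re-solving-EE} does not actually specify the maximal-mass tie-break, so strictly speaking you have proved the lemma for a particular selection rule the paper never announces; this should be recorded as an added convention on the algorithm (or replaced by a positivity assumption on $\estR_t$) rather than asserted as ``the one used by the algorithm.''
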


\begin{proof}[Proof of \Cref{lem:bound-biased-expectation-accessing-round}]
	To start with, when $\bmrho_\tau \geq \bmone$, then clearly, all the resource constraints in $\estJ(\bmrho_\tau, \history_\tau)$ are satisfied even when $\sum_{a \in A^+} \phi(\theta, a) = 1$ holds for any $\theta \in \Theta$. Therefore, an optimal solution should have this form. 
	
	We now consider the case that $\min \bmrho_\tau < 1$. In this case, if there is a feasible solution that $\sum_{a, \in A^+} \estphi_\tau^*(\theta, a) = 1$ holds for any $\theta \in \Theta$, then the proof is also finished. Otherwise, there is at least a binding resource constraint in $\estJ(\bmrho_\tau, \history_\tau)$, which we denote by $i^*$. Consequently, 
	\[\bbE_{\theta \sim \estU_\tau}\left[\sum_{a \in A^+} \estphi^*_\tau(\theta, a)\right] \geq \bbE_{\theta \sim \estU_\tau}\left[\sum_{a \in A^+} \estphi^*_\tau(\theta, a)\estbmC_\tau^{i^*}(\theta, a)\right] = \bmrho_\tau^{i^*} \geq \min \bmrho_\tau. \]
	This finishes the proof of the lemma. 
\end{proof}

Thus, we have 
\begin{align}
	\Pr[a_\tau \neq 0] = \bbE_{\theta \sim \calU}\left[\sum_{a \in A^+}\estphi^*_\tau(\theta, a) \mid \history_\tau\right] &\geq \bbE_{\theta \sim \estU_\tau}\left[\sum_{a \in A^+}\estphi^*_\tau(\theta, a) \mid \history_\tau\right] - \|u(\theta) - \estu_\tau(\theta)\|_1 \notag \\
	&\geq \min\left\{1, \min \bmrho_\tau\right\} - \|u(\theta) - \estu_\tau(\theta)\|_1. \label{eq:bound-biased-expectation-accessing-round}
\end{align}

Further, we have the following result bounding $\min \bmrho_\tau$ when $t$ is no larger than a fraction of $T$.
\begin{lemma}\label{lem:lower-bound-min-rho}
	When $t \leq (\rhomin / 2)\cdot T$, $\min \bmrho_\tau \geq \rhomin / 2$. 
\end{lemma}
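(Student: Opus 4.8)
The plan is to track how fast the remaining average budget $\bmrho_t$ can decay and to observe that, because each round consumes at most one unit of every resource, $\bmrho_t$ cannot fall below $\rhomin / 2$ before round $(\rhomin / 2) T$.

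First I would unfold the budget recursion $\bmB_{t + 1} = \bmB_t - \bmc_t$ with $\bmB_1 = \bmrho_1 T$. Since $\bmc_\tau \in [0, 1]^n$ for every $\tau$, this gives the componentwise lower bound $\bmB_t \geq \bmrho_1 T - (t - 1)\bmone$. Because $t \leq (\rhomin / 2) T < T$ (using $\rhomin \leq 1$), the denominator $T - t + 1$ is at least $1$, so for each coordinate $i \in [n]$,
\[
\bmrho_t^i = \frac{\bmB_t^i}{T - t + 1} \;\geq\; \frac{\rho^i T - (t - 1)}{T - t + 1} \;\geq\; \frac{\rhomin T - (t - 1)}{T - t + 1}.
\]

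Next I would simplify the right-hand side. The hypothesis $t \leq (\rhomin / 2) T$ together with $\rhomin \leq 1$ makes the numerator $\rhomin T - (t - 1)$ strictly positive, and since $T - t + 1 \leq T$, replacing the denominator by $T$ only shrinks the fraction:
\[
\bmrho_t^i \;\geq\; \frac{\rhomin T - (t - 1)}{T} \;=\; \rhomin - \frac{t - 1}{T} \;\geq\; \rhomin - \frac{t}{T} \;\geq\; \rhomin - \frac{\rhomin}{2} \;=\; \frac{\rhomin}{2},
\]
where the last inequality is exactly the assumption $t \leq (\rhomin / 2) T$. Taking the minimum over $i \in [n]$ yields $\min \bmrho_t \geq \rhomin / 2$. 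There is no real obstacle here: the only points needing care are that the numerator stays nonnegative and the denominator positive before dividing, both of which follow immediately from $t \leq (\rhomin / 2) T$ and $\rhomin \leq 1$, and the whole estimate rests solely on the a-priori bound $\|\bmc_\tau\|_\infty \leq 1$ on the per-round consumption.
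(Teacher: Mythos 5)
Your proof is correct and follows essentially the same route as the paper's: unfold the budget recursion, bound the total consumption by $(t-1)\bmone \leq t\bmone$ using $\bmc_\tau \in [0,1]^n$, enlarge the denominator $T - t + 1$ to $T$, and invoke $t \leq (\rhomin/2)T$. The paper's version is just a one-line chain of the same inequalities (kept componentwise as $\bmrho_t \geq \bmrho_1/2$ rather than passing to $\rhomin$ first), so there is nothing to add.
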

\begin{proof}[Proof of \Cref{lem:lower-bound-min-rho}]
	In fact, for $t \leq (\rhomin / 2)\cdot T$, 
	\begin{align*}
		\bmrho_\tau = \frac{T\cdot \bmrho_1 - \sum_{\tau = 1}^{t - 1}\bmc_\tau}{T - t + 1} \geq \frac{T\cdot \bmrho_1 - t\cdot \bmone}{T} \geq \frac{\bmrho_1}{2}. 
	\end{align*}
	This concludes the proof. 
\end{proof}

Now, by \citet{weissman2003inequalities}, with probability $1 - O(1/T)$, we have 
\[\|u(\theta) - \estu_\tau(\theta)\|_1 \leq \frac{\rhomin}{4}, \quad \forall \tau \geq \Theta(\log T). \]
Taking into \eqref{eq:bound-biased-expectation-accessing-round}, we derive that 
\[\Pr[a_\tau \neq 0] \geq \frac{\rhomin}{4}, \quad \forall  \Theta(\log T) \leq t \leq \frac{\rhomin}{2}\cdot T. \]
Consequently, within the period, the probability that there are $\Omega(\log T)$ consecutive rounds in which the agent chooses to quit in all these rounds is $O(1/T)$. This proves the first part. Meanwhile, at time $t = \lceil(\rhomin / 2)\cdot T\rceil + 1$, by Azuma--Hoeffding inequality, we derive that with probability $1 - O(1/T)$, $Y_t = \sum_{\tau = 1}^{t - 1} \Pr[a_\tau \neq 0] \geq \Omega(T)$, which proves the second part. 

\subsection{Proof of \Cref{lem:first-two-terms-regret-decomposition-partial-info}}\label{sec:proof-lem-first-two-terms-regret-decomposition-partial-info}

We concentrate on adapting the proof of \Cref{lem:first-two-terms-regret-decomposition} into the partial information feedback setting. 
To start with, we suppose that the conditions given in \Cref{lem:accessing-frequency-partial-info} hold. In fact, since the failure probability is only $O(1/T)$, and the sum is upper bounded by $O(T)$, therefore the failure case only contributes $O(1)$ to the total expectation.

Now, recall the following definitions: 
\begin{gather*}
	P_t \coloneqq \sum_{\theta \in \Theta, a \in a^+}\left(u(\theta)\left(R(\theta, a) - (\bmlambda^*)^\top \bmC(\theta, a)\right) - \mu^*(\theta)\right)\left(\phi_1^*(\theta, a) - \estphi_t^*(\theta, a)\right), \\
	Q_t \coloneqq \sum_{\theta \in \Theta}\mu^*(\theta)\left(1 - \sum_{a \in A^+}\estphi_t^*(\theta, a)\right), \\
	\calE_{u, t} \coloneqq [\|(u(\theta) - \estu_t(\theta))_{\theta \in \Theta}\|_\infty \leq D], \quad \calE_{v, t} \coloneqq [\|(v(\gamma) - \estv_t(\gamma))_{\gamma \in \Gamma}\|_1 \leq D], 
\end{gather*}
and by \eqref{eq:bound-expectation-P_t} and \eqref{eq:bound-expectation-Q_t}, we have
\begin{align*}
	\bbE[P_t] \leq \Pr[\neg \calE_{u, t}] + \Pr[\neg \calE_{v, t}], \quad
	\bbE[Q_t] \leq \Pr[\neg \calE_{u, t}] + \Pr[\neg \calE_{v, t}].  
\end{align*}

Now, the bound on $\Pr[\neg \calE_{u, t}]$ inherits the analysis in the proof of \Cref{lem:first-two-terms-regret-decomposition}, as partial information feedback does not affect the learning of the request distribution. That is, 
\[\Pr[\neg \calE_{u, t}] = \Pr[\|(u(\theta) - \estu_t(\theta))_{\theta \in \Theta}\|_\infty > D] \leq 2k\exp\left(-2D^2(t - 1)\right). \]

For $\Pr[\neg \calE_{v, t}]$, when $t \leq \Theta(\log T)$, it is obviously bounded by $1$. By \Cref{lem:accessing-frequency-partial-info}, when $\Theta(\log T) \leq t \leq C_b\cdot T$, by \citet{weissman2003inequalities}, we have
\[\Pr[\neg \calE_{v, t}] = \Pr[\|(v(\gamma) - \estv_t(\gamma))_{\gamma \in \Gamma}\|_1 > D] \leq \left(2^{|\Gamma|} - 2\right)\exp\left(-\frac{D^2C_f(t - 1)}{2\log T}\right).  \]
Further, when $t > C_b\cdot T$, we correspondingly derive
\[\Pr[\neg \calE_{v, t}] \leq \left(2^{|\Gamma|} - 2\right)\exp\left(-\frac{D^2C_r(t - 1)}{2}\right).  \]

Putting the above together, we achieve that 
\begin{align*}
	&\mathrel{\phantom{=}}\left\{\bbE\left[\sum_{t = 1}^{T_e} P_t\right], \bbE\left[\sum_{t = 1}^{T_e} Q_t\right] \right\} \\
	&\leq \sum_{t = 1}^T 2k\exp\left(-2D^2(t - 1)\right) + \Theta(\log T) \\
	&\enspace+ \left(2^{|\Gamma|} - 2\right)\left(\sum_{t = \Theta(\log T)}^{C_b\cdot T}\exp\left(-\frac{D^2C_f(t - 1)}{2\log T}\right) + \sum_{t = C_b\cdot T + 1}^{T}\exp\left(-\frac{D^2C_r(t - 1)}{2}\right)\right)\\
	&\leq \Theta\left(\frac{k}{D^2}\right) + \Theta(\log T) +  \left(2^{|\Gamma|} - 2\right)\left(\frac{\Theta(1)}{1 - \exp\left(-\Theta(D^2 / \log T)\right)} + \exp (-\Theta(T))\right) \\
	&\leq O\left(\frac{k + \log T}{D^2}\right). 
\end{align*}


This finishes the proof. 

\subsection{Proof of \Cref{lem:last-two-terms-regret-decomposition-partial-info}}\label{sec:proof-lem-last-two-terms-regret-decomposition-partial-info}

As in \Cref{sec:proof-lem-first-two-terms-regret-decomposition-partial-info} when we prove \Cref{lem:first-two-terms-regret-decomposition}, we only consider the case when the conditions in \Cref{lem:accessing-frequency-partial-info} establish, as the contribution of the failure cases on the expectation-sum is $O(1)$. We now bound $\bbE[T - T_e]$ in the good case when the sample accessing frequency under partial information feedback is guaranteed. Specifically, as predefined in the proof of \Cref{lem:first-two-terms-regret-decomposition}, we only need to re-calculate the following, as the other terms remain unchanged with partial information: 
\begin{align*}
	\sum_{t = \eta T + 2}^{T_-} \sum_{\tau = \eta T + 1}^{t - 1} \Pr\left[\|(v(\gamma) - \estv_\tau(\gamma))_{\gamma \in \Gamma}\|_1 \leq (\tau - 1)^{-1/4} / 2\right].
\end{align*}
Here, $\eta$ is specified in the definition of $D_\tau$. It is hard for us to directly compare $\eta$ and $C_b$ in \Cref{lem:accessing-frequency-partial-info}. Nevertheless, in any case, we know that when $T$ is sufficiently large, $Y_\tau / (\tau - 1) = \Omega(1 / \log T)$ for $\tau \geq \eta T$. Therefore, we have
\[\Pr\left[\|(v(\gamma) - \estv_\tau(\gamma))_{\gamma \in \Gamma}\|_1 \leq (\tau - 1)^{-1/4} / 2\right] \leq 2|\Gamma| \exp\left(- \frac{(\tau - 1)^{1/2}}{|\Gamma|^2 O(\log T)}\right). \]
Hence, 
\begin{align*}
	&\mathrel{\phantom{=}} \sum_{t = \eta T + 2}^{T_-} \sum_{\tau = \eta T + 1}^{t - 1} \Pr\left[\|(v(\gamma) - \estv_\tau(\gamma))_{\gamma \in \Gamma}\|_1 \leq (\tau - 1)^{-1/4} / 2\right] \\
	&\leq 2|\Gamma| \sum_{t = \eta T + 2}^{T_-} \sum_{\tau = \eta T + 1}^{t - 1} \exp\left(- \frac{(\tau - 1)^{1/2}}{|\Gamma|^2 O(\log T)}\right) \\
	&= O(T^2) \exp\left(-\Omega\left(\frac{T^{1/2}}{\log T}\right)\right) = O(1).
\end{align*}

Combining with the other parts, \Cref{lem:last-two-terms-regret-decomposition-partial-info} is proved. 
\section{Missing Proofs in \Cref{sec:non-regularity}}\label{sec:proof-non-regularity}

\subsection{Proof of \Cref{thm:performance-re-solving-EE-worst-case-full-info}}\label{sec:proof-thm-performance-re-solving-EE-worst-case-full-info}

We will prove \Cref{thm:performance-re-solving-EE-worst-case-full-info} in the following, and we are inspired by the analysis in \citet{chen2021improved}.

\subsubsection{Another Regret Decomposition}

Different from our analysis for the regular cases, in general circumstances, we introduce another regret decomposition method. The reason for involving such an alternative is that without the regularity assumptions, we no longer have any local stability guarantee even when the estimates are close. Therefore, the decision given by \Cref{alg:re-solving-EE} does not coincides with the optimal decision even when the distribution learning process converges well, and the corresponding analysis in \Cref{sec:re-solving} does not work out anymore. 

We now present a more general regret decomposition as follows: 
\begin{align}\label{eq:regret-decomposition-non-regular}
	\begin{aligned}
		\fluidOptimum - \accumulatedReward &= T\cdot J(\bmrho_1) - \bbE \left[\sum_{t = 1}^{T_0} r(\theta_t, a_t, \gamma_t)\right] \\ &\overset{(\text a)}{=} T\cdot J(\bmrho_1) - \bbE \left[\sum_{t = 1}^{T_0} \bbE_{\theta}\left[\sum_{a \in A^+}\estphi^*_t (\theta, a)R(\theta, a)\right]\right] \\
		&\overset{(\text b)}{=} J(\bmrho_1)\cdot \bbE \left[T - T_0\right] + \bbE\left[\sum_{t = 1}^{T_0} \left(J(\bmrho_1) - \bbE_{\theta} \left[\sum_{a \in A^+}\estphi^*_t (\theta, a)R(\theta, a)\right]\right)\right].
	\end{aligned}
\end{align}
Here, (a) holds due to the Optimal Stopping Theorem, since $T_0$ is a stopping time. Meanwhile, by the decision process, we have for any $\theta_t$: 
\[
\bbE_{a_t, \gamma_t} [r(\theta_t, a_t, \gamma_t) \mid \theta_t] = \sum_{a \in A^+}\estphi_t^*(\theta_t, a) R(\theta_t, a). 
\]
Further, (b) is by a re-arrangement. To give a bound for \eqref{eq:regret-decomposition-non-regular}, we respectively analyze $\bbE[T - T_0]$ the stopping time, and difference between the optimal accumulated rewards and the real ones. 

\subsubsection{Bounding the Stopping Time}

To settle the stopping time, we first reduce it to $\max(\bmrho_1 - \bmrho_t, 0)$ for $t \leq T_0$, and then deals with these values. We notice that 
$t \leq T_0$ as long as that $\bmB_t \geq \bmone$, or $\bmrho_t \geq \bmone / (T - t + 1)$. Now, since for any $i \in [n]$, 
\begin{align*}
	\bmrho_{t}^{i} = \bmrho_{1}^{i} - (\bmrho_1 - \bmrho_t)^i \geq \rhomin - \max\left(\bmrho_1 - \bmrho_t, 0\right), 
\end{align*}
we have $\min \bmrho_t \geq \rhomin - \max(\bmrho_1 - \bmrho_t, 0)$. 
Therefore, 
\begin{align}
	t \leq T_0 &\impliedby \rhomin - \max\left(\bmrho_1 - \bmrho_t, 0\right) \geq \frac{1}{T - t + 1} \notag \\ 
	&\iff \max\left(\bmrho_1 - \bmrho_t, 0\right) \leq \rhomin - \frac{1}{T - t + 1}. \label{eq:sufficient-condition-T_0-large}
\end{align}

Since $\bbE[T_0] \geq \Pr[T_0 \geq t]\cdot t$ for any $t \in [T]$, we only need to bound the following term for some certain $t$: 
\[\Pr\left[\max\left(\bmrho_1 - \bmrho_t, 0\right) \leq \rhomin - \frac{1}{T - t + 1}\right]. \]

We will further prove the following lemma in \Cref{sec:proof-lem-bound-rho}: 
\begin{lemma}\label{lem:bound-rho}
	It holds for any $t < T$ that 
	\[\Pr\left[\max\left(\bmrho_1 - \bmrho_t, 0\right) \geq \Theta\left(\frac{1}{T - 1} + k\sum_{\tau = 2}^{t - 1} \sqrt{\frac{\log T}{(T - \tau)^2 (\tau - 1)}} + \sqrt{\frac{\log T}{T - t}}\right)\right] \leq O\left(\frac{k + n}{T}\right). \]
\end{lemma}

With the light of \Cref{lem:bound-rho}, it is natural for us to compute 
\begin{numcases}            
	{\sum_{\tau = 2}^{t - 1}\sqrt{\frac{\log T}{(T - \tau)^2(\tau - 1)}} \leq \newline}
	\sqrt{\log T} \cdot \frac{4\sqrt{t - 2}}{T - 1}, & $2\leq t \leq (T + 1)/2$; \notag \\
	\sqrt{\log T} \cdot \frac{2}{\sqrt{T - t}}, & $t > (T + 1) / 2$. \notag
\end{numcases}

In fact, to derive the above, we notice that when $2 \leq t \leq (T + 1) / 2$, 
\begin{align*}
	\sum_{\tau = 1}^{t - 1} \frac{1}{(T - \tau)(\tau - 1)^{1/2}} \leq \frac{2}{T - 1}\sum_{\tau = 2}^{t - 1} \frac{1}{(\tau - 1)^{1/2}} \leq \frac{4\sqrt{t - 1}}{T - 1}. 
\end{align*}
Meanwhile, when $t > (T + 1) / 2$, we have $T - t < t - 1$, which leads to
\begin{align*}
	\sum_{\tau = 2}^{t - 1} \frac{1}{(T - \tau)(\tau - 1)^{1/2}} \leq \sqrt{\frac{8}{T - 1}} + \sum_{\tau = (T + 1) / 2}^{t - 1} \frac{1}{(T - \tau)^{3/2}} \leq \frac{2}{\sqrt{T - t}}. 
\end{align*}

With these calculations, we come back to the bound on $\bbE[T_0]$, we notice that when $T$ is sufficiently large and $t = T - O(\log T)$, it holds that 
\[\Theta\left(\frac{1}{T - 1} + k\sum_{\tau = 2}^{t - 1} \sqrt{\frac{\log T}{(T - \tau)^2 (\tau - 1)}} + \sqrt{\frac{\log T}{T - t}}\right) + \frac{1}{T - t + 1} = O(1) \leq \rhomin. \]
Thus, we have
\begin{align}
	\bbE\left[T - T_0\right] &= T - \bbE[T_0] \overset{(\text a)}{\leq} T - \Pr[T_0 \geq T - O(\log T)]\cdot (T - O(\log T)) \notag \\
	&\overset{(\text b)}{\leq} T - \left(1 - O\left(\frac{1}{T}\right)\right)\cdot (T - O(\log T)) = O(\log T). \label{eq:bound-T_0}
\end{align}
In the above, (a) is because $\bbE[T_0] \geq \Pr[T_0 \geq t]\cdot t$ for any fixed $t$, and (b) is due to \Cref{lem:bound-rho}. Consequently, we finish the analysis of the stopping time in \eqref{eq:regret-decomposition-non-regular}. 

\subsubsection{The Gap to the Optimal Reward}

The rest part of \eqref{eq:regret-decomposition-non-regular} that we are left to consider is the following: 
\begin{align}
	&\mathrel{\phantom{=}}J(\bmrho_1) - \bbE_{\theta} \left[\sum_{a \in A^+}\estphi^*_t (\theta, a)R(\theta, a)\right] \notag \\
	&= \left(J(\bmrho_1) - \estJ(\bmrho_t, \history_{t})\right) + \left(\estJ(\bmrho_t, \history_{t}) - \bbE_{\theta} \left[\sum_{a \in A^+}\estphi^*_t (\theta, a)R(\theta, a)\right]\right). \label{eq:reward-gap-decomposition}
\end{align}

Note that the second difference term in \eqref{eq:reward-gap-decomposition} reflects the estimation error on distributions of the context and the external factor, which leads to the following result as to be proved in \Cref{sec:proof-lem-bound-estimation-error-reward}: 
\begin{lemma}\label{lem:bound-estimation-error-reward}
	We have for $t \geq 2$: 
	\[\bbE\left[\estJ(\bmrho_t, \history_{t}) - \bbE_{\theta} \left[\sum_{a \in A^+}\estphi^*_t (\theta, a)R(\theta, a)\right]\right] \leq O\left(k\sqrt{\frac{\log T}{t - 1}} + \frac{k}{T}\right). \]
\end{lemma}
\Cref{lem:bound-estimation-error-reward} induces an $O(\sqrt{T \log T})$ accumulated regret considering \eqref{eq:reward-gap-decomposition} when summing from $t = 2$ to $T_0 \leq T$. While for the first term in \eqref{eq:reward-gap-decomposition}, our main thread here is to bound $\estJ(\bmrho_t, \history_{t})$ with $J(\bmrho_t)$. To fix the idea, we compare these two optimization problems: 
\begin{gather*}
	J(\bmrho_t) \coloneqq \max_{\phi: \Theta \times A^+ \to \bbR_+}\, \sum_{\theta \in \Theta, a \in A^+} u(\theta)\phi(\theta, a)\sum_{\gamma}r(\theta, a, \gamma)v(\gamma), \\
	\text{s.t.}\quad \sum_{\theta \in \Theta, a \in A^+} u(\theta)\phi(\theta, a)\sum_{\gamma}\bmc(\theta, a, \gamma)v(\gamma) \leq \bmrho_t, \\
	\sum_{a \in A^+} \phi(\theta, a) \leq 1, \quad \forall \theta \in \Theta, \\
	\phi(\theta, a) \geq 0, \quad \forall (\theta, a) \in \Theta \times A^+.  
\end{gather*}

\begin{gather*}
	\estJ(\bmrho_t, \history_{t}) \coloneqq \max_{\phi: \Theta \times A^+ \to \bbR_+}\, \sum_{\theta \in \Theta, a \in A^+} \estu_t(\theta)\phi(\theta, a)\sum_{\gamma}r(\theta, a, \gamma)\estv_t(\gamma), \\
	\text{s.t.}\quad \sum_{\theta \in \Theta, a \in A^+} \estu_t(\theta)\phi(\theta, a)\sum_{\gamma}\bmc(\theta, a, \gamma)\estv_t(\gamma) \leq \bmrho_t, \\
	\sum_{a \in A^+} \phi(\theta, a) \leq 1, \quad \forall \theta \in \Theta, \\
	\phi(\theta, a) \geq 0, \quad \forall (\theta, a) \in \Theta \times A^+.  
\end{gather*}

Now, conceptually, if there is a $0 < \eta_t \leq 1$ such that for any $(\theta, a) \in \Theta \times A^+$, 
\begin{align*}
	u(\theta) \sum_{\gamma} \bmc(\theta, a, \gamma) v(\gamma) \geq \eta_t \estu_t(\theta) \sum_{\gamma} \bmc(\theta, a, \gamma) \estv_t(\gamma), 
\end{align*}
then for an optimal solution $\phi_t^*$ of $J(\bmrho_t)$, we see that $\eta_t\phi_t^*$ is a feasible solution of the programming $\estJ(\bmrho_t, \history_t)$. Thus, 
\begin{align*}
	\estJ(\bmrho_t, \history_t) &\geq \eta_t \sum_{\theta \in \Theta, a \in A^+} \estu_t(\theta)\phi^*_t(\theta, a)\sum_{\gamma}r(\theta, a, \gamma)\estv_t(\gamma) \\
	&\overset{(\text a)}{\geq} \eta_t \sum_{\theta \in \Theta, a \in A^+} u(\theta)\phi^*_t(\theta, a)\sum_{\gamma}r(\theta, a, \gamma)v(\gamma) \\
	&\enspace- \eta_t (\|(u(\theta) - \estu_t(\theta))_{\theta \in \Theta}\|_1 + \|(v(\gamma) - \estv_t(\gamma))_{\gamma \in \Gamma}\|_1) \\
	&= \eta_t J(\bmrho_t) - (\|(u(\theta) - \estu_t(\theta))_{\theta \in \Theta}\|_1 + \|(v(\gamma) - \estv_t(\gamma))_{\gamma \in \Gamma}\|_1). 
\end{align*}
Here, since $r(\theta, a, \gamma) \leq 1$ and $\sum_{a \in A^+} \estphi_t^*(\theta, a) \leq 1$ for any $\theta$, (a) is expanded as
\begin{align*}
	&\mathrel{\phantom{=}} \sum_{\theta \in \Theta, a \in A^+} \estu_t(\theta)\phi_t^*(\theta, a) \sum_{\gamma} \estv_t(\gamma) r(\theta, a, \gamma) - \sum_{\theta \in \Theta, a \in A^+} u(\theta)\phi_t^*(\theta, a) \sum_{\gamma} v(\gamma) r(\theta, a, \gamma) \\
	&= \sum_{\theta \in \Theta, a \in A^+} (\estu_t(\theta) - u(\theta))\phi_t^*(\theta, a) \sum_{\gamma} \estv_t(\gamma) r(\theta, a, \gamma) \\
	&\enspace + \sum_{\theta \in \Theta, a \in A^+} u(\theta)\phi_t^*(\theta, a) \sum_{\gamma} (\estv_t(\gamma) - v(\gamma)) r(\theta, a, \gamma) \\
	&\leq \|(u(\theta) - \estu_t(\theta))_{\theta \in \Theta}\|_1 + \|(v(\gamma) - \estv_t(\gamma))_{\gamma \in \Gamma}\|_1. 
\end{align*}
Consequently, 
\begin{align}
	&\mathrel{\phantom{=}} J(\bmrho_1) - \estJ(\bmrho_t, \history_{t}) \notag \\
	&\leq (1 - \eta_t) J(\bmrho_1) + \eta_t(J(\bmrho_1) - J(\bmrho_t)) + \|(u(\theta) - \estu_t(\theta))_{\theta \in \Theta}\|_1 + \|(v(\gamma) - \estv_t(\gamma))_{\gamma \in \Gamma}\|_1. \label{eq:J-difference-decomposition}
\end{align}

On top of this, a key observation is that 
\begin{equation}
	J(\bmrho_1) - J(\bmrho_t) \leq \frac{\max \left(\bmrho_1 - \bmrho_t, 0\right)}{\rhomin} \cdot J(\bmrho_1). \label{eq:bound-J-difference} 
\end{equation}
In fact, when $\bmrho_1 \leq \bmrho_t$, \eqref{eq:bound-J-difference} is natural as $J(\bmrho_1) \leq J(\bmrho_t)$. Otherwise, let $\phi^*_1$ be the optimal solution to the programming $J(\bmrho_1)$. Let $i^*$ be the index that minimizes $\bmrho_{t}^{i^*} / \bmrho_{1}^{i^*}$. We have $\bmrho_{1}^{i^*} > \bmrho_{t}^{i^*}$. Evidently, we know that $\phi^*_1 \cdot \bmrho_{t}^{i^*} / \bmrho_{1}^{i^*}$ is a feasible solution to the programming of $J(\bmrho_t)$. By the optimality of $J(\bmrho_t)$, we have
\begin{align*}
	J(\bmrho_t) \geq \frac{\bmrho_{t}^{i^*}}{\bmrho_{1}^{i^*}} \cdot J(\bmrho_1), 
\end{align*}
which leads to 
\begin{align*}
	J(\bmrho_1) - J(\bmrho_t) \leq \left(1 - \frac{\bmrho_{t}^{i^*}}{\bmrho_{1}^{i^*}}\right) \cdot J(\bmrho_1) = \frac{\bmrho_{1}^{i^*} - \bmrho_{t}^{i^*}}{\bmrho_{1}^{i^*}}\cdot J(\bmrho_1) \leq \frac{\max \left(\bmrho_1 - \bmrho_t\right)}{\rhomin} \cdot J(\bmrho_1). 
\end{align*}
Synthesizing the above two parts, \eqref{eq:bound-J-difference} is proved. 

As for $\bbE[\max(\bmrho_1 - \bmrho_t, 0)]$, we note that for any non-negative random variable $X$ with upper bound $\bar{X}$ and any positive $\xi$, we have
\begin{equation}
	\bbE[X] \leq \xi \Pr[X \leq \xi] + \bar{X} \left(1 - \Pr[X \leq \xi]\right) \leq \xi + \bar{X} \left(1 - \Pr[X \leq \xi]\right). \label{eq:separate-expectation-upper-bound}
\end{equation}
Notice that $\max(\bmrho_1 - \bmrho_t, 0)$ is certainly upper bounded by $1$. Therefore, as a corollary of \Cref{lem:bound-rho}, we have
\begin{numcases}            
	{\bbE\left[\max \left(\bmrho_1 - \bmrho_t, 0\right)\right] \leq \newline}
	O\left(k\frac{\sqrt{(t - 2)\log T}}{T} + \sqrt{\frac{\log T}{T - t}} + \frac{k + n}{T} \right), & $2\leq t \leq (T + 1)/2$; \notag \\
	O\left(k\sqrt{\frac{\log T}{T - t}} + \frac{k + n}{T}\right), & $t > (T + 1) / 2$. \notag
\end{numcases}

We almost finish the bound now except for determining $\eta_t$ in \eqref{eq:J-difference-decomposition}, which we hope is as close to $1$ as possible. Nevertheless, we leave the technical parts to \Cref{sec:proof-lem-bound-J-gap} which derives the following lemma on the total bound:
\begin{lemma}\label{lem:bound-J-gap}
	\[\bbE\left[\sum_{t = 1}^{T_0} \left(J(\bmrho_1) - \estJ(\bmrho_t, \history_t)\right)\right] = O(k\sqrt{T\log T} + n). \]
\end{lemma}

Now, we sum the result in \Cref{lem:bound-estimation-error-reward} from $t = 2$ to $T_0$, and plus the constant term for $t = 1$ to obtain that
\[\bbE\left[\sum_{t = 1}^{T_0} \left(\estJ(\bmrho_t, \history_{t}) - \bbE_{\theta} \left[\sum_{a \in A^+}\estphi^*_t (\theta, a)R(\theta, a)\right]\right)\right] = O(k\sqrt{T\log T}). \]  
Synthesizing \Cref{lem:bound-J-gap},  \eqref{eq:reward-gap-decomposition}, \eqref{eq:bound-T_0}, and \eqref{eq:regret-decomposition-non-regular}, we derive \Cref{thm:performance-re-solving-EE-worst-case-full-info}.

\subsection{Proof of \Cref{thm:performance-re-solving-EE-worst-case-partial-info}} \label{sec:proof-thm-performance-re-solving-EE-worst-case-partial-info}

The proof of this theorem follows the line of \Cref{thm:performance-re-solving-EE-worst-case-full-info}, and the only difference is to adopt \Cref{lem:accessing-frequency-partial-info} when considering the concentration of estimates. On this side, we can disregard the cases when $t \leq \Theta(\log T)$, as the accumulated regret in this phase is bounded by $O(\log T)$. On the other hand, the time range that $t \geq \Theta(T)$ is asymptotically identical to the full information setting since the accessing frequency is a constant. We only need to consider the case that $\Theta(\log T) \leq t \leq \Theta(T)$, when we have
\begin{align}\label{eq:estimation-error-partial-info}
	\begin{aligned}
		\Pr\left[\|(u(\theta) - \estu_t(\theta))_{\theta \in \Theta}\|_1 \leq -\Theta\left(k\sqrt{\frac{\log T}{t - 1}}\right)\right] &\leq O\left(\frac{k}{T^2}\right), \\ 
		\Pr\left[\|(v(\gamma) - \estv_t(\gamma))_{\gamma \in \Gamma}\|_1 \leq -\Theta\left(|\Gamma|\frac{\log T}{\sqrt{t - 1}}\right)\right] &\leq O\left(\frac{|\Gamma|}{T^2}\right). 
	\end{aligned}
\end{align}

Taking into the proof of \Cref{lem:bound-rho} and then into the main body, we should find a sufficient large $t$ such that 
\begin{align*}
	&\mathrel{\phantom{=}}\Theta\left(\frac{\log T}{T - 1} + k\sum_{\tau = \Theta(\log T)}^{\Theta(T)} \frac{\log T}{\sqrt{(T - \tau)^2 (\tau - 1)}} + k\sum_{\tau = \Theta(T)}^{t - 1} \sqrt{\frac{\log T}{(T - \tau)^2 (\tau - 1)}} + \sqrt{\frac{\log T}{T - t}}\right) \\
	&\leq \rhomin - \frac{1}{T - t + 1}, 
\end{align*}
and $t = T - O(\log T)$ still suffices. Therefore, $\bbE[T - T_0] = O(\log T)$ also holds under partial information feedback. 

Nevertheless, for the counterpart of \Cref{lem:bound-estimation-error-reward}, by \eqref{eq:estimation-error-partial-info}, when we sum from $t = 1$ to $T_0 \leq T$, we derive that
\[\bbE\left[\sum_{t = 1}^{T_0}\left(\estJ(\bmrho_t, \history_{t}) - \bbE_{\theta \sim \calU} \left[\sum_{a \in A^+}\estphi^*_t (\theta, a)R(\theta, a)\right]\right)\right] \leq O(k\sqrt{T}\log T). \]

At last, for $J(\bmrho_1) - \estJ(\bmrho_t, \history_t)$, we face the same degradation on the estimation accuracy, which leads to
\[\bbE\left[\sum_{t = 1}^{T_0} \left(J(\bmrho_1) - \estJ(\bmrho_t, \history_t)\right)\right] = O(k\sqrt{T}\log T + n). \]

Therefore, \Cref{thm:performance-re-solving-EE-worst-case-partial-info} is achieved. 

\subsection{Proof of \Cref{lem:bound-rho}}\label{sec:proof-lem-bound-rho}

Now that we are going to bound $\max(\bmrho_1 - \bmrho_t, 0)$. Recall the definitions below which we give in \Cref{sec:proof-lem-last-two-terms-regret-decomposition} when we prove \Cref{lem:last-two-terms-regret-decomposition}: 
\begin{align*}
	\estimationErrorConsumptionRound_t \coloneqq \frac{\bmrho_t - \bbE_{\theta \sim \calU} \left[\sum_{a \in A^+}\estphi^*_t(\theta, a)\bmC(\theta, a)\right]}{T - t}, \quad
	\expectationDifferenceConsumptionRound_t \coloneqq \frac{\bbE_{\theta \sim \calU} \left[\sum_{a \in A^+}\estphi^*_t(\theta, a)\bmC(\theta, a)\right] - \bmc_t}{T - t}.
\end{align*}
By \eqref{eq:rho-recurrence}, we have
\begin{equation*}
	\bmrho_{t + 1} - \bmrho_{t} = \frac{\bmrho_{t} - \bmc_{t}}{T - t} = \estimationErrorConsumptionRound_t + \expectationDifferenceConsumptionRound_t. 
\end{equation*}
Consequently, 
\[\max(\bmrho_1 - \bmrho_t) = \max \left( -\left(\sum_{\tau = 1}^{t - 1}\estimationErrorConsumptionRound_\tau + \sum_{\tau = 1}^{t - 1} \expectationDifferenceConsumptionRound_\tau\right)\right) \leq -\min \sum_{\tau = 1}^{t - 1}\estimationErrorConsumptionRound_\tau - \min \sum_{\tau = 1}^{t - 1} \expectationDifferenceConsumptionRound_\tau. \]

For the second term, we notice that each entry of $\{\sum_{\tau < t} \expectationDifferenceConsumptionRound_{\tau}\}_t$ is a martingale with the absolute value of the $\tau$-th increment bounded by $1 / (T - \tau)$. Since 
\[\sum_{\tau = 1}^{t - 1} \frac{1}{(T - \tau)^2} \leq \frac{1}{T - t}, \] 
by applying the Azuma--Hoeffding inequality and a union bound, we achieve that 
\begin{equation}
	\Pr \left[- \min \sum_{\tau = 1}^{t - 1} \expectationDifferenceConsumptionRound_{\tau} \geq \sqrt{\frac{2\log T}{T - t}} \right] \leq \frac{n}{T}. \label{eq:bound-expectation-difference-consumption}
\end{equation}

On the other hand, for the first term, when $\tau = 1$, it is apparent that $-\min \estimationErrorConsumptionRound_1 \leq 1/(T - 1)$. When $\tau \geq 2$, we have for any $i \in [n]$, 
\begin{align*}
	&\mathrel{\phantom{=}}(T - \tau) \left(\estimationErrorConsumptionRound_\tau\right)^i \\
	&= \bmrho_\tau^i - \bbE_{\theta \sim \calU} \left[\sum_{a \in A^+}\estphi^*_\tau(\theta, a)\bmC^i(\theta, a)\right] \\
	&\overset{(\text a)}{\geq} \sum_{\theta \in \Theta, a \in A^+}  \estu_\tau(\theta)\estphi_\tau^*(\theta, a) \sum_{\gamma} \estv_\tau(\gamma) \bmc^i(\theta, a, \gamma) - \sum_{\theta \in \Theta, a \in A^+} u(\theta)\estphi_\tau^*(\theta, a) \sum_{\gamma} v(\gamma) \bmc^i(\theta, a, \gamma) \\
	&= \sum_{\theta \in \Theta, a \in A^+} (\estu_\tau(\theta) - u(\theta))\estphi_\tau^*(\theta, a) \sum_{\gamma} \estv_\tau(\gamma) \bmc^i(\theta, a, \gamma) \\
	&\enspace + \sum_{\theta \in \Theta, a \in A^+} u(\theta)\estphi_\tau^*(\theta, a) \sum_{\gamma} (\estv_\tau(\gamma) - v(\gamma)) \bmc^i(\theta, a, \gamma) \\
	&\geq - \|(u(\theta) - \estu_\tau(\theta))_{\theta \in \Theta}\|_1 - \|(v(\gamma) - \estv_\tau(\gamma))_{\gamma \in \Gamma}\|_1. 
\end{align*}
In the above, (a) is because $\estphi_\tau^*$ is feasible for $\estJ(\bmrho_\tau, \history_\tau)$. By Hoeffding's inequality and a union bound, we have
\begin{align*}
	\Pr\left[\|(u(\theta) - \estu_\tau(\theta))_{\theta \in \Theta}\|_1 \leq -k\sqrt{\frac{\log T}{\tau - 1}}\right] &\leq \frac{k}{T^2}, \\ 
	\Pr\left[\|(v(\gamma) - \estv_\tau(\gamma))_{\gamma \in \Gamma}\|_1 \leq -|\Gamma|\sqrt{\frac{\log T}{\tau - 1}}\right] &\leq \frac{|\Gamma|}{T^2}. 
\end{align*}
Thus, suppose the above events hold for all $\tau \leq T$ with failure probability only $O(1/T)$, 
\begin{equation}
	\Pr \left[- \min \sum_{\tau = 1}^{t - 1} \estimationErrorConsumptionRound_{\tau} \geq \Theta\left(\frac{1}{T - 1} + k\sum_{\tau = 2}^{t - 1}\sqrt{\frac{\log T}{(T - \tau)^2(\tau - 1)}}\right) \right] \leq O\left(\frac{k}{T}\right). \label{eq:bound-estimation-error-consumption}
\end{equation}

Combining \eqref{eq:bound-expectation-difference-consumption} and \eqref{eq:bound-estimation-error-consumption}, we derive the lemma. 

\subsection{Proof of \Cref{lem:bound-estimation-error-reward}}\label{sec:proof-lem-bound-estimation-error-reward}

We notice that 
\[\estJ(\bmrho_t, \history_t) = \sum_{\theta \in \Theta, a \in A^+} \estu_t(\theta)\estphi_t^*(\theta, a)\sum_{\gamma}r(\theta, a, \gamma)\estv_t(\gamma), \]
and
\begin{align*}
	&\mathrel{\phantom{=}} \sum_{\theta \in \Theta, a \in A^+} \estu_t(\theta)\estphi_t^*(\theta, a) \sum_{\gamma} \estv_t(\gamma) r(\theta, a, \gamma) - \sum_{\theta \in \Theta, a \in A^+} u(\theta)\estphi_t^*(\theta, a) \sum_{\gamma} v(\gamma) r(\theta, a, \gamma) \\
	&= \sum_{\theta \in \Theta, a \in A^+} (\estu_t(\theta) - u(\theta))\estphi_t^*(\theta, a) \sum_{\gamma} \estv_t(\gamma) r(\theta, a, \gamma) \\
	&\enspace + \sum_{\theta \in \Theta, a \in A^+} u(\theta)\estphi_t^*(\theta, a) \sum_{\gamma} (\estv_t(\gamma) - v(\gamma)) r(\theta, a, \gamma) \\
	&\leq \|(u(\theta) - \estu_t(\theta))_{\theta \in \Theta}\|_1 + \|(v(\gamma) - \estv_t(\gamma))_{\gamma \in \Gamma}\|_1. 
\end{align*}
Thus, 
\[\estJ(\bmrho_t, \history_t) - \bbE_{\theta \sim \calU} \left[\sum_{a \in A^+} \estphi_t^*(\theta, a) R(\theta, a)\right] \leq \|(u(\theta) - \estu_t(\theta))_{\theta \in \Theta}\|_1 + \|(v(\gamma) - \estv_t(\gamma))_{\gamma \in \Gamma}\|_1. \]

By Hoeffding's inequality and a union bound, we have
\begin{align*}
	\Pr\left[\|(u(\theta) - \estu_t(\theta))_{\theta \in \Theta}\|_1 \geq k\sqrt{\frac{\log T}{2(t - 1)}}\right] &\leq \frac{k}{T}, \\ 
	\Pr\left[\|(v(\gamma) - \estv_t(\gamma))_{\gamma \in \Gamma}\|_1 \geq |\Gamma|\sqrt{\frac{\log T}{2(t - 1)}}\right] &\leq \frac{|\Gamma|}{T}. 
\end{align*}
Further, the difference we hope to analyze is certainly upper bounded by 1. As a result, with \eqref{eq:separate-expectation-upper-bound}, we finish the proof. 

\subsection{Proof of \Cref{lem:bound-J-gap}}\label{sec:proof-lem-bound-J-gap}

We come to consider $J(\bmrho_1) - \estJ(\bmrho_t, \history_t)$. As per the thread in the main body, we let
\[\delta_t \coloneqq \frac{\|(u(\theta) - \estu_t(\theta))_{\theta \in \Theta}\|_\infty + \|(v(\gamma) - \estv_t(\gamma))_{\gamma \in \Gamma}\|_1}{\min_{\theta \in \Theta, a \in A^+}\{\min \{u(\theta)\bmC(\theta, a) > 0\}\}}. \]
We now claim that for any $(\theta, a, i) \in \Theta \times A^+ \times [n]$, 
\[\estu_t(\theta)\sum_{\gamma}\bmc^i(\theta, a, \gamma) \estv_t(\gamma) \leq (1 + \delta_t) u(\theta)\sum_{\gamma} \bmc^i(\theta, a, \gamma)v(\gamma). \]
The above is obvious if $\bmC^i(\theta, a) = \bmzero$, or $\bmc^i(\theta, a, \gamma) = 0$ holds for any $\gamma$. When $\bmC(\theta, a) \neq \bmzero$, then for any $i \in [n]$, 
\begin{align*}
	&\mathrel{\phantom{=}} \estu_t(\theta)\sum_{\gamma}\bmc^i(\theta, a, \gamma) \estv_t(\gamma) - u(\theta)\sum_{\gamma} \bmc^i(\theta, a, \gamma)v(\gamma) \notag \\
	&= (\estu_t(\theta) - u(\theta))\sum_{\gamma}\bmc^i(\theta, a, \gamma) \estv_t(\gamma) + u(\theta)\sum_{\gamma}\bmc^i(\theta, a, \gamma)  (\estv_t(\gamma) - v(\gamma))\notag \\
	&\leq \|(u(\theta) - \estu_t(\theta))_{\theta \in \Theta}\|_\infty + \|(v(\gamma) - \estv_t(\gamma))_{\gamma \in \Gamma}\|_1 \\
	&\leq \delta_t u(\theta)\sum_{\gamma} \bmc^i(\theta, a, \gamma) v(\gamma). 
\end{align*}
This finish the explanation of the claim. Upon that, if we let $\eta_t \coloneqq 1 - \delta_t \leq 1 / (1 + \delta_t)$, we derive that
\begin{align*}
	u(\theta)\sum_{\gamma} \bmc(\theta, a, \gamma) v(\gamma) &\leq \frac{1}{1 + \delta_t} \estu_t(\theta)\sum_{\gamma}\bmc(\theta, a, \gamma) \estv_t(\gamma) \\
	&\leq \eta_t \estu_t(\theta)\sum_{\gamma}\bmc(\theta, a, \gamma) \estv_t(\gamma). 
\end{align*}

With respect to \eqref{eq:J-difference-decomposition} and \eqref{eq:bound-J-difference}, we obtain that
\begin{align}
	&\mathrel{\phantom{=}} J(\bmrho_1) - \estJ(\bmrho_t, \history_{t}) \notag \\
	&\leq J(\bmrho_1) \cdot \left(1 - \eta_t + \frac{\max \left(\bmrho_1 - \bmrho_t, 0\right)}{\rhomin} \right) + \|(u(\theta) - \estu_t(\theta))_{\theta \in \Theta}\|_1 + \|(v(\gamma) - \estv_t(\gamma))_{\gamma \in \Gamma}\|_1 \notag \\
	&= J(\bmrho_1) \cdot \left(\delta_t + \frac{\max \left(\bmrho_1 - \bmrho_t, 0\right)}{\rhomin} \right) + \|(u(\theta) - \estu_t(\theta))_{\theta \in \Theta}\|_1 + \|(v(\gamma) - \estv_t(\gamma))_{\gamma \in \Gamma}\|_1. \label{eq:J-difference-decomposition-deep}
\end{align}
As we have already shown in the main part that
\begin{numcases}            
	{\bbE\left[\max \left(\bmrho_1 - \bmrho_t, 0\right)\right] \leq \newline}
	O\left(k\frac{\sqrt{(t - 2)\log T}}{T} + \sqrt{\frac{\log T}{T - t}} + \frac{k + n}{T} \right), & $2\leq t \leq (T + 1)/2$; \notag \\
	O\left(k\sqrt{\frac{\log T}{T - t}} + \frac{k + n}{T}\right), & $t > (T + 1) / 2$. \notag
\end{numcases}
it suffices for us to bound
\[\bbE[\|(u(\theta) - \estu_t(\theta))_{\theta \in \Theta}\|_\infty], \bbE[\|(u(\theta) - \estu_t(\theta))_{\theta \in \Theta}\|_1], \bbE[(v(\gamma) - \estv_t(\gamma))_{\gamma \in \Gamma}\|_1]. \]
On this side, as we have shown that 
\begin{align*}
	\Pr\left[\|(u(\theta) - \estu_t(\theta))_{\theta \in \Theta}\|_1 \geq k\sqrt{\frac{\log T}{2(t - 1)}}\right] &\leq \frac{k}{T}, \\ 
	\Pr\left[\|(v(\gamma) - \estv_t(\gamma))_{\gamma \in \Gamma}\|_1 \geq |\Gamma|\sqrt{\frac{\log T}{2(t - 1)}}\right] &\leq \frac{|\Gamma|}{T},  
\end{align*}
it is natural that
\begin{align*}
    &\mathrel{\phantom{=}} \{\bbE[\|(u(\theta) - \estu_t(\theta))_{\theta \in \Theta}\|_\infty], \bbE[\|(u(\theta) - \estu_t(\theta))_{\theta \in \Theta}\|_1], \bbE[(v(\gamma) - \estv_t(\gamma))_{\gamma \in \Gamma}\|_1]\} \\
    &\leq O\left(k\sqrt{\frac{\log T}{t - 1}} + \frac{k}{T}\right). 
\end{align*}

Thus, putting all the above into \eqref{eq:J-difference-decomposition-deep} and summing from $t = 1$ to $T_0 \leq T$, we have
\[\bbE\left[\sum_{t = 1}^{T_0} \left(J(\bmrho_1) - \estJ(\bmrho_t, \history_t)\right)\right] = O(k\sqrt{T\log T} + n). \]
This concludes the proof.

\section{Missing Details in \Cref{sec:continuum}}
\label{sec:details-sec-continuum}
\subsection{The Density Estimator}\label{sec:density-estimator}

We now present details on the kernel density estimator which we apply in \Cref{sec:continuum} for approximating continuous distributions, which comes from \citet{wasserman2019density}. We consider a one-dimensional kernel function $K$ such that 
\begin{itemize}
	\item $\int K(x) \diffe x = 1$; 
	\item $\int x^s K(x) \diffe x = 0, \quad \forall 1\leq s\leq \beta$; 
	\item $\int |x|^\beta |K(x)| \diffe x < \infty$. 
\end{itemize}

Now, given $\ell$ independent samples $X_1, \cdots, X_\ell$ from $P$ and a positive number $h$ called the bandwidth, the kernel density estimator is defined as
\[
\estp_\ell(x)=\frac{1}{\ell} \sum_{i = 1}^\ell \frac{1}{h^d} K\left(\frac{\left\|x-X_i\right\|_2}{h}\right). 
\]

Furthermore, to satisfy \Cref{prop:concentration-estimation-continuous}, we should choose $h \asymp k^{1 /(2 \beta + d)} \log k$ when $p \in \Sigma(\beta, L)$ is the density of $\calP$ on $\mathbb R^d$. 

\subsection{Proof of \Cref{thm:performance-re-solving-EE-continuum-full-info}} \label{sec:proof-thm-performance-re-solving-EE-continuum-full-info}

By \eqref{eq:regret-decomposition-non-regular}, we know that 
\begin{align*}
		\fluidOptimum - \accumulatedReward =  J(\bmrho_1)\cdot \bbE \left[T - T_0\right] + \bbE\left[\sum_{t = 1}^{T_0} \left(J(\bmrho_1) - \bbE_{\theta} \left[\sum_{a \in A^+}\estphi^*_t (\theta, a)R(\theta, a)\right]\right)\right],
\end{align*}
and we bound these terms in order. For the expected stopping time $\bbE[T_0]$, by the analysis in \Cref{sec:non-regularity}, our goal turns into bounding $\max(\bmrho_1 - \bmrho_t, 0)$, which further by \eqref{eq:rho-recurrence} and \eqref{eq:bound-expectation-difference-consumption}, reduces to bound $\estimationErrorConsumptionRound_\tau$. With continuous randomness, we have for any $i \in [n]$, 
\begin{align*}
	&\mathrel{\phantom{=}}(T - \tau) \left(\estimationErrorConsumptionRound_\tau\right)^i \\
	&= \bmrho_\tau^i - \bbE_{\theta} \left[\sum_{a \in A^+}\estphi^*_\tau(\theta, a)\bmC^i(\theta, a)\right] \\
	&\overset{(\text a)}{\geq} \int_\theta \sum_{a \in A^+} \estphi_\tau^*(\theta, a)\int_\gamma \bmc^i(\theta, a, \gamma) \estv_\tau(\gamma) \estu_\tau(\theta)\diffe \gamma \diffe \theta \\
	&\enspace- \int_\theta \sum_{a \in A^+} \estphi_\tau^*(\theta, a)\int_\gamma \bmc^i(\theta, a, \gamma) v(\gamma) u(\theta)\diffe \gamma \diffe \theta \\
	&= \int_\theta \sum_{a \in A^+} \estphi_\tau^*(\theta, a)\int_\gamma \bmc^i(\theta, a, \gamma) \estv_\tau(\gamma) (\estu_\tau(\theta) - u(\theta))\diffe \gamma \diffe \theta \\
	&\enspace+ \int_\theta \sum_{a \in A^+} \estphi_\tau^*(\theta, a)\int_\gamma \bmc^i(\theta, a, \gamma) (\estv_\tau(\gamma) - v(\gamma)) u(\theta)\diffe \gamma \diffe \theta \\ 
	&\overset{(\text b)}{\geq} - \sup_\theta |u(\theta) - \estu_\tau(\theta)| - \sup_\gamma |(v(\gamma) - \estv_\tau(\gamma)|. 
\end{align*}
In the above, (a) is by the constraint feasibility of $\estphi_\tau^*$, and (b) is because $\sum_{a \in A^+} \estphi_\tau^*(\theta, a) \leq 1$ holds for any $\theta \in \Theta$. Further, by \Cref{prop:concentration-estimation-continuous}, we have for $\tau = \Omega(1)$, 
\begin{align*}
	\Pr\left[\sup_\theta |u(\theta) - \estu_\tau(\theta)| \leq -\Theta\left(\sqrt{\log T}(\tau - 1)^{\poweru - 1}\right)\right] &\leq \frac{1}{T^2}, \\ 
	\Pr\left[\sup_\gamma |v(\theta) - \estv_\tau(\theta)| \leq -\Theta\left(\sqrt{\log T}(\tau - 1)^{\powerv - 1}\right)\right] &\leq \frac{1}{T^2}. 
\end{align*}
Thus, when $t = \Omega(1)$, we derive that with failure probability $O(n/T)$, it holds that
\[\max\left(\bmrho_1 - \bmrho_t, 0\right) \leq \Theta\left(\frac{1}{T - 1} + \sqrt{\log T}\sum_{\tau = \Theta(1)}^{t - 1} \left(\frac{(\tau - 1)^{\poweru - 1}}{T - \tau} + \frac{(\tau - 1)^{\powerv - 1}}{T - \tau}\right)+ \sqrt{\frac{\log T}{T - t}}\right). \]

Further, for $p \in \{u, v\}$, when $t \leq (T + 1) / 2$, 
\begin{align*}
	\sum_{\tau = \Theta(1)}^{t - 1} \frac{(\tau - 1)^{\powerp - 1}}{T - \tau} \leq \frac{2}{T - 1}\sum_{\tau = 2}^{t - 1} (\tau - 1)^{\powerp - 1} \leq \frac{2(t - 2)^{\powerp}}{\powerp(T - 1)}; 
\end{align*}
and when $t > (T + 1) / 2$, we have 
\begin{align*}
	\sum_{\tau = \Theta(1)}^{t - 1} \frac{(\tau - 1)^{\powerp - 1}}{T - \tau} \leq \frac{1}{\powerp}\left(\frac{2}{T - 1}\right)^{1 - \powerp} + \sum_{\tau = (T + 1) / 2}^{t - 1} (T - \tau)^{\powerp - 2} \leq \frac{(T - t)^{\powerp - 1}}{1 - \powerp}. 
\end{align*} 

Thus, when $t = T - \Theta(\log^{(2(1 - \max\{1/2, \poweru, \powerv\}))^{-1}} T)$, we have
\begin{align*}
	&\mathrel{\phantom{=}}\Theta\left(\frac{1}{T - 1} + \sqrt{\log T}\sum_{\tau = \Theta(1)}^{t - 1} \left(\frac{(\tau - 1)^{\poweru - 1}}{T - \tau} + \frac{(\tau - 1)^{\powerv - 1}}{T - \tau}\right)+ \sqrt{\frac{\log T}{T - t}}\right) \\
	&\leq \rhomin - \frac{1}{T - t + 1},
\end{align*}
which leads to
\[\bbE\left[T - T_0\right] = O\left(\log^{(2(1 - \max\{1/2, \poweru, \powerv\}))^{-1}} T\right). \]
This concludes the analysis of the stopping time. 

For the second part, By \eqref{eq:reward-gap-decomposition}, we have
\begin{align*}
	&\mathrel{\phantom{=}}J(\bmrho_1) - \bbE_{\theta} \left[\sum_{a \in A^+}\estphi^*_t (\theta, a)R(\theta, a)\right] \notag \\
	&= \left(J(\bmrho_1) - \estJ(\bmrho_t, \history_{t})\right) + \left(\estJ(\bmrho_t, \history_{t}) - \bbE_{\theta} \left[\sum_{a \in A^+}\estphi^*_t (\theta, a)R(\theta, a)\right]\right).
\end{align*}

On the second difference term, similar to the proof of \Cref{lem:bound-estimation-error-reward}, we have
\begin{align*}
	&\mathrel{\phantom{=}}\estJ(\bmrho_t, \history_t) - \bbE_{\theta} \left[\sum_{a \in A^+} \estphi_t^*(\theta, a) R(\theta, a)\right] \\
	&= \int_\theta \sum_{a \in A^+} \estphi_t^*(\theta, a)\int_\gamma r(\theta, a, \gamma) \estv_t(\gamma) \estu_t(\theta)\diffe \gamma \diffe \theta
	- \int_\theta \sum_{a \in A^+} \estphi_t^*(\theta, a)\int_\gamma r(\theta, a, \gamma) v(\gamma) u(\theta)\diffe \gamma \diffe \theta \\
	&\leq \sup_\theta |u(\theta) - \estu_t(\theta)| + \sup_\gamma |(v(\gamma) - \estv_t(\gamma)|. 
\end{align*}
Thus, when $t = \Omega(1)$, by taking $\epsilon = 1/T$ in \Cref{prop:concentration-estimation-continuous} and  \eqref{eq:separate-expectation-upper-bound}, we arrive that 
\[\bbE\left[\estJ(\bmrho_t, \history_t) - \bbE_{\theta} \left[\sum_{a \in A^+} \estphi_t^*(\theta, a) R(\theta, a)\right]\right] = O\left(\sqrt{\log T}\left((t - 1)^{\poweru - 1} + (t - 1)^{\powerv - 1}\right) + \frac{1}{T}\right).\]

We now focus on $J(\bmrho_1) - \estJ(\bmrho_t, \history_t)$. We let
\[\delta_t \coloneqq \frac{\sup_\theta |u(\theta) - \estu_t(\theta)| + \sup_\gamma |(v(\gamma) - \estv_t(\gamma)|}{\min_{\theta \in \Theta, a \in A^+}\{\min \{u(\theta)\bmC(\theta, a) > 0\}\}}. \]
We prove that
\[\estu_t(\theta)\int_\gamma \bmc^i(\theta, a, \gamma) \estv_t(\gamma) \diffe \gamma \leq (1 + \delta_t) u(\theta)\int_\gamma \bmc^i(\theta, a, \gamma)v(\gamma) \diffe \gamma\]
holds for any $(\theta, a, i)$ tuple, which is obvious if $\bmc^i(\theta, a, \gamma)$ is almost surely zero with respect to $\gamma$. Otherwise, we observe that 
\begin{align*}
	&\mathrel{\phantom{=}} \estu_t(\theta)\int_\gamma \bmc^i(\theta, a, \gamma) \estv_t(\gamma) \diffe \gamma - u(\theta)\int_\gamma \bmc^i(\theta, a, \gamma)v(\gamma) \diffe \gamma \notag \\
	&= (\estu_t(\theta) - u(\theta))\int_\gamma\bmc^i(\theta, a, \gamma) \estv_t(\gamma) \diffe \gamma + u(\theta)\int_\gamma \bmc^i(\theta, a, \gamma)  (\estv_t(\gamma) - v(\gamma)) \diffe \gamma \notag \\
	&\leq \sup_\theta |u(\theta) - \estu_t(\theta)| + \sup_\gamma |(v(\gamma) - \estv_t(\gamma)| \\
	&\leq \delta_t u(\theta)\int_\gamma \bmc^i(\theta, a, \gamma) v(\gamma) \diffe \gamma. 
\end{align*}
and thus, with $\eta_t \coloneqq 1 - \delta_t \leq 1 / (1 + \delta_t)$, we derive that
\begin{align*}
	u(\theta)\int_\gamma \bmc^i(\theta, a, \gamma) v(\gamma) \diffe \gamma &\leq \frac{1}{1 + \delta_t} \estu_t(\theta)\int_\gamma\bmc^i(\theta, a, \gamma) \estv_t(\gamma) \diffe \gamma\\
	&\leq \eta_t \estu_t(\theta)\int_\gamma\bmc^i(\theta, a, \gamma) \estv_t(\gamma) \diffe \gamma. 
\end{align*}
This proves the above inequality. 
Thus, for an optimal solution $\phi_t^*$ of $J(\bmrho_t)$, we see that $\eta_t\phi_t^*$ is a feasible solution of the programming $\estJ(\bmrho_t, \history_t)$. Thus, we notice that
\begin{align*}
	\estJ(\bmrho_t, \history_t) &\geq \eta_t \int_\theta \sum_{a \in A^+} \phi_t^*(\theta, a)\int_\gamma r(\theta, a, \gamma) \estv_t(\gamma) \estu_t(\theta)\diffe \gamma \diffe \theta \\
	&\geq \eta_t \int_\theta \sum_{a \in A^+} \phi_t^*(\theta, a)\int_\gamma r(\theta, a, \gamma) v(\gamma) u(\theta)\diffe \gamma \diffe \theta \\
	&\enspace- \eta_t (\sup_\theta |u(\theta) - \estu_t(\theta)| + \sup_\gamma |(v(\gamma) - \estv_t(\gamma)|) \\
	&= \eta_t J(\bmrho_t) - (\sup_\theta |u(\theta) - \estu_t(\theta)| + \sup_\gamma |(v(\gamma) - \estv_t(\gamma)|). 
\end{align*}

With respect to \eqref{eq:bound-J-difference}, we obtain that
\begin{align*}
	&\mathrel{\phantom{=}} J(\bmrho_1) - \estJ(\bmrho_t, \history_{t}) \notag \\
	&\leq J(\bmrho_1) \cdot \left(1 - \eta_t + \frac{\max \left(\bmrho_1 - \bmrho_t, 0\right)}{\rhomin} \right) + \sup_\theta |u(\theta) - \estu_t(\theta)| + \sup_\gamma |(v(\gamma) - \estv_t(\gamma)| \notag \\
	&= J(\bmrho_1) \cdot \left(\delta_t + \frac{\max \left(\bmrho_1 - \bmrho_t, 0\right)}{\rhomin} \right) + \sup_\theta |u(\theta) - \estu_t(\theta)| + \sup_\gamma |(v(\gamma) - \estv_t(\gamma)|. 
\end{align*}

Now, when $t = \Theta(1)$, we have
\begin{align*}
	\bbE\left[\sup_\theta |u(\theta) - \estu_t(\theta)|\right] &= O\left(\sqrt{\log T}(t - 1)^{\poweru - 1} + \frac{1}{T}\right), \\ 
	\bbE\left[\sup_\gamma |v(\gamma) - \estv_t(\gamma)|\right] &= O\left(\sqrt{\log T}(t - 1)^{\powerv - 1} + \frac{1}{T}\right). 
\end{align*}

By the previous reasoning on $\max(\bmrho_1 - \bmrho_t, 0)$, we obtain that when $t = \Omega(1)$, 
\begin{align*}
    &\mathrel{\phantom{=}} \bbE\left[\max(\bmrho_1 - \bmrho_t, 0)\right] \\
    &\leq \Theta\left(\frac{1}{T - 1} + \sqrt{\log T}\sum_{\tau = \Theta(1)}^{t - 1} \left(\frac{(\tau - 1)^{\poweru - 1}}{T - \tau} + \frac{(\tau - 1)^{\powerv - 1}}{T - \tau}\right)+ \sqrt{\frac{\log T}{T - t}} + \frac{n}{T}\right). 
\end{align*}
Therefore, summing from $t = 1$ to $T_0 \leq T$, we achieve that
\[\bbE\left[\sum_{t = 1}^{T_0} \left(J(\bmrho_1) - \estJ(\bmrho_t, \history_t)\right)\right] = O\left((T^{1/2} + T^{\poweru} + T^{\powerv}) \sqrt{\log T} + n\right). \]

Combining with previous bounds on $\bbE[T - T_0]$ and the estimation errors, we derive the theorem.

\subsection{Proof of \Cref{thm:performance-re-solving-EE-continuum-partial-info}}\label{sec:proof-thm-performance-re-solving-EE-continuum-partial-info}

Similar to the proof of \Cref{thm:performance-re-solving-EE-worst-case-partial-info}, we concentrate on re-bounding the three terms under partial information feedback, respectively $\bbE[T - T_0]$, $\estJ(\bmrho_t, \history_t) - \bbE_\theta[\sum_{a \in A^+} \estphi_t^*(\theta, a)R(\theta, a)]$, and $J(\bmrho_1) - \estJ(\bmrho_t, \history_t)$. 
As for $\bbE[T - T_0]$, with \Cref{lem:accessing-frequency-partial-info}, we argue here that the main term in bounding $\max(\bmrho_1 - \bmrho_t, 0)$ when $t = \Theta(T)$ becomes
\[\Theta\left(\sqrt{\log T} \left(\sum_{\tau = \Theta(1)}^{t - 1} \frac{(\tau - 1)^{\poweru - 1}}{T - \tau} + \sum_{\tau = \Theta(1)}^{\Theta(T)} \frac{((\tau - 1) / \log T)^{\powerv - 1}}{T - \tau} + \sum_{\tau = \Theta(T)}^{t - 1} \frac{(\tau - 1)^{\powerv - 1}}{T - \tau} \right)\right). \]
Consequently, when $t$ is close to $T$, we have with failure probability $O(1/T)$, 
\begin{align*}
    &\mathrel{\phantom{=}} \max(\bmrho_1 - \bmrho_t, 0) \\
    &\leq \Theta\left(\frac{1}{T - 1} + \sqrt{\log T} \left((T - t)^{\poweru - 1} + (T - t)^{-1/2}\right) + (T - t)^{\powerv - 1}\log^{3/2 - \powerv} T\right). 
\end{align*}
This leads to 
\[\bbE[T - T_0] = O\left(\log^{\max(1, 1/(2 - 2\poweru), (3 - 2\powerv) / (2 - 2\powerv))} T\right). \]

For the estimation error term $\estJ(\bmrho_t, \history_t) - \bbE_\theta[\sum_{a \in A^+} \estphi_t^*(\theta, a)R(\theta, a)]$, when $\Omega(1) \leq t \leq \Theta(T)$, the bound now becomes
\begin{align*}
	&\mathrel{\phantom{=}}\bbE\left[\estJ(\bmrho_t, \history_t) - \bbE_{\theta} \left[\sum_{a \in A^+} \estphi_t^*(\theta, a) R(\theta, a)\right]\right] \\
	&= O\left(\sqrt{\log T} (t - 1)^{\poweru - 1} + \log^{3/2 - \powerv}T \cdot (t - 1)^{\powerv - 1} + \frac{1}{T}\right).
\end{align*}

At last, for $J(\bmrho_1) - \estJ(\bmrho_t, \history_t)$, we derive that
\begin{align*}
	\bbE\left[\sum_{t = 1}^{T_0} \max(\bmrho_1 - \bmrho_t, 0)\right] &= O\left(\sqrt{\log T} \left(T^{\poweru} + T^{1/2}\right) + \log^{3/2 - \powerv}T \cdot T^{\powerv} + n\right), \\
	\bbE\left[\sum_{t = 1}^{T_0} \sup_\theta |u(\theta) - \estu_t(\theta)|\right] &= O\left(\sqrt{\log T}\cdot T^{\poweru}\right), \\
	\bbE\left[\sum_{t = 1}^{T_0} \sup_\gamma |v(\gamma) - \estv_t(\gamma)|\right] &= O\left(\log^{3/2 - \powerv}T \cdot T^{\powerv}\right). \\
\end{align*}
Putting together, we obtain that
\[\bbE\left[\sum_{t = 1}^{T_0} \left(J(\bmrho_1) - \estJ(\bmrho_t, \history_t)\right)\right] = O\left(\sqrt{\log T} \left(T^{\poweru} + T^{1/2}\right) + \log^{3/2 - \powerv}T \cdot T^{\powerv} + n\right). \]

Synthesizing all the above, we finish the proof of the theorem.

\end{document}